\newtheorem{proposition}{Proposition}
\newtheorem{theorem}{Theorem}
\newtheorem{lemma}{Lemma}
\newcommand{\argmin}{\operatornamewithlimits{argmin}}
\theoremstyle{remark}
\newtheorem{example}[theorem]{Example}
\begin{document}

%

%

\twocolumn[

\aistatstitle{Inverse Reinforcement Learning with Sub-optimal Experts}

\aistatsauthor{ Riccardo Poiani \And Gabriele Curti \And  Alberto Maria Metelli \And Marcello Restelli}

\aistatsaddress{ Politecnico di Milano \And  Politecnico di Milano \And Politecnico di Milano \And Politecnico di Milano } ]

\begin{abstract}
Inverse Reinforcement Learning (IRL) techniques deal with the problem of deducing a reward function that explains the behavior of an expert agent who is assumed to act optimally in an underlying unknown task. In several problems of interest, however, it is possible to observe the behavior of multiple experts with different degree of optimality (e.g., racing drivers whose skills ranges from amateurs to professionals). For this reason, in this work, we extend the IRL formulation to problems where, in addition to demonstrations from the optimal agent, we can observe the behavior of multiple sub-optimal experts. Given this problem, we first study the theoretical properties of the class of reward functions that are compatible with a given set of experts, i.e., the \emph{feasible reward set}. Our results show that the presence of multiple sub-optimal experts can significantly shrink the set of compatible rewards. 
Furthermore, we study the statistical complexity of estimating the feasible reward set with a generative model. To this end, we analyze a uniform sampling algorithm that results in being minimax optimal whenever the sub-optimal experts' performance level is sufficiently close to the one of the optimal agent.
\end{abstract}

\section{INTRODUCTION}
\emph{Inverse Reinforcement Learning} \citep[IRL,][]{ng2000algorithms} deals with the problem of recovering a reward function that explains the behavior of an expert agent who is assumed to act optimally in an underlying unknown task. Over the years, the IRL problem has consistently captured the attention of the research community (see, for instance, \citet{arora2021survey} and \citet{adams2022survey} for in-depth surveys). Indeed, this general scenario, where the reward function needs to be learned, emerges in numerous real-world applications. A prime example of this arises from human-in-the-loop settings \citep{mosqueira2023human}, where the expert is a human solving a task, and an explicit specification of the human's goal in the form of a reward function is often unavailable. Notably, humans encounter difficulty in expressing their intentions in the form of an underlying reward signal, preferring instead to demonstrate what they perceive as the correct behavior. Once we retrieve a reward function, (i) we obtain explicit information for understanding the expert's choices, and, furthermore, (ii) we can utilize it to train reinforcement learning agents, even under shifts in the features of the underlying system.

Since the seminal work of \citet{ng2000algorithms}, IRL has emerged as a significantly complex task. Indeed, one of its primary challenges lies in the intrinsic \emph{ill-posed} nature of the problem, as multiple reward functions that are compatible with the expert's behavior exist. Recently, a promising avenue of research \citep{metelli2021provably,lindner2022active,metelli2023towards} has tackled this ambiguity issue from an intriguing perspective. Specifically, this strand of works focuses on estimating \emph{all} the reward functions that are compatible with the observed demonstration, thereby postponing the selection of the reward function and directing their focus solely on the expert's intentions.

Nevertheless, these approaches fall short in modeling more complex situations that arise in the real world. Indeed, in several problems of interest, it is possible to observe the behavior of multiple agents with different degrees of expertise. As an illustrative example, we can consider the human-in-the-loop settings mentioned above. Imagine, indeed, that we are interested in recovering reward functions that explain the intent behind racing drivers. In this scenario, racing car companies typically have access to a variety of drivers with diverse skills, including professionals, test drivers, and emerging talents from developmental programs. In this context, while the focus is typically on the reward function of professional drivers, we expect a proficient IRL method to effectively leverage demonstrations and information provided by drivers with lower expertise. Indeed, from an intuitive perspective, if we have information on the degree of expertise of other drivers, we can expect that, by exploiting their demonstrations, we can reduce the inherent ambiguity of IRL problems. For this reason, in this work, we extend the IRL formulation to settings where, in addition to demonstrations from an optimal agent, we can observe the behavior of multiple sub-optimal experts, of which we know an index of their sub-optimality. 


More specifically, we will be primarily focused in answering the following theoretical questions:
\begin{itemize}
\item[(Q1)] How does the presence of sub-optimal experts affects the class of reward functions that are compatible with the observed behavior? Can they limit the intrinsic ambiguity that affects IRL problems?
\item[(Q2)] What is the statistical complexity of estimating the set of reward functions that are compatible with a given set of experts? How does it compare against the one of single-experts IRL problems?
\end{itemize}

\paragraph{Contributions and Outline} After providing the necessary notation and background, we introduce the novel problem of Inverse Reinforcement Learning with multiple and sub-optimal experts (Section \ref{sec:prel}). We then proceed by studying the \emph{theoretical properties} of the class of reward functions that are compatible with a given set of experts under the assumption that an upper bound on the performance between a sub-optimal agent and the optimal expert is available to the designer of the IRL system (Section \ref{sec:reward-set}). More precisely, our findings indicate that having multiple sub-optimal experts can significantly shrink the set of compatible rewards, thereby \emph{limiting} the ambiguity issue that affects the IRL problem. Leveraging our previous results, we continue by studying the \emph{statistical complexity} of estimating the feasible reward set with a generative model (Section \ref{sec:learning}). To this end, after formally introducing a Probabilistic Approximately Correct \citep[PAC,][]{even2002pac} framework, we derive a novel lower bound on the number of samples that are required to obtain an accurate estimate of the feasible reward set. Then, we present a uniform sampling algorithm and analyze its theoretical guarantees. Our results show that (i) the IRL problem with sub-optimal experts is statistically harder than the single agent IRL setting, and (ii) that the uniform sampling algorithm is minimax optimal whenever the sub-optimal experts’ performance level is sufficiently close to the one of the optimal agent. Finally, we conclude with a discussion on existing works (Section \ref{sec:related-works}) and by highlighting potential avenues for future research (Section \ref{sec:conclusions}).

%

\section{PRELIMINARIES}\label{sec:prel}
In this section, we provide the notation and essential concepts employed throughout this document.

\paragraph{Notation}
Consider a finite set $\mathcal{X}$, we denote with $\Delta^{\mathcal{X}}$ the set of probability measures over $\mathcal{X}$. Let $\mathcal{Y}$ be a set, we denote with $\Delta^{\mathcal{X}}_{\mathcal{Y}}$ the set of functions $f: \mathcal{Y} \rightarrow \Delta^{\mathcal{X}}$. Given $f \in \mathbb{R}^n$, we denote with $\|f\|_\infty$ the infinite norm of $f$. Let $\mathcal{X}$ and $\mathcal{X}'$ be two non-empty subsets of a metric space $\left(\mathcal{Y}, d \right)$, we define the Hausdorff distance \citep{rockafellar2009variational} between $\mathcal{X}$ and $\mathcal{X}'$ as:
\begin{equation*}
\resizebox{\linewidth}{!}{$\displaystyle
H_d(\mathcal{X},\mathcal{X}') =\max\left\{ \sup_{x \in \mathcal{X}} \inf_{x' \in \mathcal{X}'} d(x, x'),  \sup_{x' \in \mathcal{X}'} \inf_{x \in \mathcal{X}} d(x, x')  \right\}.$
}%
\end{equation*}
Notice that the Hausdorff distance is directly dependent on the metric $d$. Finally, given an integer $x \in \mathbb{N}_{>0}$, we denote with $\mathbf{1}_{x}$ the $x$-dimensional vector given by $\left(1, \dots, 1\right)^\top$.

\paragraph{Markov Decision Processes}
A Markov Decision Process \emph{without a reward function} (MDP\textbackslash R) is defined as a tuple $\mathcal{M}=\left(\mathcal{S}, \mathcal{A}, p, \gamma \right)$, where $\mathcal{S}$ is the set of states, $\mathcal{A}$ is the set of actions, $p \in \Delta_{\mathcal{S} \times \mathcal{A}}^{\mathcal{S}}$ denotes the transition probability kernel, and $\gamma \in [0,1)$ is the discount factor. In this paper, we consider finite state and action spaces, namely $|\mathcal{S}| = S$ and $|\mathcal{A}| = A$. A Markov Decision Process \citep[MDP,][]{puterman2014markov} is obtained by combining an MDP\textbackslash R $\mathcal{M}$ with a reward function $r \in \mathbb{R}^{\mathcal{S} \times \mathcal{A}}$. Without loss of generality, we assume reward functions bounded in $[0,1]$. We denote with $\mathcal{M} \cup r$ the resulting MDP. The behavior of an agent is described by a policy $\pi \in \Delta_{\mathcal{S}}^{\mathcal{A}}$, that, for each state, prescribes a probability distribution over actions. 

\paragraph{Operators} Consider $f \in \mathbb{R}^{\mathcal{S}}$ and $g \in \mathbb{R}^{\mathcal{S} \times \mathcal{A}}$. We denote with $P$ and $\pi$ the operators that are induced by the transition model $p$ and the policy $\pi$ respectively. More specifically, $Pf(s,a) = \sum_{s' \in \mathcal{S}} p(s'|s,a)f(s')$, and $\pi g(s) = \sum_{a \in \mathcal{A}} \pi(a|s)g(s,a)$. Moreover, we introduce the operators $E$ and $\bar{B}^{\pi}$ defined in the following way: $Ef(s,a) = f(s)$ and $\left(\bar{B}^{\pi}g\right)(s,a) = \mathbbm{1}\left\{ \pi(a|s)=0 \right\} g(s,a)$. Finally, we define $d^{\pi} f$ as the expectation of $f$ under the discounted occupancy measure. More formally $d^\pi f = \left( I_{\mathcal{S}} - \gamma \pi P \right)^{-1} f = \sum_{t=0}^{+\infty} (\gamma \pi P)^t f$.

\paragraph{Value Functions and Optimality}
Given an MDP $\mathcal{M} \cup r$ and a policy $\pi$, the \emph{Q-function} $Q^{\pi}_{\mathcal{M} \cup r}(\cdot)$ represents the expected discounted sum of rewards collected in $\mathcal{M} \cup r$ starting from $(s,a)$ and following policy $\pi$. More formally:
\begin{align*}
Q^{\pi}_{\mathcal{M} \cup r}(s,a) = \mathbb{E}\left[ \sum_{t=0}^{+\infty} \gamma^t r(s_t,a_t) | s_0=s, a_0=a \right],
\end{align*}
where the expectation is taken w.r.t. the stochasticity of the policy and the environment, that is $s_{t+1} \sim p(\cdot|s_t,a_t)$ and $a_t \sim \pi(\cdot | s_t)$. Similarly, the \emph{V-function} $V^{\pi}_{\mathcal{M} \cup r}$ represents the expectation of the $Q$-function over the action space, namely $V^{\pi}_{\mathcal{M} \cup r} = \pi Q^{\pi}_{\mathcal{M} \cup r}$. The \emph{advantage function} $A^{\pi}_{\mathcal{M} \cup r} = Q^{\pi}_{\mathcal{M} \cup r} - EV^{\pi}_{\mathcal{M} \cup r}$ represents the immediate gain of taking a given action, rather than following policy $\pi$. A policy $\pi^*$ is optimal if it has non-positive advantage in each-state action pair; namely $A^{\pi^*}_{\mathcal{M} \cup r} \le 0$ holds element-wise.

\paragraph{Inverse Reinforcement Learning}
An Inverse Reinforcement Learning \citep[IRL,][]{ng2000algorithms} problem is defined as a tuple $\mathfrak{B} = \left(\mathcal{M}, \pi_E \right)$, where $\mathcal{M}$ is an MDP\textbackslash R and $\pi_E \in \Delta^{\mathcal{A}}_{\mathcal{S}}$ is an expert policy. Given a reward function $r \in \mathbb{R}^{\mathcal{S} \times \mathcal{A}}$, we say that $r$ is \emph{feasible} for $\mathfrak{B}$ if it is compatible with the behavior of the expert, namely $\pi_E$ is an optimal policy for the MDP $\mathcal{M} \cup r$. We denote with $\mathcal{R}_{\mathfrak{B}}$ the set of feasible reward functions, namely:
\begin{align}\label{eq:feasible-set-irl}
\mathcal{R}_{\mathfrak{B}} = \left\{ r \in [0,1]^{\mathcal{S} \times \mathcal{A}}: A^{\pi_E}_{\mathcal{M} \cup r} \le 0  \right\}.
\end{align}
The set $\mathcal{R}_{\mathfrak{B}}$ takes the name of \emph{feasible reward set} \citep{metelli2021provably,lindner2022active,metelli2023towards}. To characterize the set $\mathcal{R}_{\mathfrak{B}}$, \citet{metelli2021provably} have shown that a reward function $r$ belongs to $\mathcal{R}_{\mathcal{B}}$ if and only if there exists $\zeta \in \mathbb{R}^{\mathcal{S} \times \mathcal{A}}_{\ge 0}$ and $V \in \mathbb{R}^{\mathcal{S}}$ such that:
\begin{align}\label{eq:reward-char}
r = -\bar{B}^{\pi_E} \zeta + (E-\gamma P)V.
\end{align}
In other words, each reward function in $\mathcal{R}_{\mathfrak{B}}$, is expressed as a sum of two components. The first one, $-\bar{B}^{\pi_E} \zeta$, which is non-zero only when $\pi_{E}(a|s) = 0$, can be interpreted as the advantage function $A^{\pi_E}_{\mathcal{M} \cup r}$. The second one, $(E-\gamma P)V$, instead, can be interpreted as a reward-shaping via function $V$, which is widely recognized to maintain the optimality of the expert's policy \citep{ng2000algorithms}. Given this interpretation, it follows that $\|V\|_\infty \le (1-\gamma)^{-1}$ and  $\|\zeta\|_\infty \le (1-\gamma)^{-1}$.

\paragraph{IRL with Sub-optimal Experts}
In this work, we extend the IRL formulation to problems where, in addition to demonstrations from an optimal expert, we can observe the behaviors of multiple and sub-optimal agents. More precisely, we define an Inverse Reinforcement Learning problem with multiple and Sub-optimal Experts (IRL-SE) as a tuple $\bar{\mathfrak{B}} = \left( \mathcal{M}, \pi_{E_1}, \left( \pi_{E_i} \right)_{i=2}^{n+1}, \left( \xi_i \right)_{i=2}^{n+1} \right)$, where $\mathcal{M}$ is an MDP\textbackslash R, $\pi_{E_1}$ is the policy of an optimal agent, and $\left( \pi_{E_i} \right)_{i=2}^n$ are a collection of $n$ sub-optimal policies with known degree of sub-optimality $\xi_i \in \mathbb{R}_{> 0}$.\footnote{For the sake of exposition, we consider a single optimal agent. The extension to cases where multiple optimal policies are available is direct. Futher details on this point are provided in Appendix \ref{app:multiple-expert-setting}.}  A reward function $r \in \mathbb{R}^{\mathcal{S} \times \mathcal{A}}$, is feasible for $\mathfrak{B}$ if $\pi_{E_1}$ is an optimal policy for the MDP $\mathcal{M} \cup r$ and, furthermore, if:
\begin{align}\label{eq:ass}
    \left\|V_{\mathcal{M} \cup r}^{\pi_{E_1}} - V_{\mathcal{M} \cup r}^{\pi_{E_i}}\right\|_\infty \le \xi_i,
\end{align}
holds for all $i \in \left\{2, \dots, n+1\right\}$. In this sense, $\xi_i$ (i.e., the degree of sub-optimality of policy $\pi_{E_i}$) represents a known upper bound on the performance between the optimal expert and the $i$-th sub-optimal agent. We denote by $\mathcal{R}_{\bar{\mathfrak{B}}}$ the set of feasible rewards for $\bar{\mathfrak{B}}$. More formally, $r \in [0,1]^{\mathcal{S} \times \mathcal{A}}$ belongs to $\mathcal{R}_{\bar{\mathfrak{B}}}$ if (i) $A^{\pi_{E_1}}_{\mathcal{M} \cup r} \le 0$ and (ii) Equation \eqref{eq:ass} holds for all $i \in \left\{2, \dots, n+1 \right\}$. Notice that, whenever no sub-optimal expert is present, we directly recover the definition of the feasible set for single-agent IRL problems, i.e., $\mathcal{R}_{{\mathfrak{B}}}$ in Equation \eqref{eq:feasible-set-irl}.


\paragraph{Empirical Estimates} 
Let $\mathcal{D}_t$ be a dataset of transitions of $t$ tuples $\mathcal{D}_t = \left\{ \left(s_j, a_j, s'_j, ( a^{(i)}_j )_{i=1}^{n+1} \right)   \right\}_{j=1}^{t}$, where $s_j' \sim p(\cdot|s_j,a_j)$, and $a^{(i)}_j \sim \pi_{E_i}(\cdot | s_j)$. Given $\mathcal{D}_t$, it is possible to define the empirical transition model $\hat{p}$ and the empirical experts' policy $\hat{\pi}_{E_i}$ as follows:
\begin{equation}\label{eq:updates}
\begin{aligned}
&\hat{p}(s'|s,a) = \begin{cases}
							\frac{N_t(s,a,s')}{N_t(s,a)} & \text{if } N_t(s,a) > 0\\
							\frac{1}{S} & \text{otherwise}
				   \end{cases}, \\
& \hat{\pi}_{E_i}(a|s) = \begin{cases}
							\frac{N_t^{(i)}(s,a)}{N_t(s)} & \text{if } N_t(s) > 0\\
							\frac{1}{A} & \text{otherwise}
				   \end{cases},				   
\end{aligned}
\end{equation}
where $N_t(s,a,s')$ denotes the number of times in which $(s_j,a_j,s'_j)$ is equal to $(s,a,s')$, $N_t(s,a) = \sum_{s'} N_t(s,a,s')$, $N_t(s) = \sum_{a, s'} N_t(s,a,s')$, and, finally, $N_t^{(i)}(s,a)$ counts the number of times in which $(s_j, a^{(i)}_j)$ is equal to $(s,a)$.
Given these definitions, we denote with $\widehat{\bar{\mathfrak{B}}}_t$ the empirical IRL problem that is induced by $\hat{p}$ and $\left\{ \hat{\pi}_{E_i} \right\}_{i=1}^{n+1}$. We denote with ${\mathcal{R}}_{\widehat{\bar{\mathfrak{B}}}_t}$ its corresponding feasible reward region.

\section{SUB-OPTIMAL EXPERTS AND THE FEASIBLE REWARD SET}\label{sec:reward-set}
In this section, we lay down the foundations for the problem of Inverse Reinforcement Learning in the presence of multiple and sub-optimal experts. Specifically, given the formulation introduced in Section \ref{sec:prel}, we now delve into an in-depth examination of the theoretical properties of the feasible reward set $\mathcal{R}_{\bar{\mathfrak{B}}}$. We will tackle the problem from two different perspectives. First, we present an implicit formulation of $\mathcal{R}_{\bar{\mathfrak{B}}}$ that will allow us to characterize the properties of the feasible set by means of $Q$ and $V$ function (Section \ref{sec:implicit}). Then, we will present an explicit formulation that will provide us with a precise mathematical description of $\mathcal{R}_{\bar{\mathfrak{B}}}$ (Section \ref{sec:explicit}). As we shall see, these results indicate that the presence of sub-optimal experts can significantly shrink the feasible set of compatible rewards.

\subsection{Implicit Formulation of $\mathcal{R}_{\bar{\mathfrak{B}}}$}\label{sec:implicit}
As mentioned above, we begin by providing an implicit description of the feasible reward set $\mathcal{R}_{\bar{\mathfrak{B}}}$. To this end, we derive the following result (proof in Appendix \ref{app:proofs-sec-3}). 

\begin{restatable}{lemma}{lemmaimpplicit}\label{lemma:implicit-multiple-experts}
Let $\bar{\mathfrak{B}}$ be an IRL problem with sub-optimal experts. Let $r \in [0,1]^{\mathcal{S} \times \mathcal{A}}$. Then, $r \in \mathcal{R}_{\bar{\mathfrak{B}}}$ if and only if the following conditions are satisfied:
\begin{itemize}
    \item[(i)] $Q_{\mathcal{M} \cup r}^{\pi_{E_1}}(s,a) = V_{\mathcal{M} \cup r}^{\pi_{E_1}}(s)$ \quad \quad $\forall (s,a) : \pi_{E_1}(a|s) > 0$,
    \item[(ii)] $Q_{\mathcal{M} \cup r}^{\pi_{E_1}}(s,a) \le  V_{\mathcal{M} \cup r}^{\pi_{E_1}}(s)$ \quad \quad $\forall (s,a) : \pi_{E_1}(a|s) = 0$,
    \item[(iii)] $V_{\mathcal{M} \cup r}^{\pi_{E_1}} \le V_{\mathcal{M} \cup r}^{\pi_{E_i}} + \mathbf{1}_S \xi_i$ \quad  \quad $\textrm{ }  \forall i \in \left\{2, \dots, n+1\right\}$.
\end{itemize}
\end{restatable}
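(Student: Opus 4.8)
The plan is to treat the two defining requirements of $\mathcal{R}_{\bar{\mathfrak{B}}}$ separately: the optimality constraint $A^{\pi_{E_1}}_{\mathcal{M} \cup r} \le 0$ and the performance-gap constraints $\|V^{\pi_{E_1}}_{\mathcal{M} \cup r} - V^{\pi_{E_i}}_{\mathcal{M} \cup r}\|_\infty \le \xi_i$. The first will be shown equivalent to the pair of conditions (i)--(ii), and the second to condition (iii). As a preliminary reformulation, I would recall that by definition $A^{\pi_{E_1}}_{\mathcal{M} \cup r} = Q^{\pi_{E_1}}_{\mathcal{M} \cup r} - E V^{\pi_{E_1}}_{\mathcal{M} \cup r}$, so that $A^{\pi_{E_1}}_{\mathcal{M} \cup r}(s,a) = Q^{\pi_{E_1}}_{\mathcal{M} \cup r}(s,a) - V^{\pi_{E_1}}_{\mathcal{M} \cup r}(s)$, and hence the optimality constraint is exactly $Q^{\pi_{E_1}}_{\mathcal{M} \cup r}(s,a) \le V^{\pi_{E_1}}_{\mathcal{M} \cup r}(s)$ for every state--action pair.

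Next I would establish that conditions (i) and (ii) together are equivalent to this inequality. The direction from (i)--(ii) to $A^{\pi_{E_1}}_{\mathcal{M} \cup r} \le 0$ is immediate, since (i) gives equality on the support of $\pi_{E_1}$ and (ii) gives the inequality off the support, thus covering all pairs. For the converse, (ii) coincides with the optimality inequality restricted to $\pi_{E_1}(a|s)=0$, so only (i) needs work. Here the key step is to use $V^{\pi_{E_1}}_{\mathcal{M} \cup r}(s) = \sum_{a} \pi_{E_1}(a|s) Q^{\pi_{E_1}}_{\mathcal{M} \cup r}(s,a)$, which expresses $V^{\pi_{E_1}}_{\mathcal{M} \cup r}(s)$ as a convex combination of the $Q$-values over the support. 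Since each such $Q$-value is at most $V^{\pi_{E_1}}_{\mathcal{M} \cup r}(s)$ by the optimality inequality, a convex combination that equals this upper bound forces every positively weighted term to attain it, yielding $Q^{\pi_{E_1}}_{\mathcal{M} \cup r}(s,a) = V^{\pi_{E_1}}_{\mathcal{M} \cup r}(s)$ on the support, i.e. condition (i).

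Finally I would address the reduction of the two-sided gap constraint to the one-sided condition (iii). The crucial observation is that, once (i)--(ii) hold, $\pi_{E_1}$ is an optimal policy of $\mathcal{M} \cup r$, and therefore its value function dominates that of any other policy element-wise, in particular $V^{\pi_{E_1}}_{\mathcal{M} \cup r} \ge V^{\pi_{E_i}}_{\mathcal{M} \cup r}$ for each $i$. Consequently the difference $V^{\pi_{E_1}}_{\mathcal{M} \cup r} - V^{\pi_{E_i}}_{\mathcal{M} \cup r}$ is non-negative component-wise, so its infinity norm equals $\max_s \big( V^{\pi_{E_1}}_{\mathcal{M} \cup r}(s) - V^{\pi_{E_i}}_{\mathcal{M} \cup r}(s) \big)$, and the constraint $\|V^{\pi_{E_1}}_{\mathcal{M} \cup r} - V^{\pi_{E_i}}_{\mathcal{M} \cup r}\|_\infty \le \xi_i$ collapses to $V^{\pi_{E_1}}_{\mathcal{M} \cup r}(s) \le V^{\pi_{E_i}}_{\mathcal{M} \cup r}(s) + \xi_i$ for all $s$, which is precisely condition (iii).

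I expect the main obstacle to be the logical bookkeeping in this last step: the value-domination $V^{\pi_{E_1}}_{\mathcal{M} \cup r} \ge V^{\pi_{E_i}}_{\mathcal{M} \cup r}$ is what lets us discard the lower side of the absolute value, but it is itself a consequence of the optimality encoded by (i)--(ii). I would therefore prove the two implications of the lemma in a consistent order: when showing that membership in $\mathcal{R}_{\bar{\mathfrak{B}}}$ implies (i)--(iii), optimality is available by assumption; when showing the converse, I would first derive optimality from (i)--(ii) and only then invoke domination to recover the full two-sided bound from (iii). The convex-combination argument of the previous paragraph is routine but must be stated carefully to handle states whose support may contain several actions.
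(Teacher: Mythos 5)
Your proof is correct and takes essentially the same route as the paper: it splits membership in $\mathcal{R}_{\bar{\mathfrak{B}}}$ into the optimality constraint $A^{\pi_{E_1}}_{\mathcal{M}\cup r}\le 0$ (shown equivalent to (i)--(ii), which the paper simply delegates to a cited lemma of Metelli et al.\ and you re-derive via the convex-combination argument) and the performance-gap constraint (shown equivalent to (iii)). Your explicit use of the value domination $V^{\pi_{E_1}}_{\mathcal{M}\cup r}\ge V^{\pi_{E_i}}_{\mathcal{M}\cup r}$ to collapse the two-sided constraint $\|V^{\pi_{E_1}}_{\mathcal{M}\cup r}-V^{\pi_{E_i}}_{\mathcal{M}\cup r}\|_\infty\le\xi_i$ into the one-sided condition (iii) is in fact more careful than the paper's proof, which passes over this point silently.
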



Lemma \ref{lemma:implicit-multiple-experts} provides necessary and sufficient conditions for determining whether a reward function $r$ belongs to the feasible set $\mathcal{R}_{\bar{\mathfrak{B}}}$. More precisely, condition (i) and (ii) directly encodes the optimality of policy $\pi_{E_1}$ for $\mathcal{M} \cup r$, i.e., the advantage function $A^{\pi_{E_1}}_{\mathcal{M} \cup r}$ is non-positive in each state-action pair. Condition (iii), on the other hand, arises from the presence of sub-optimal experts, and it is directly related to Equation \eqref{eq:ass}.

At this point, by closely examining Lemma \ref{lemma:implicit-multiple-experts}, it is possible to gain insight into the limitations and advantages associated with the additional presence of multiple and sub-optimal experts. Consider, indeed, the following illustrative examples.

\begin{example}\label{exe:1}
Suppose that $\pi_{E_i} = \pi_{E_1}$ holds for all $i \in \left\{2,\dots, n+1 \right\}$. In this case, condition (iii) is clearly satisfied for any reward function $r$. It follows that the feasible reward set $\mathcal{R}_{\bar{\mathfrak{B}}}$ is purely determined by the requirement that the advantage function of $\pi_{E_1}$ is non-negative, and, as a consequence, the set $\mathcal{R}_{\bar{\mathfrak{B}}}$ coincides with the one of the single-expert IRL problem, namely $\mathcal{R}_{\bar{\mathfrak{B}}} = \mathcal{R}_{{\mathfrak{B}}}$. Analogously, if $\xi_i \ge (1-\gamma)^{-1}$ holds for all sub-optimal experts, condition (iii) is vacuous, and, similarly to the previous case, $\mathcal{R}_{\bar{\mathfrak{B}}}$ reduces to $\mathcal{R}_{{\mathfrak{B}}}$.
\end{example}

\begin{figure}[t]
\centering\includegraphics[width=6cm]{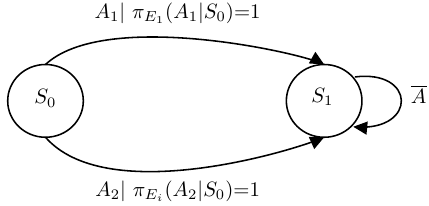} 
\vspace{.3in}
\caption{MDP example, with $2$ states and $2$ experts, that highlights the benefits of sub-optimal agents (Example \ref{exe:2}). In $S_{1}$ both $\pi_{E_1}$ and $\pi_{E_i}$ are identical, i.e., $\pi_{E_1}(\bar{A}|S_1)=\pi_{E_i}(\bar{A}|S_1)=1$.}
\label{fig:mdp}
\end{figure}

\begin{example}\label{exe:2}
Consider the MDP with $2$ states depicted in Figure~\ref{fig:mdp}, and suppose, for the sake of exposition, that only one additional sub-optimal expert is present. In this case, the optimal agent and the sub-optimal agent follows completely different policies in $S_0$. By developing the conditions in Lemma \ref{lemma:implicit-multiple-experts}, it is easy to see that, in addition to the constraint that $r(S_0, A_1) \ge r(S_0,A_2)$ (i.e., $\pi_{E_1}$ is an optimal policy), condition (iii) introduces a further relationship between $r(S_0, A_1)$ and $r(S_0,A_2)$, that is $r(S_0, A_1) - r(S_0,A_2) \le \xi_i$. In this sense, if $\xi_i$ is sufficiently small (i.e., $\xi_i < 1$ in this case), the presence of the sub-optimal agents can significantly reduce $\mathcal{R}_{\bar{\mathfrak{B}}}$ compared to $\mathcal{R}_{{\mathfrak{B}}}$.
\end{example}

Abstracting away from the previous examples, we can notice that whenever (a) the sub-optimal agents exhibit significant differences in behavior from the optimal expert and (b) their performance level is sufficiently close to being optimal, $\mathcal{R}_{\bar{\mathfrak{B}}}$ can notably shrink compared to $\mathcal{R}_{{\mathfrak{B}}}$. In the next section, through the explicit formulation of the feasible reward set, we will analyze this phenomenon quantitatively and in more detail.

\subsection{Explicit Formulation of $\mathcal{R}_{\bar{\mathfrak{B}}}$}\label{sec:explicit}

We now continue by providing an explicit formulation of the feasible set $\mathcal{R}_{\bar{\mathfrak{B}}}$. The following result (proof in Appendix \ref{app:proofs-sec-3}) summarizes our findings.

\begin{restatable}{theorem}{theoremexplicit}\label{theorem:multiple-expert-explicit}
Let $\bar{\mathfrak{B}}$ be an IRL problem with sub-optimal experts. Let $r \in [0,1]^{\mathcal{S} \times \mathcal{A}}$. Then, $r \in \mathcal{R}_{\bar{\mathfrak{B}}}$ if and only if there exists $\zeta \in \mathbb{R}^{\mathcal{S} \times \mathcal{A}}_{\ge 0}$ and $V \in \mathbb{R}^{\mathcal{S}}$ such that the following conditions are satisfied:
\begin{align}\label{eq:explicit-eq1}
    r = -\bar{B}^{\pi^{E_1}} \zeta  + (E-\gamma P)V,
\end{align}
and, for all $i \in \left\{ 2, \dots, n+1 \right\}$:
\begin{align}\label{eq:explicit-eq2}
    d^{\pi_{E_i}} \pi_{E_i}  \bar{B}^{\pi_{E_1}} \zeta  \le \mathbf{1}_{S} \xi_i.
\end{align}
\end{restatable}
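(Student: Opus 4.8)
The plan is to start from the implicit characterization of Lemma~\ref{lemma:implicit-multiple-experts} and translate each of its three conditions into the explicit language of Equations~\eqref{eq:explicit-eq1} and~\eqref{eq:explicit-eq2}. Conditions (i) and (ii) of the Lemma are precisely the statement that the advantage function $A^{\pi_{E_1}}_{\mathcal{M} \cup r}$ is non-positive, i.e., that $\pi_{E_1}$ is optimal for $\mathcal{M} \cup r$. This is exactly the single-expert feasibility condition, so by the characterization of \citet{metelli2021provably} recalled in Equation~\eqref{eq:reward-char}, conditions (i)--(ii) hold if and only if there exist $\zeta \in \mathbb{R}^{\mathcal{S} \times \mathcal{A}}_{\ge 0}$ and $V \in \mathbb{R}^{\mathcal{S}}$ with $r = -\bar{B}^{\pi_{E_1}} \zeta + (E - \gamma P) V$, which is Equation~\eqref{eq:explicit-eq1}. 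It thus remains to show that, under this representation, condition (iii) is equivalent to Equation~\eqref{eq:explicit-eq2} for every $i \in \{2, \dots, n+1\}$.

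To do so, first I would identify the $V$ appearing in the representation with $V^{\pi_{E_1}}_{\mathcal{M} \cup r}$. Applying the operator $\pi_{E_1}$ to Equation~\eqref{eq:explicit-eq1} and using $\pi_{E_1} \bar{B}^{\pi_{E_1}} \zeta = 0$ (the indicator $\mathbbm{1}\{\pi_{E_1}(a|s)=0\}$ is multiplied by the vanishing factor $\pi_{E_1}(a|s)$) together with $\pi_{E_1} E V = V$, I obtain $(I_{\mathcal{S}} - \gamma \pi_{E_1} P) V = \pi_{E_1} r$, whence $V = (I_{\mathcal{S}} - \gamma \pi_{E_1} P)^{-1} \pi_{E_1} r = V^{\pi_{E_1}}_{\mathcal{M} \cup r}$ by the Bellman equation. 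Next I would repeat the computation for a sub-optimal expert: applying $\pi_{E_i}$ to Equation~\eqref{eq:explicit-eq1} gives $\pi_{E_i} r = -\pi_{E_i} \bar{B}^{\pi_{E_1}} \zeta + (I_{\mathcal{S}} - \gamma \pi_{E_i} P) V$, where, crucially, the term $\pi_{E_i} \bar{B}^{\pi_{E_1}} \zeta$ does \emph{not} vanish in general, since $\pi_{E_i}$ may place mass on actions outside the support of $\pi_{E_1}$ (this is exactly the mechanism behind Example~\ref{exe:2}).

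Multiplying on the left by $d^{\pi_{E_i}} = (I_{\mathcal{S}} - \gamma \pi_{E_i} P)^{-1}$ and using $V^{\pi_{E_i}}_{\mathcal{M} \cup r} = d^{\pi_{E_i}} \pi_{E_i} r$ then yields $V^{\pi_{E_i}}_{\mathcal{M} \cup r} = V - d^{\pi_{E_i}} \pi_{E_i} \bar{B}^{\pi_{E_1}} \zeta$, and therefore $V^{\pi_{E_1}}_{\mathcal{M} \cup r} - V^{\pi_{E_i}}_{\mathcal{M} \cup r} = d^{\pi_{E_i}} \pi_{E_i} \bar{B}^{\pi_{E_1}} \zeta$. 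Substituting this identity into condition (iii), i.e.\ $V^{\pi_{E_1}}_{\mathcal{M} \cup r} - V^{\pi_{E_i}}_{\mathcal{M} \cup r} \le \mathbf{1}_S \xi_i$, gives precisely Equation~\eqref{eq:explicit-eq2}. Reading these equivalences in both directions establishes the claim.

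I expect the main obstacle to be bookkeeping rather than conceptual: carefully justifying the operator identities and, in particular, observing that $\bar{B}^{\pi_{E_1}} \zeta$ --- and hence both the advantage representation and the value gap --- depends only on the components of $\zeta$ supported where $\pi_{E_1}(a|s) = 0$, so that a single $\zeta$ can be used consistently in Equations~\eqref{eq:explicit-eq1} and~\eqref{eq:explicit-eq2} in both directions of the proof. The invertibility of $I_{\mathcal{S}} - \gamma \pi_{E_i} P$ needed for the resolvent manipulation is standard, since $\gamma < 1$ makes $\gamma \pi_{E_i} P$ a contraction and the Neumann series $d^{\pi_{E_i}} = \sum_{t=0}^{\infty}(\gamma \pi_{E_i} P)^t$ converges.
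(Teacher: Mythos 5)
Your proof is correct and is essentially the paper's own argument: both handle conditions (i)--(ii) of Lemma \ref{lemma:implicit-multiple-experts} via the single-expert characterization of \citet{metelli2021provably} (Equation \eqref{eq:reward-char}), and both turn condition (iii) into Equation \eqref{eq:explicit-eq2} by applying $\pi_{E_i}$ to Equation \eqref{eq:explicit-eq1} and exploiting the same telescoping identity $d^{\pi_{E_i}}\pi_{E_i}(E-\gamma P)V = V$. The only organizational difference is that you first isolate the exact value-gap identity $V^{\pi_{E_1}}_{\mathcal{M}\cup r} - V^{\pi_{E_i}}_{\mathcal{M}\cup r} = d^{\pi_{E_i}}\pi_{E_i}\bar{B}^{\pi_{E_1}}\zeta$ and substitute it into the $V$-based implicit lemma, whereas the paper routes through a $Q$-function variant (Lemma \ref{lemma:implicit-multiple-experts-q}) and a short case analysis on $\pi_{E_1}(a|s)>0$ versus $\pi_{E_1}(a|s)=0$ before performing the same rearrangement.
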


Theorem \ref{theorem:multiple-expert-explicit} deserves some comments. First of all, from Equation \eqref{eq:explicit-eq1}, we can see that a necessary condition for having $r \in \mathcal{R}_{\bar{\mathfrak{B}}}$ is that it can be expressed as the sum of two different components, namely $-\bar{B}^{\pi^{E_1}} \zeta$ and $(E-\gamma P)V$. This sort of result is a direct consequence of the fact that $\pi_{E_1}$ is an optimal policy for $\mathcal{M} \cup r$, and, in this sense, it recovers the existing results of single expert IRL settings~\cite{metelli2021provably}. Indeed, we notice that it exactly matches Equation \eqref{eq:reward-char}, and, consequently, it does not depend at all on the presence of the sub-optimal experts.\footnote{We notice that, however, contrary to single-agent IRL problems, now Equation \eqref{eq:explicit-eq1} is only a necessary condition for having $r \in \mathcal{R}_{\bar{\mathfrak{B}}}$.} The role of the sub-optimal agents, on the other hand, is completely expressed by Equation \eqref{eq:explicit-eq2}.\footnote{We remark that whenever $n=1$ (i.e., we have only access to the optimal expert $\pi_{E_1}$), Theorem \ref{theorem:multiple-expert-explicit} simply reduces to Equation \eqref{eq:explicit-eq1}, and, consequently, it smoothly generalizes existing results for the classical IRL problem.} More precisely, each additional expert introduces a set of \emph{linear} constraints on the values that $\zeta$ can assume.\footnote{As a consequence of the linearity, testing whether a given $\zeta$ satisfies Equation \eqref{eq:explicit-eq2} is computationally efficient.} We recall that $-\bar{B}^{\pi_{E_1}} \zeta$ can be interpreted as the advantage function for the optimal policy $\pi_{E_1}$. In this sense, Equation \eqref{eq:explicit-eq2} limits how sub-optimal the values of actions not played by $\pi_{E_1}$ can be. Specifically, we notice that the resulting $Q$ function of the optimal expert $\pi_{E_1}$, for a given choice of $r$, can be expressed as $Q^{\pi_{E_1}}_{\mathcal{M} \cup r} = -\bar{B}^{\pi_{E_1}} \zeta + EV$ \citep{metelli2021provably}. In this sense, we can appreciate that by limiting the values of $\zeta$, we are restricting the sub-optimality gaps, expressed in terms of $Q$ functions, of actions that the optimal expert does not play. At this point, we notice that the linear constraints in Equation \eqref{eq:explicit-eq2} are expressed in terms of $\pi_{E_i}  \bar{B}^{\pi_{E_1}} \zeta$. As a consequence, they will only affect state-action pairs $(s,a)$ that are played by the sub-optimal experts (i.e., $\pi_{E_i}(a|s) > 0$) and that are not played by the optimal agent (i.e., $\pi_{E_1}(a|s) = 0$). Therefore, as previously highlighted with the implicit formulation of $\mathcal{R}_{\bar{\mathfrak{B}}}$, a sub-optimal expert $\pi_{E_i}$ should behave differently w.r.t. the optimal agent $\pi_{E_1}$ in order to provide meaningful information and reduce the feasible reward set. Furthermore, the limitations introduced over $\zeta$ are directly dependent on the expected discounted occupancy of $\pi_{E_i}$. Given these considerations, we can appreciate that Equation \eqref{eq:explicit-eq2} has provided a precise mathematical description of the phenomenon we identified at the end of the previous section.  


As a final remark, we comment on the maximum values that $\zeta$ can assume. We recall that, for classical IRL problems, $\|\zeta\|_\infty \le (1-\gamma)^{-1}$. For the sub-optimal experts case, instead, let us analyze Equation \eqref{eq:explicit-eq2} in greater detail. Fix a state $s' \in \mathcal{S}$ and a sub-optimal agent $i \in \left\{ 2, \dots, n+1\right\}$; in this case, the $s'$-th constraint in Equation \eqref{eq:explicit-eq2} can be written as:
\begin{align}\label{eq:interpretation}
\sum_{s \in \mathcal{S}} d^{\pi_{E_i}}_{s'}(s) \sum_{a: \pi_{E_1}(a|s) = 0} \pi_{E_i}(a|s) \zeta(s,a) \le \xi_i,
\end{align}
where $d^{\pi_{E_i}}_{s'}(s)$ denotes the discounted expected number of times that policy $\pi_{E_i}$ visits state $s$ starting from state $s'$. From Equation \eqref{eq:interpretation}, we can obtain necessary conditions on the values of $\zeta$ that can generate compatible reward functions. More specifically, let $\mathcal{X}(s,a) \subset \left\{ 2, \dots, n+1\right\}$ be the subset of optimal experts such that $\pi_{E_i}(a|s) > 0$. Then, for each state-action pair $(s,a)$ such that $\pi_{E_1}(a|s) = 0$ and $\pi_{E_i}(a|s) > 0$, we have that:
\begin{align}\label{eq:upper-bound}
\zeta(s,a) \le \min\left\{ k(s,a) , \frac{1}{1-\gamma}\right\} \coloneqq g(s,a),
\end{align}
where $k(s,a)$ is given by:
\begin{align}\label{eq:ksa}
k(s,a) \coloneqq \min_{i \in \mathcal{X}(s,a), s' \in \mathcal{S}} \frac{\xi_i}{d^{\pi_{E_i}}_{s'}(s)\pi_{E_i}(a|s)}.
\end{align}

More specifically, the term $k(s,a)$ directly follows from Equation \eqref{eq:interpretation}, while $(1-\gamma)^{-1}$ is the maximum value that any $\zeta(s,a)$ can assume, and arises, as in the classical IRL setting, from the fact that advantage functions are bounded by $(1-\gamma)^{-1}$ for any possible reward function. In this sense, as shown in the following example, Equation \eqref{eq:upper-bound} implies a significant potential reduction in the maximum values that the advantage function can take, i.e., how much sub-optimal, in terms of $Q$-function, an action not played by $\pi_{E_1}$ can be. 
%
\begin{example}
Consider a IRL problems with only one additional expert. Suppose that $\pi_{E_1}$ and $\pi_{E_i}$ are deterministic.  For all state-action pairs in which $\pi_{E_1}(a|s)=0$ and $\pi_{E_i}(a|s)=1$, Equation \eqref{eq:upper-bound} implies that $\zeta(s,a) \le \min\left\{\xi_i, (1-\gamma)^{-1}\right\}$. If $\xi_i$ is significantly smaller than $(1-\gamma)^{-1}$, we obtain a notable restriction on the set of feasible reward functions.
\end{example}



\section{LEARNING THE FEASIBLE SET}\label{sec:learning}
So far, we have investigated the theoretical properties of the class of reward functions that belong to the feasible set. In this section, we leverage these results to tackle the statistical complexity of estimating $\mathcal{R}_{\bar{\mathfrak{B}}}$ with a \emph{generative model}. Specifically, we first introduce a Probabilistic Approximately Correct (PAC) framework (Section \ref{sec:pac}). Then, we study the statistical complexity of the problem by presenting lower bounds on the number of samples that any algorithm requires in order to correctly identify the feasible set (Section \ref{sec:pac}). Finally, we propose a uniform sampling algorithm and analyze its theoretical guarantees (Section \ref{sec:algorithm}). As a summary, our results show that (i) the IRL problem with sub-optimal experts is statistically more demanding than the single agent IRL setting, and (ii) that the uniform sampling is minimax optimal whenever the sub-optimal experts' performance level is sufficiently close to the one of the optimal agent. For the sake of presentation, all results are presented under the assumption that $\pi_{E_1}$ is deterministic. The extension to the case in which $\pi_{E_1}$ is stochastic is presented in Appendix \ref{app:stochastic-optimal-expert}.

\subsection{PAC Framework}\label{sec:pac}
We define a learning algorithm for an IRL problem $\bar{\mathfrak{B}}$ as a tuple $\mathfrak{A} = \left(\tau, \nu \right)$, $\tau$ is a stopping time that controls the end of the data acquisition phase, and $\nu  = \left( \nu_t \right)_{t \in \mathbb{N}}$ is a history-dependent sampling strategy over $\mathcal{S} \times \mathcal{A}$. More precisely, $\nu_t \in \Delta^{\mathcal{S} \times \mathcal{A}}_{\mathcal{D}_t}$, where $\mathcal{D}_t = \left(\mathcal{S} \times \mathcal{A} \times \mathcal{S} \times \left(\mathcal{A}\right)^{n+1} \right)^{t}$ . At each time step $t \in \mathbb{N}$, the algorithm selects a state-action pair $(S_t, A_t) \sim \nu_t$, and observes a sample $S'_t \sim p(\cdot | S_t, A_t)$ from the environment, together with actions sampled from the experts' policy, namely $\left( A^{(i)}_t\right)_{i=1}^{n+1}$, where $A^{(i)}_t \sim \pi_{E_i}(\cdot | S_t)$. The observed realizations are then used to update the sampling strategy $\nu_t$, and the process goes on until the stopping rule is satisfied. At the end of the data acquisition phase, the algorithm leverages the collected data to output the estimate of the feasible reward set ${\mathcal{R}}_{\widehat{\bar{\mathfrak{B}}}_\tau}$ that is induced by the resulting empirical IRL problem $\widehat{\bar{\mathfrak{B}}}_\tau$. Given this formalism, we are interested in designing learning algorithms that, for any desired accuracy $\epsilon \in (0, 1)$ and any risk parameter $\delta \in (0, 1)$, guarantee that:
\begin{align}\label{pac:def}
\mathbb{P}_{\mathfrak{A},\bar{\mathfrak{B}}} \left( H_{\infty}(\mathcal{R}_{\bar{\mathfrak{B}}}, {\mathcal{R}}_{\widehat{\bar{\mathfrak{B}}}_\tau}) > \epsilon \right) \le \delta.
\end{align}
We refer to these algorithms as $(\epsilon,\delta)$-correct identification strategies. For $(\epsilon,\delta)$-correct strategies, we define their sample complexity as the total number of interaction rounds with the generative model before stopping. In other words, the sample complexity is given by $\tau$.

\subsection{Statistical Lower Bound}\label{sec:lower-bound}
In this section, we present lower bounds on the number of queries to the generative model that any $(\epsilon, \delta)$-correct algorithm needs to perform in order to correctly identify the feasible reward set $\mathcal{R}_{\bar{\mathfrak{B}}}$. The following theorem (proof in Appendix \ref{app:proof-app-sec-4}) reports our result.

\begin{restatable}{theorem}{lb}\label{theo:lb}
Let $\mathfrak{A}$ be a $(\epsilon, \delta)$-correct algorithm for the IRL problem with sub-optimal experts. There exists a problem instance $\bar{\mathfrak{B}}$ such that the expected sample complexity is lower bounded by:
\begin{align}\label{eq:lb-one}
	\mathbb{E}_{\mathfrak{A}, \bar{\mathfrak{B}}}[\tau] \ge {\Omega} \left(\frac{SA}{\epsilon^2 (1-\gamma)^2} \left(\log\left( \frac{1}{\delta}\right) + S \right) \right),
\end{align}
where $\Omega(\cdot)$ hides constant dependencies. Furthermore, let $\pi_\textup{min}$ be:
\begin{align}\label{eq:pi-min-def}
\pi_\textup{min} \coloneqq \min_{i \in \left\{ 2, \dots n+1 \right\}} \max_{(s,a): \pi_{E_i}(a|s) > 0} \pi_{E_i}(a|s),
\end{align}
and define $q_0 \coloneqq \pi_{\textup{min}}^{-1} {\max_{i\in\{2,\dots,n+1\}} \xi_i}$. Then there exists an instance $\bar{\mathfrak{B}}'$ in which $q_0 < 1$ such that:
\begin{align}\label{eq:lb-two}
	\mathbb{E}_{\mathfrak{A}, \bar{\mathfrak{B}}'}[\tau] \ge {\Omega} \left( \frac{q_0^2 S \log\left( \frac{1}{\delta} \right)}{\epsilon^2 \pi_\textup{min}} \right).
\end{align}
\end{restatable}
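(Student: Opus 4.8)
The plan is to prove the two bounds separately, since they isolate distinct sources of hardness. The first bound, Equation~\eqref{eq:lb-one}, I expect to follow by reduction to the single-expert IRL lower bound. I would embed a hard single-agent instance by making the sub-optimal experts carry no information about the feasible set — for instance taking every $\pi_{E_i} = \pi_{E_1}$, or $\xi_i \ge (1-\gamma)^{-1}$ — so that, by Example~\ref{exe:1}, condition (iii) of Lemma~\ref{lemma:implicit-multiple-experts} becomes vacuous and $\mathcal{R}_{\bar{\mathfrak{B}}} = \mathcal{R}_{\mathfrak{B}}$. Estimating $\mathcal{R}_{\mathfrak{B}}$ to Hausdorff-$\ell_\infty$ accuracy $\epsilon$ still requires recovering each of the $SA$ transition distributions to accuracy $\Theta(\epsilon(1-\gamma))$: the $\log(1/\delta)$ term comes from a two-point (Le Cam) argument on a single transition probability, while the additive $S$ term comes from an Assouad/Fano argument over the $S$-dimensional next-state distribution of each pair. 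I would either invoke the single-agent IRL lower bound of \citet{metelli2021provably,metelli2023towards} on this embedded instance, or reconstruct it with the same multi-hypothesis machinery.

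The substance is the second bound, Equation~\eqref{eq:lb-two}, which must expose the extra cost of learning the sub-optimal-expert constraints. Here I would build a base instance $\bar{\mathfrak{B}}'$ containing $S$ independent ``gadget'' states; in each such state $s$ the optimal expert $\pi_{E_1}$ plays a single action deterministically, while the sub-optimal expert $\pi_{E_i}$ plays a distinct, unplayed action $a_2$ with probability $p = \pi_{\mathrm{min}}$, spreading its remaining mass so that $p$ is indeed its largest action probability (this requires $A$ of order at least $1/p$ and is what makes the definition of $\pi_{\mathrm{min}}$ in~\eqref{eq:pi-min-def} active). By Theorem~\ref{theorem:multiple-expert-explicit} and Equation~\eqref{eq:upper-bound}, the feasible value of $\zeta(s,a_2)$ — and hence the extent of the feasible range of $r(s,a_2)$, since $r(s,a_2) = -\zeta(s,a_2) + (E-\gamma P)V(s,a_2)$ — is pinned at the boundary $g(s,a_2) = \min\{k(s,a_2), (1-\gamma)^{-1}\}$ with $k(s,a_2) = \xi_i / (d^{\pi_{E_i}}_{s}(s)\, p)$. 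The hypothesis $q_0 < 1$ is exactly what guarantees $k(s,a_2) < (1-\gamma)^{-1}$, so that the active constraint, and thus the location of the feasible-set boundary, is governed by $p$ rather than by the trivial cap.

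Next I would introduce, for each gadget state $s$, a perturbed instance $\bar{\mathfrak{B}}'_s$ that changes only $\pi_{E_i}(a_2|s)$ from $p$ to $p' = p + \Delta$, compensating on another action to keep $\pi_{E_i}(\cdot|s)$ a valid distribution. Since $k(s,a_2) \propto 1/p$, a perturbation $\Delta$ moves the boundary by $\Theta\big(k(s,a_2)\,\Delta/p\big) = \Theta(q_0\,\Delta/p)$ in reward space; choosing $\Delta = \Theta(\epsilon p/q_0)$ therefore shifts a single reward coordinate by more than $2\epsilon$, which by the max-over-coordinates form of $H_\infty$ forces any $(\epsilon,\delta)$-correct algorithm to distinguish $\bar{\mathfrak{B}}'$ from $\bar{\mathfrak{B}}'_s$. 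The per-visit information at state $s$ is the KL between two categorical expert distributions differing only in the $a_2$-coordinate, which scales as $\mathrm{kl}(p,p') = \Theta\big((p-p')^2/p\big) = \Theta(\epsilon^2 \pi_{\mathrm{min}}/q_0^2)$; crucially the transition kernel is unchanged, so no divergence accumulates outside $s$. Applying the standard change-of-measure / transportation inequality (Wald's identity together with Bretagnolle--Huber) gives $\mathbb{E}_{\bar{\mathfrak{B}}'}[N_\tau(s)]\,\mathrm{kl}(p,p')$ at least of order $\log(1/\delta)$, hence $\mathbb{E}[N_\tau(s)] \ge \Omega\big(q_0^2 \log(1/\delta)/(\epsilon^2 \pi_{\mathrm{min}})\big)$. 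Because the $S$ gadgets are independent and each must be resolved on the same base instance, I would sum these counts, $\mathbb{E}[\tau] = \sum_s \mathbb{E}[N_\tau(s)]$, to collect the extra factor of $S$ and obtain~\eqref{eq:lb-two}.

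The step I expect to be the main obstacle is the geometric one: rigorously turning the $1/p$-sensitivity of $k(s,a_2)$ into a clean lower bound of $2\epsilon$ on $H_\infty(\mathcal{R}_{\bar{\mathfrak{B}}'}, \mathcal{R}_{\bar{\mathfrak{B}}'_s})$. This requires exhibiting an explicit witness reward — the one saturating the $(s,a_2)$ constraint via Theorem~\ref{theorem:multiple-expert-explicit} — that is feasible in one instance yet far from every feasible reward of the other, while simultaneously controlling the occupancy term $d^{\pi_{E_i}}_{s}(s)$, keeping $\pi_{E_i}'$ admissible, and staying in the regime $q_0 < 1$ where the sub-optimal constraint, not the $(1-\gamma)^{-1}$ cap, determines the boundary. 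Verifying that the divergence genuinely localizes at state $s$ under an adaptive, history-dependent sampling rule, so that Wald's identity applies coordinatewise, is a secondary and more routine technical point.
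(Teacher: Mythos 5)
Your proposal is correct and follows essentially the same route as the paper: for Equation \eqref{eq:lb-one} the paper likewise makes all experts identical (so the sub-optimality constraints are vacuous) and re-derives the single-agent lower bound of \citet{metelli2023towards} in the infinite-horizon setting via a two-point construction for the $\log(1/\delta)$ term and an Assouad-type construction for the $S$ term; for Equation \eqref{eq:lb-two} the paper uses exactly your gadget argument — transient states where $\pi_{E_1}$ plays $a_1$ deterministically and the sub-optimal expert plays an unplayed action with probability $\pi_{\textup{min}}$, alternative instances that rescale that probability, a witness reward saturating the constraint $r(s_j,a_1)-r(s_j,a_2)\le \xi_i/\pi_{\textup{min}}$ (valid precisely because $q_0<1$), per-visit KL of order $\epsilon^2\pi_{\textup{min}}/q_0^2$ via reverse Pinsker, a change-of-measure bound, and a sum over the $S$ gadget states. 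The only cosmetic differences are your additive rather than multiplicative parametrization of the policy perturbation and your extra care (spreading mass over $\Theta(1/\pi_{\textup{min}})$ actions) to respect the literal max-based definition in Equation \eqref{eq:pi-min-def}, which the paper's two-action construction glosses over.
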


Theorem \ref{theo:lb} provides two distinct lower bounds (i.e., Equations \eqref{eq:lb-one} and \eqref{eq:lb-two}) for IRL problems with sub-optimal experts. As a consequence, we notice that whenever $q_0 < 1$ holds, the lower bound for the IRL-SE setting can be expressed as the maximum between Equation \eqref{eq:lb-one} and \eqref{eq:lb-two}. At this point, we will comment in-depth on these two equations.


Concerning Equation \eqref{eq:lb-one}, as our analysis reveals, it directly arises from the problem of estimating rewards functions that are compatible with $\pi_{E_1}$ (i.e., with Equation \eqref{eq:explicit-eq1} in Theorem \ref{theorem:multiple-expert-explicit}). In this sense, it represents the complexity of single-agent IRL problems.\footnote{We notice that similar results were presented in \citet{metelli2023towards} for the finite-horizon single expert IRL problem. In this work, we extend their construction and analysis to the infinite-horizon IRL model.} As a precise consequence of the structure of the feasible region we derived in Theorem \ref{theorem:multiple-expert-explicit}, this results in a lower bound also for the multiple sub-optimal experts setting. Therefore, Equation \eqref{eq:lb-one} formally shows that the sub-optimal expert setting is always at least as difficult as the single agent IRL problem.

Equation \eqref{eq:lb-two}, on the other hand, is strongly related to the presence of sub-optimal experts. More precisely, under the assumption that $q_0 < 1$ (e.g., for sufficiently small values of $\xi_i$), it shows a dependency in the lower bound of a factor ${\pi^{-1}_{\textup{min}}}$, where $\pi_{\textup{min}}$ represents the minimum probability with which sub-optimal experts plays their actions. From an intuitive perspective, its presence is related to the difficulty in estimating reward functions that are compatible with Equation \eqref{eq:explicit-eq2} in Theorem \ref{theorem:multiple-expert-explicit}. Indeed, as we have shown in Section \ref{sec:reward-set}, the presence of sub-optimal agents can limit the value of $\zeta$ with a relationship that involves $\pi_{\textup{min}}^{-1}$ (i.e., Equation \eqref{eq:upper-bound}). As our analysis will reveal, the proof of Equation \eqref{eq:lb-two} is directly related to these worst-case upper-bounds on $\zeta$ (and, in order to exploit them successfully, we needed to restrict ourselves to the case in which $q_0 < 1$). At this point, it has to be remarked that, according to the value of $\pi_{\textup{min}}$, Equation \eqref{eq:lb-two} can be significantly larger than Equation \eqref{eq:explicit-eq2}, thus showing an increased difficulty in the statistical complexity that is related to the stochasticity of sub-optimal experts. 

At this point, it has to be noticed that the generative model we defined in Section \ref{sec:pac} is significantly more powerful than the one adopted in a classical IRL setting \citep[see, e.g.,][]{metelli2021provably,metelli2023towards}. For single-agent problems, indeed, a query to the generative model provides only samples from the environment and from the expert agent $\pi_{E_1}$. In our context, on the other hand, for each query, the generative model provides demonstrations from \emph{each} sub-optimal expert. It can be shown that, by slightly modifying the learning formalism, Equation \eqref{eq:lb-two} actually represents a lower bound to the number of samples that should be gathered from \emph{each} sub-optimal agent.\footnote{For further details on this point, we defer the reader to Appendix \ref{app:new-learning-formalism}.} In this sense, the statistical complexity increases significantly in the sub-optimal expert setting compared to the single agent one. Therefore, as a concluding remark, we notice that, in order to gain the reduction in the feasible reward set that we discussed in Section \ref{sec:related-works}, we need to gather additional data in terms of demonstrations from the sub-optimal experts. This unavoidable trade-off is a direct consequence of the structure of the feasible set $\mathcal{R}_{\hat{\mathfrak{B}}}$ that we derived in Theorem \ref{theorem:multiple-expert-explicit}, and, indeed, it arises from the statistical complexity of estimating reward functions that are compatible with the linear constraints of Equation \eqref{eq:explicit-eq2}.


\subsection{Uniform Sampling Algorithm}\label{sec:algorithm}
In this section, we present the Uniform Sampling algorithm for Inverse RL with Suboptimal Experts (US-IRL-SE). The pseudo-code can be found in Algorithm \ref{alg:usirl}. As we can see, US-IRL-SE receives the number of samples $m$ that will be queried to the generative model in each state-action pair. Then, it uniformly gathers data across the entire state-action space, and it updates the empirical estimates $\hat{p}$ and $(\hat{\pi}_{E_i} )_{i=1}^{n+1}$. 

\begin{algorithm}[t]
\caption{Uniform Sampling for Inverse RL with Suboptimal Experts (US-IRL-SE)} \label{alg:usirl}
\small
\begin{algorithmic}[1]
\REQUIRE{samples collected in each $(s,a)$ pair $m$}
\FOR{$t=1,2,\dots, m$}
\STATE{Collect one tuple $(s',(a^{(i)})_{i=1}^{n+1})$  where $s' \sim p(\cdot|s,a)$ and $a^{(i)} \sim \pi_{E_i}(\cdot|s)$ from each $(s,a) \in \mathcal{S} \times \mathcal{A}$ and $i \in \{1\,\dots,n+1\}$}
\STATE{Update $\hat{p}$ and $\left( \hat{\pi}_{E_i} \right)_{i=1}^{n+1}$ according to Equation~\eqref{eq:updates}}
\ENDFOR
\end{algorithmic}
\end{algorithm}

The following theorem (proof in Appendix \ref{app:proof-app-sec-4}), describes the theoretical guarantees of US-IRL-SE.
\begin{restatable}{theorem}{upperbound}\label{theo:ub}
Let $q_1 = \min \left\{ \pi_\textup{min}^{-1} \max_{i } \xi_i  ,(1-\gamma)^{-1} \right\}$, and $q_2 = \max \left\{ 1, q_1 \right\}$. Then, with a total budget of:
\begin{align}\label{eq:ub}
\widetilde{\mathcal{O}}\left( \max\left\{ \frac{q_1^2 S \log \left( \frac{1}{\delta}\right)}{\pi_\textup{min} \epsilon^2},   \frac{q_2^2 SA (S + \log\left( \frac{1}{\delta} \right))}{\epsilon^2 (1-\gamma)^{2}}  \right\} \right),
\end{align}
US-IRL-SE is $(\epsilon,\delta)$-correct and $\widetilde{\mathcal{O}}\left( \cdot \right)$ hides constant and logarithmic dependencies.
\end{restatable}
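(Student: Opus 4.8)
The plan is to establish the sample complexity bound of US-IRL-SE by controlling the Hausdorff distance $H_\infty(\mathcal{R}_{\bar{\mathfrak{B}}}, \mathcal{R}_{\widehat{\bar{\mathfrak{B}}}_\tau})$ through the explicit characterization provided by Theorem~\ref{theorem:multiple-expert-explicit}. The core idea is that both feasible sets are parametrized by $(\zeta, V)$ via Equation~\eqref{eq:explicit-eq1}, subject to the linear constraints \eqref{eq:explicit-eq2}. A reward $r$ lies in $\mathcal{R}_{\bar{\mathfrak{B}}}$ exactly when it is generated by admissible $(\zeta, V)$ in the true problem, and in $\mathcal{R}_{\widehat{\bar{\mathfrak{B}}}_\tau}$ when generated by admissible $(\hat\zeta, \hat V)$ in the empirical one. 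Thus I would bound the Hausdorff distance by showing that for every $r \in \mathcal{R}_{\bar{\mathfrak{B}}}$ there is a nearby $\hat r \in \mathcal{R}_{\widehat{\bar{\mathfrak{B}}}_\tau}$, and symmetrically. The proximity of the two reward sets reduces to the proximity of the two feasibility regions in $(\zeta, V)$-space, which in turn is governed by (a) the estimation error of the transition kernel $p$ (entering through $(E - \gamma P)V$ and the occupancy $d^{\pi_{E_i}}$) and (b) the estimation error of the expert policies $\pi_{E_i}$ (entering through $\bar{B}^{\pi_{E_1}}$ and the $\pi_{E_i}\bar{B}^{\pi_{E_1}}$ terms in the constraints).

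First I would decompose the error contribution into the two additive terms of the reward representation. The shaping term $(E - \gamma P)V$ is exactly the piece analyzed in single-agent IRL: with $\|V\|_\infty \le (1-\gamma)^{-1}$, an $\ell_\infty$ error in $\hat p$ of order $\sqrt{S/N_t(s,a)}$ (via an empirical-Bernstein or Weissman-type concentration inequality on $\hat p(\cdot|s,a)$, uniformly over $(s,a)$) propagates to an error of order $\gamma(1-\gamma)^{-1}\|(\hat P - P)V\|_\infty$. Setting this below $\epsilon$ and inverting yields the second term in the $\max$ of Equation~\eqref{eq:ub}, namely the $SA(S + \log(1/\delta))\epsilon^{-2}(1-\gamma)^{-2}$ scaling, matching the single-agent lower bound \eqref{eq:lb-one}. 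The factor $q_2^2 = \max\{1, q_1\}^2$ accounts for the worst-case magnitude that $\zeta$ can take under the constraints, which enters whenever the $-\bar{B}^{\pi_{E_1}}\zeta$ component perturbs the reward; here I would use the upper bound $g(s,a)$ from Equation~\eqref{eq:upper-bound} to cap $\|\zeta\|_\infty \le q_1$ and carry this through the advantage-term error.

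The more delicate part is the first term in the $\max$, which comes from matching the constraint region \eqref{eq:explicit-eq2}. I would bound the discrepancy between the true constraint operator $d^{\pi_{E_i}}\pi_{E_i}\bar{B}^{\pi_{E_1}}$ and its empirical counterpart, and argue that a feasible $\zeta$ for the true problem can be projected to a feasible $\hat\zeta$ for the empirical one by shrinking it slightly (and vice versa), incurring a reward error proportional to the constraint violation divided by the effective slack. The key quantities are the estimation errors of $\pi_{E_i}(a|s)$ for pairs with $\pi_{E_1}(a|s)=0$, which concentrate at rate $\sqrt{\pi_{E_i}(a|s)/N_t(s)}$ by a Bernstein bound exploiting the variance of a Bernoulli with mean $\pi_{E_i}(a|s)$. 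Since $\zeta(s,a)$ is capped by $g(s,a) \le \xi_i / (d^{\pi_{E_i}}_{s'}(s)\pi_{E_i}(a|s))$, the product $\pi_{E_i}(a|s)\,\zeta(s,a)$ that appears in the constraint is itself bounded, so the relative error of the constraint is controlled by $\pi_\textup{min}^{-1/2}$-type factors; carefully tracking the $q_1^2 \pi_\textup{min}^{-1}$ dependence and inverting the per-state-action sample requirement $m \ge \widetilde{\mathcal{O}}(q_1^2 \pi_\textup{min}^{-1}\epsilon^{-2}\log(1/\delta))$ produces the first term. I expect this constraint-matching step to be the main obstacle, since it requires simultaneously handling the occupancy-measure error (via a perturbation bound on $(I - \gamma\pi_{E_i}P)^{-1}$, which amplifies transition errors by $(1-\gamma)^{-1}$) and the policy-estimation error, while ensuring the projection of $\zeta$ across the true and empirical regions does not blow up the reward discrepancy.

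Finally, I would combine the two contributions by a union bound over all $(s,a)$ pairs and all $n$ experts, choosing the per-pair sample budget $m$ to make each source of error at most $\epsilon/c$ for a suitable constant $c$, so that the total reward discrepancy—hence the Hausdorff distance—stays below $\epsilon$ with probability at least $1-\delta$. Since US-IRL-SE collects exactly $m$ samples in each of the $SA$ pairs, the total budget is $\tau = m\,SA$, and substituting the larger of the two required values of $m$ yields Equation~\eqref{eq:ub}. Throughout, the $\widetilde{\mathcal{O}}$ notation absorbs the logarithmic factors arising from the union bound over $S$, $A$, and $n$. The symmetric direction of the Hausdorff distance follows by the same argument with the roles of the true and empirical problems exchanged, which is routine once the one-sided bound is in place.
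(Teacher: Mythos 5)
Your proposal follows essentially the same route as the paper's proof: parametrize both feasible sets through Theorem \ref{theorem:multiple-expert-explicit}; split the reward discrepancy into the transition-estimation term $\gamma\|(P-\hat{P})V\|_\infty$ and a constraint-matching term handled by multiplicatively shrinking $\zeta$ so that it lands in the empirical constraint set (the paper's sets $\Psi$, $\widehat{\Psi}$ and the infinite-norm projection argument of Lemmas \ref{lemma:error-prop} and \ref{lemma:error-prop-high-prob}); obtain the $\pi_{\textup{min}}^{-1}$ dependence from a relative-error concentration bound on $\hat{\pi}_{E_i}$ (the paper uses a multiplicative Chernoff bound, your Bernstein bound is equivalent for this purpose); cap $\zeta$ by $q_1$ via Equation \eqref{eq:upper-bound}; and finish with a union bound, setting each error source to $\epsilon/c$ and multiplying the per-pair budget by $SA$. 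One bookkeeping point to correct when writing this out: the policy-estimation error at state $s$ concentrates with the pooled count $N_t(s)=mA$ (as your own rate $\sqrt{\pi_{E_i}(a|s)/N_t(s)}$ already indicates), so the per-state-action requirement driving the first term is $m \ge \widetilde{\mathcal{O}}\left(q_1^2/(A\,\pi_{\textup{min}}\,\epsilon^2)\right)$ rather than $\widetilde{\mathcal{O}}\left(q_1^2/(\pi_{\textup{min}}\,\epsilon^2)\right)$ as stated — otherwise the total budget would carry a spurious extra factor of $A$ and fail to match Equation \eqref{eq:ub}.
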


Theorem \ref{theo:ub} deserves some comments. First of all, it formally shows that when the total number of queries to the generative is sufficiently large, US-IRL-SE is $(\epsilon, \delta)$-correct, and its sample complexity is provided in Equation \eqref{eq:ub}. In this sense, we notice that, since $m$ represents the number of calls to the generative model in each state-action pair, its expression can simply be calculated by dividing Equation \eqref{eq:ub} by $SA$.\footnote{The exact expression of $m$ (i.e., constants and hidden logarithmic factors) is provided in Appendix \ref{app:details-us-irl-se}.} As a consequence, we remark that, in order to compute the value of $m$, the algorithm requires knowledge of the minimum probability with which sub-optimal experts play their actions. 

We now proceed by analyzing in detail the sample complexity guarantee. Equation \eqref{eq:ub} is the maximum between two terms whose expressions closely resemble the lower bound that we presented in Theorem \ref{theo:lb}. Specifically, the only difference arises in the definition of $q_0$, $q_1$ and $q_2$. Currently, we are unsure whether this gap arises from the lower bound or the algorithm analysis, and we leave this gap to be filled in for future work. Nevertheless, it has to be remarked that, whenever the sub-optimal expert's performance level is sufficiently close to the one of the optimal agent (i.e., $\pi_{\textup{min}}^{-1} \xi_i \le 1$ for all $i \in \left\{2, \dots, n+1 \right\}$), Equation \eqref{eq:ub} exactly recovers the lower bound that we presented in Theorem \ref{theo:lb}.\footnote{More precisely, under the condition that $\pi_{\textup{min}}^{-1} \xi_i \le 1$, it holds that $q_0 = q_1$, and $q_2 = 1$.} We remark that according to Theorem \ref{theorem:multiple-expert-explicit}, as the values of $\xi_i$'s decrease, the feasible reward set is substantially reduced. In this sense, US-IRL-SE enjoys minimax optimality in the most interesting scenarios where the presence of sub-optimal experts is particularly valuable for mitigating the intrinsic ambiguity that affects inverse reinforcement learning problems. 



\paragraph{Technical Remark} To conclude, we highlight that, although the algorithm is relatively simple, the proof of Theorem \ref{theo:ub} requires significant technical effort. The main challenge arises from studying how the Hausdorff distance between $\mathcal{R}_{\bar{\mathfrak{B}}}$ and $\mathcal{R}_{\widehat{\bar{\mathfrak{B}}}_t}$ decreases as we collect more data from the generative model. Indeed, we recall that these feasible reward sets are subject to the peculiar structure that we identified in Theorem \ref{theorem:multiple-expert-explicit}. More specifically, the set of constraints of Equation \eqref{eq:explicit-eq2} that arises from the presence of sub-optimal experts complicates significantly the study of $H_\infty\left(\mathcal{R}_{\bar{\mathfrak{B}}}, {\mathcal{R}}_{\widehat{\bar{\mathfrak{B}}}_t} \right)$. For further details on this point, we invite the reader to consult our proofs Appendix \ref{app:proof-app-sec-4}.

\section{RELATED WORKS}\label{sec:related-works}

\paragraph{Inverse Reinforcement Learning} Historically, solving an IRL problem \citep{adams2022survey} involves determining a reward function that is compatible with the behavior of an optimal expert. Since the seminal work of \citet{ng2000algorithms}, the problem has been recognized as ill-posed, as multiple reward functions that satisfies this requirement exists \citep{invariance2023}. For this reason, over the years, several algorithmic criteria have been introduced to address this ambiguity issue. These criteria includes maximum margin \citep{ratliff2006maximum}, Bayesian approaches \citep{ramachandran2007bayesian}, maximum entropy \citep{ziebart2008maximum}, and many others \citep[e.g.,][]{majumdar2017risk,metelli2017compatible,zeng2022maximum}. More recently, a new line of works have circumvented the ambiguity issue by redefining the IRL task as the problem of estimating the entire feasible reward set \citep{metelli2021provably,lindner2022active,metelli2023towards}. In our work, we take this novel perspective, and, in this sense, this recent research strand is the most related to our document. Specifically, of particular interests is the work of \citet{metelli2023towards}. In their work, the authors study, for the first time, lower bounds for the single-agent IRL problem in finite horizon settings; furthermore, they show that uniform sampling algorithm is minimax optimal for this task. Nevertheless, it has to remarked that this recent strand of research focuses entirely on single expert problems. As we have shown, however, the extension to the multiple and sub-optimal agents setting requires non-trivial effort. Indeed, the feasible reward set significantly differ (see, e.g., Theorem \ref{theorem:multiple-expert-explicit}), and the problem is harder from a statistical perspective (see, e.g., Theorem \ref{theo:lb}).

\paragraph{Multiple and/or Sub-optimal Experts}
The presence of multiple/sub-optimal experts has garnered attention in the Imitation Learning \citep[IL,][]{hussein2017imitation} community. In IL problems, contrary to IRL, the goal lies in directly leveraging demonstrations of optimal behavior to accelerate the training process of reinforcement learning algorithms. In this context, works that are close in spirit to ours are \citet{kurenkov2020ac,jing2020reinforcement,cheng2020policy,liu2023active}; here, the authors extends the IL formulation to account for the fact that demonstrations are provided from multiple and/or sub-optimal experts. However, unlike our specific focus, their emphasis is on understanding how to effectively exploit imperfect demonstrations to improve training of RL agents. In our work, instead, we exploit the presence of sub-optimal agents to reduce the intrinsic ambiguity that affects the IRL formulation. In this sense, our work is complementary to several studies that analyzed how to improve the identifiability of the reward function in IRL problems by making additional structural assumptions. These include the possibility of observing an optimal agent interacting with several MDPs \citep[e.g.,][]{ratliff2006maximum,amin2016towards,amin2017repeated} and focusing on peculiar types of MDPs that allows for strong theoretical guarantees \citep[e.g.,][]{dvijotham2010inverse,kim2021reward,cao2021identifiability}. Along this line of work, the most related to ours is \citet{rolland2022identifiability}. Here, the authors study how the presence of multiple experts impact the identifiability of the reward function. Contrary to our work, however, the authors assume each agent to follow an entropy regularized objective and, furthermore, they focus on the case in which all experts act optimally in the underlying environment. In this sense, our work encompasses a wider spectrum of applications, as we do not require optimality for each of the agent, nor an entropy regularized objective. Finally, it has to be remarked that the multiple expert setting and IRL have been studied in \citet{likmeta2021dealing} with the goal of providing practical algorithms that can be used in real-world applications. Also in this scenario, each agent is assumed to act optimally in the underlying domain.

\section{CONCLUSIONS}\label{sec:conclusions}
In this work, we studied the novel problem of Inverse RL where, in addition to demonstrations from an optimal expert, we can observe the behavior of multiple and sub-optimal agents. More precisely, we first investigated the theoretical properties of the class of reward functions that are compatible with a given set of experts, i.e., the feasible reward set. Our results formally show that, by exploiting this additional structure, it is possible to significantly reduce the intrinsic ambiguity that affects the IRL formulation. Secondly, we have tackled the statistical complexity of estimating the feasible reward set from a generative model. More precisely, we have shown that a uniform sampling algorithm is minimax optimal whenever the performance level of the sub-optimal expert is sufficiently close to the one of the optimal agent. 

Our research opens up intriguing avenues for future studies. For instance, since we have shown that sub-optimal experts can improve the identifiability of the reward function, future research should focus on building practical algorithms that can exploit this additional structure. To this end, as an intermediate step, it might be interesting to extend our results to the case in which the reward function is expressed as a linear combination of features. This approach would enable addressing infinite state-spaces \citep[e.g.,][]{ng2000algorithms}.

\bibliography{biblio} 


\onecolumn
\aistatstitle{Inverse Reinforcement with Sub-optimal Experts: \\
Supplementary Materials}

\appendix

The structure of the supplementary materials is organized as follows:
\begin{itemize}
\item {Appendix \ref{app:multiple-expert-setting} describes how to extend our result to the multiple optimal expert setting.}
\item {Appendix \ref{app:proofs-sec-3} provides formal proofs of the theoretical claims of Section \ref{sec:reward-set}}.
\item {Appendix \ref{app:proof-app-sec-4} provides formal proofs of the theoretical claims of Section \ref{sec:learning}.}
\item {Appendix \ref{app:stochastic-optimal-expert} discusses how to extend our result to the setting in which $\pi_{E_1}$ is stochastic.}
\item {Appendix \ref{app:new-learning-formalism} discusses how to change the learning formalism to derive a lower-bound that directly depends on the number of samples that are needed from each sub-optimal expert.}
\item {Appendix \ref{app:details-us-irl-se} provides additional details on US-IRL-SE (i.e., exact description of $m$) and computational complexity analysis.}
\end{itemize}

To begin, we provide tables that summaries the main symbols used in this document.
Table \ref{table:notation} reports a summary on the notation used throughout the paper. Table \ref{table:operators} reports a precise definition of the operators used.

\begin{table}[h]
\caption{Notation}\label{table:notation}
\begin{center}
\begin{tabular}{ll}
\textbf{SYMBOL}  &\textbf{MEANING} \\
\hline \\
$\mathfrak{B}$         & Inverse RL problem with a single optimal agent. \\
$\bar{\mathfrak{B}}$             & Inverse RL problem with multiple and sub-optimal experts. \\
$n$								  & Number of sub-optimal experts, $n \in \mathbb{N}_{>0}$. \\
$\pi_{E_i}$             & Policy of the $i$-th expert. If $i=1$, the expert is optimal. \\
$\xi_i$             & Sub-optimality of the $i$-th expert, where $i \in \left\{2, \dots, n+1 \right\}$. \\
$\hat{\pi}_{E_i}$ 					& Empirical estimates of the $i$-th expert policy $\pi_{E_i}$. \\
$\hat{p}$ 					& Empirical estimate of the transition model $p$.\\
$\mathcal{D}_t$            & Dataset of $t$ tuples from the generative model. \\
$N_t(s,a)$					& Number of samples gathered at state-action pair $(s,a)$ in $\mathcal{D}_t$. \\
$N_t(s)$					& Number of samples gathered at state $s$ in $\mathcal{D}_t$. \\
$\widehat{\bar{\mathfrak{B}}}$ & Empirical estimate of the IRL-SE problem induced by $\hat{p}$ and $\hat{\pi}_{E_i}$. \\
$\mathcal{R}_{\mathfrak{B}}$ & Feasible reward set of a single-agent IRL problem. \\
$\mathcal{R}_{\bar{\mathfrak{B}}}$ & Feasible reward set of a IRL-SE problem. \\ 
$\mathcal{R}_{\widehat{\bar{\mathfrak{B}}}}$ & Feasible reward set of the IRL-SE problem induced by $\hat{p}$ and $\hat{\pi}_{E_i}$.\\
$H_\infty(\mathcal{X}, \mathcal{X}')$ & Hausdorff distance between set $\mathcal{X}$ and $\mathcal{X}'$.\\
$\mathfrak{A}$ & Learning algorithm for the US-IRL-SE problem.\\
$\nu$ & Sampling strategy of a learning algorithm $\mathfrak{A}$. \\
$\tau$ & Stopping time (i.e., sample complexity) of a learning algorithm $\mathfrak{A}$. \\
$\epsilon$ & Desired level of accuracy when estimating the feasible reward set. \\
$\delta$ & Maximum risk tolerated when estimating the feasible reward set. \\
$m$ & Number of samples that the US-IRL-SE algorithm gathers in each state-action pair. \\
\end{tabular}
\end{center}
\end{table}

\begin{table}[h]
\caption{Operators}\label{table:operators}
\begin{center}
\begin{tabular}{lll}
\textbf{SYMBOL}  & \textbf{SIGNATURE} &\textbf{DEFINITION} \\
\hline \\
$P$ & $\mathbb{R}^{S} \rightarrow \mathbb{R}^{S \times A}$ & $(Pf)(s,a) = \sum_{s' \in S} p(s'|s,a) f(s')$ \\
$\pi$ & $\mathbb{R}^{S \times A} \rightarrow \mathbb{R}^S$ & $(\pi f)(s) = \sum_{a \in \mathcal{A}} \pi(a|s) f(s,a)$\\
$E$ & $\mathbb{R}^{S} \rightarrow \mathbb{R}^{S \times A}$ & $(Ef)(s,a) = f(s)$\\
$\bar{B}^\pi$ & $\mathbb{R}^{S \times A} \rightarrow \mathbb{R}^{S \times A}$ & $(\bar{B}^{\pi}f)(s,a) = \mathbbm{1} \left\{ \pi(a|s)=0\right\} f(s,a)$ \\
${B}^\pi$ & $\mathbb{R}^{S \times A} \rightarrow \mathbb{R}^{S \times A}$ & $({B}^{\pi}f)(s,a) = \mathbbm{1} \left\{ \pi(a|s)>0\right\} f(s,a)$ \\ 
$d^\pi$ & $\mathbb{R}^{S} \rightarrow \mathbb{R}^{S}$ & $(d^{\pi} f)(s) = \sum_{t=0}^{+\infty} \left((\gamma \pi P)^t f\right)(s)$ \\ 
${I}_{\mathcal{S}}$ & $\mathbb{R}^{S} \rightarrow \mathbb{R}^{S}$ & $({I}_{\mathcal{S}}f)(s) = f(s)$
\end{tabular}
\end{center}
\end{table}

\section{Additional Multiple Optimal Expert Setting}\label{app:multiple-expert-setting}
In this section, we discuss the extension of the IRL-SE setting to the case in which multiple optimal policies are available. More specifically, we define this IRL-SE setting as a tuple $\tilde{\mathfrak{B}} = \left( \mathcal{M}, \left( \pi_{E_i}^*\right)_{i=1}^{n_1}, \left( \pi_{E_i} \right)_{i=1}^{n_2}, \left( \xi_i \right)_{i=1}^{n_2} \right)$, where $\left( \pi_{E_i}^*\right)_{i=1}^{n_1}$ is a set of $n_1$ optimal policies, $\left( \pi_{E_i} \right)_{i=1}^{n_2}$ is a set of $n_2$ sub-optimal policies with known degree of sub-optimality $\left( \xi_i \right)_{i=1}^{n_2}$.

At this point, it is easy to verify that the shape of the feasible set $\mathcal{R}_{\tilde{\mathfrak{B}}}$, is exactly the one of Theorem \ref{theorem:multiple-expert-explicit}, where Equations \eqref{eq:explicit-eq1} an \eqref{eq:explicit-eq2} are obtained by replacing $\pi_{E_1}$ with each optimal policy $\pi_{E_i}^{*}$ with $i \in \left\{1, \dots, n_1 \right\}$.

Concerning learning the feasible reward set, it is sufficient to extend the generative model so that samples are gathered from all optimal and sub-optimal experts. The proof of Section \ref{sec:learning} holds almost unchanged.

\section{Proofs and Derivations of Section \ref{sec:reward-set}}\label{app:proofs-sec-3}

In this section, we provide formal proofs of the theoretical results of Section \ref{sec:reward-set}. We begin by reporting for completeness some results from \citet{metelli2021provably} that will be used in our analysis.

\begin{lemma}\label{lemma:metelli-implicit}
Let $ \mathfrak{B} = (\mathcal{M}, \pi_E)$ be a single-agent IRL problem. Let $r \in [0,1]^{S\times A}$, then $r$ is a feasible reward if and only if for all $(s,a) \in (S,A)$ it holds that:
\begin{align*}
    (i) \qquad Q^{\pi_E}_{\mathcal{M} \cup r}(s,a) - V^{\pi_E}_{\mathcal{M} \cup r}(s) = 0 \qquad if \ \pi_E(a|s) > 0,  \\
    (ii) \qquad Q^{\pi_E}_{\mathcal{M} \cup r}(s,a) - V^{\pi_E}_{\mathcal{M} \cup r}(s) \leq 0 \qquad if \ \pi_E(a|s) = 0.
\end{align*}
\end{lemma}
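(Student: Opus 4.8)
The plan is to unwind the definition of feasibility and reduce both directions to elementary manipulations of the advantage function. Recall from Equation \eqref{eq:feasible-set-irl} and the optimality criterion in the preliminaries that $r$ is feasible for $\mathfrak{B}$ if and only if $\pi_E$ is optimal for $\mathcal{M} \cup r$, which by definition means the advantage function $A^{\pi_E}_{\mathcal{M} \cup r} = Q^{\pi_E}_{\mathcal{M} \cup r} - EV^{\pi_E}_{\mathcal{M} \cup r}$ is non-positive in every state--action pair. Since $(EV^{\pi_E}_{\mathcal{M} \cup r})(s,a) = V^{\pi_E}_{\mathcal{M} \cup r}(s)$, this is exactly the statement that $Q^{\pi_E}_{\mathcal{M} \cup r}(s,a) - V^{\pi_E}_{\mathcal{M} \cup r}(s) \le 0$ for all $(s,a)$. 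The $(\Leftarrow)$ direction is then immediate: if (i) and (ii) hold, the advantage equals $0 \le 0$ on the support of $\pi_E$ and is already non-positive off the support, so $A^{\pi_E}_{\mathcal{M} \cup r} \le 0$ everywhere and $r$ is feasible.

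For the $(\Rightarrow)$ direction, I would assume $A^{\pi_E}_{\mathcal{M} \cup r} \le 0$ element-wise. Condition (ii) is simply the restriction of this inequality to pairs with $\pi_E(a|s)=0$, so nothing is left to prove there, and the only substantive step is establishing the equality in condition (i). The key identity to invoke is $V^{\pi_E}_{\mathcal{M} \cup r} = \pi Q^{\pi_E}_{\mathcal{M} \cup r}$, stated in the preliminaries, i.e., $V^{\pi_E}_{\mathcal{M} \cup r}(s) = \sum_a \pi_E(a|s) Q^{\pi_E}_{\mathcal{M} \cup r}(s,a)$. Using $\sum_a \pi_E(a|s) = 1$, this rearranges to $\sum_a \pi_E(a|s)\bigl( Q^{\pi_E}_{\mathcal{M} \cup r}(s,a) - V^{\pi_E}_{\mathcal{M} \cup r}(s) \bigr) = 0$, that is, a convex combination of the advantages $A^{\pi_E}_{\mathcal{M} \cup r}(s,\cdot)$ vanishes.

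The argument then closes with a standard observation: a convex combination of non-positive numbers can equal zero only if every term carrying a strictly positive weight is itself zero. Since each advantage is $\le 0$ by feasibility and the weights $\pi_E(a|s)$ are non-negative and sum to one, the vanishing sum forces $A^{\pi_E}_{\mathcal{M} \cup r}(s,a) = 0$ for every $a$ with $\pi_E(a|s) > 0$, which is precisely condition (i). I expect no genuine obstacle in this proof; the content is essentially the observation that a policy achieving the value $V^{\pi_E}$ as the $\pi_E$-average of its $Q$-values cannot place positive probability on strictly sub-optimal (negative-advantage) actions. The only point requiring a line of care is making the convex-combination step explicit and recalling that $V^{\pi_E}_{\mathcal{M} \cup r} = \pi Q^{\pi_E}_{\mathcal{M} \cup r}$ follows directly from the definition of the $V$-function as the action-expectation of the $Q$-function under $\pi_E$.
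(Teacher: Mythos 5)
Your proof is correct and is exactly the standard argument: the paper itself states this lemma without proof, importing it from \citet{metelli2021provably}, and your reconstruction (feasibility $\Leftrightarrow$ element-wise non-positive advantage, with the converse direction closed by noting that $V^{\pi_E}_{\mathcal{M}\cup r}(s) = \sum_a \pi_E(a|s) Q^{\pi_E}_{\mathcal{M}\cup r}(s,a)$ forces a vanishing convex combination of non-positive advantages, hence zero advantage on the support of $\pi_E$) is the same argument used in that source. No gaps; the convex-combination step you flag as the one point of care is indeed the only substantive step, and you handle it correctly.
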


\begin{lemma}\label{lemma:metelli-q-funct}
Let $ \mathfrak{B} = (\mathcal{M}, \pi_E)$ be a single-agent IRL problem. A Q-function satisfies condition of Lemma \ref{lemma:metelli-implicit} if and only if there exist $\zeta \in \mathbb{R} ^ {S \times A}_{\geq 0}$ and $V\in \mathbb{R}^S$ such that:
\begin{align*}
    Q_{\mathcal{M} \cup r} = - \bar B^{\pi_E} \zeta + EV
\end{align*}
\end{lemma}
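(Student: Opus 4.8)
The plan is to prove both directions of the biconditional, using the characterization of the $Q$-function of a fixed policy as the solution of the Bellman evaluation equation. Throughout, I would work with the linear operators $E$, $P$, $\bar{B}^{\pi_E}$ and $\pi_E$ introduced in the preliminaries, and with the fact that for any reward $r$ the $Q$-function $Q^{\pi_E}_{\mathcal{M}\cup r}$ is the unique fixed point of the Bellman evaluation operator, i.e. it satisfies $Q^{\pi_E}_{\mathcal{M}\cup r} = r + \gamma P \pi_E Q^{\pi_E}_{\mathcal{M}\cup r}$. Equivalently, since $V^{\pi_E}_{\mathcal{M}\cup r} = \pi_E Q^{\pi_E}_{\mathcal{M}\cup r}$, this reads $Q^{\pi_E}_{\mathcal{M}\cup r} = r + \gamma P V^{\pi_E}_{\mathcal{M}\cup r}$. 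The key observation is that Lemma \ref{lemma:metelli-implicit} says that $Q^{\pi_E}_{\mathcal{M}\cup r}(s,a) - V^{\pi_E}_{\mathcal{M}\cup r}(s)$ is zero wherever $\pi_E(a|s)>0$ and non-positive wherever $\pi_E(a|s)=0$; the goal is to encode exactly this sign pattern through the operator $\bar{B}^{\pi_E}$ and a non-negative vector $\zeta$.

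For the forward direction, I would assume a $Q$-function satisfies conditions (i) and (ii) of Lemma \ref{lemma:metelli-implicit} and construct the witnesses explicitly. Set $V := V^{\pi_E}_{\mathcal{M}\cup r}$, so $EV$ is the function $(s,a)\mapsto V^{\pi_E}_{\mathcal{M}\cup r}(s)$, and define $\zeta := -\bar{B}^{\pi_E}\big(Q^{\pi_E}_{\mathcal{M}\cup r} - EV\big)$, i.e. $\zeta(s,a) = \mathbbm{1}\{\pi_E(a|s)=0\}\big(V^{\pi_E}_{\mathcal{M}\cup r}(s) - Q^{\pi_E}_{\mathcal{M}\cup r}(s,a)\big)$. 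By condition (ii), $V^{\pi_E}_{\mathcal{M}\cup r}(s) - Q^{\pi_E}_{\mathcal{M}\cup r}(s,a) \ge 0$ on the support of the indicator, so $\zeta \in \mathbb{R}^{S\times A}_{\ge 0}$. It then remains to verify the identity $-\bar{B}^{\pi_E}\zeta + EV = Q^{\pi_E}_{\mathcal{M}\cup r}$ pointwise: where $\pi_E(a|s)>0$ the indicator vanishes and condition (i) gives $Q^{\pi_E}_{\mathcal{M}\cup r}(s,a)=V(s)=(EV)(s,a)$; where $\pi_E(a|s)=0$ one uses that $\bar{B}^{\pi_E}$ acts as the identity and that $\zeta$ was defined precisely to recover the gap. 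The only subtlety is checking that $-\bar{B}^{\pi_E}\zeta = \bar{B}^{\pi_E}(Q^{\pi_E}_{\mathcal{M}\cup r}-EV)$ is the correct deficit, which follows because $\bar{B}^{\pi_E}$ is idempotent ($\big(\bar{B}^{\pi_E}\big)^2 = \bar{B}^{\pi_E}$).

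For the converse, I would assume $Q^{\pi_E}_{\mathcal{M}\cup r} = -\bar{B}^{\pi_E}\zeta + EV$ for some $\zeta \ge 0$ and some $V\in\mathbb{R}^S$, and derive conditions (i)–(ii). First I would show $V = V^{\pi_E}_{\mathcal{M}\cup r}$: applying $\pi_E$ to both sides and using $\pi_E \bar{B}^{\pi_E} = 0$ (since $\bar{B}^{\pi_E}$ is supported on actions with $\pi_E(a|s)=0$) together with $\pi_E E = I_{\mathcal{S}}$, one gets $V^{\pi_E}_{\mathcal{M}\cup r} = \pi_E Q^{\pi_E}_{\mathcal{M}\cup r} = \pi_E EV = V$. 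Consequently $Q^{\pi_E}_{\mathcal{M}\cup r} - EV^{\pi_E}_{\mathcal{M}\cup r} = -\bar{B}^{\pi_E}\zeta$, which is zero where $\pi_E(a|s)>0$ (giving (i)) and equals $-\zeta(s,a)\le 0$ where $\pi_E(a|s)=0$ (giving (ii)). I expect the main obstacle to be purely notational bookkeeping: carefully justifying the operator identities $\pi_E\bar{B}^{\pi_E}=0$, $\pi_E E = I_{\mathcal{S}}$, and the idempotence of $\bar{B}^{\pi_E}$, and making sure the pointwise case analysis on the support of $\pi_E$ is airtight. No genuinely hard estimate arises; the content is in correctly identifying the witness $\zeta$ and the fact that $V$ is forced to equal the value function.
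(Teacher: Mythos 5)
Your proof is correct and takes essentially the same route as the source: the paper itself states this lemma without proof, importing it from \citet{metelli2021provably}, where the argument is precisely yours — take $V$ to be the value function and $\zeta = -\bar{B}^{\pi_E}\big(Q^{\pi_E}_{\mathcal{M}\cup r} - EV\big)$ (nonnegative by condition (ii), with the identity verified pointwise on and off the support of $\pi_E$), and for the converse apply $\pi_E$ to both sides using $\pi_E \bar{B}^{\pi_E} = 0$ and $\pi_E E = I_{\mathcal{S}}$ to force $V = V^{\pi_E}_{\mathcal{M}\cup r}$ and read off (i)--(ii). The operator identities you flag as the only subtleties are exactly the right ones, and your handling of them is sound.
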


\begin{lemma}\label{lemma:metelli-explicit}
Let $ \mathfrak{B} = (\mathcal{M}, \pi_E)$ be a single-agent IRL problem. Let $r \in \mathbb R ^{S \times A}$, then r is a feasible reward, if and only if there exist $\zeta \in \mathbb{R} ^ {S\times A}_{\geq 0}$ and $V\in \mathbb{R}^S$ such that:
\begin{align*}
    r = - \bar B^{\pi_E} \zeta + (E-\gamma P)V
\end{align*}
\end{lemma}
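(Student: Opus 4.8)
The plan is to bridge the Q-function characterization of Lemma~\ref{lemma:metelli-q-funct} with the reward characterization in the statement by means of the Bellman evaluation equation. The central observation is that, for the policy $\pi_E$ in the MDP $\mathcal{M}\cup r$, the Q-function satisfies $Q^{\pi_E}_{\mathcal{M}\cup r} = r + \gamma P V^{\pi_E}_{\mathcal{M}\cup r}$, where $V^{\pi_E}_{\mathcal{M}\cup r} = \pi_E Q^{\pi_E}_{\mathcal{M}\cup r}$. Inverting this relation gives $r = Q^{\pi_E}_{\mathcal{M}\cup r} - \gamma P V^{\pi_E}_{\mathcal{M}\cup r}$, so the explicit form of $r$ follows directly once the shape of the Q-function is pinned down.

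For the forward direction, I would start from feasibility of $r$. By Lemma~\ref{lemma:metelli-implicit}, feasibility is equivalent to the implicit conditions on $Q^{\pi_E}_{\mathcal{M}\cup r}$, and then by Lemma~\ref{lemma:metelli-q-funct} there exist $\zeta \in \mathbb{R}^{S\times A}_{\ge 0}$ and $V \in \mathbb{R}^S$ with $Q^{\pi_E}_{\mathcal{M}\cup r} = -\bar{B}^{\pi_E}\zeta + EV$. The crucial algebraic step is to verify that this $V$ is exactly the value function, i.e.\ $V^{\pi_E}_{\mathcal{M}\cup r} = \pi_E Q^{\pi_E}_{\mathcal{M}\cup r} = V$. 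This rests on two operator identities: $\pi_E \bar{B}^{\pi_E}\zeta = 0$, since $(\bar{B}^{\pi_E}\zeta)(s,a)$ is supported on the pairs where $\pi_E(a|s)=0$, on which $\pi_E$ places no mass, and $\pi_E E V = V$. Substituting $Q^{\pi_E}_{\mathcal{M}\cup r} = -\bar{B}^{\pi_E}\zeta + EV$ and $V^{\pi_E}_{\mathcal{M}\cup r} = V$ into the inverted Bellman relation yields $r = -\bar{B}^{\pi_E}\zeta + (E-\gamma P)V$, as desired.

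For the converse, I would take $r = -\bar{B}^{\pi_E}\zeta + (E-\gamma P)V$ with $\zeta \ge 0$ as given and show that the induced Q-function equals $\widetilde{Q} \coloneqq -\bar{B}^{\pi_E}\zeta + EV$; by Lemma~\ref{lemma:metelli-q-funct} together with Lemma~\ref{lemma:metelli-implicit} this certifies feasibility. It suffices to check that $\widetilde{Q}$ solves the Bellman evaluation equation $\widetilde{Q} = r + \gamma P \pi_E \widetilde{Q}$: using $\pi_E \widetilde{Q} = V$ once more, the right-hand side collapses to $(-\bar{B}^{\pi_E}\zeta + EV - \gamma P V) + \gamma P V = -\bar{B}^{\pi_E}\zeta + EV = \widetilde{Q}$. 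Since $\gamma \in [0,1)$ makes the evaluation operator a contraction, its fixed point is unique, so $Q^{\pi_E}_{\mathcal{M}\cup r} = \widetilde{Q}$, which closes the equivalence.

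I expect the main obstacle to be bookkeeping rather than conceptual: carefully establishing the identity $\pi_E \bar{B}^{\pi_E}\zeta = 0$ and confirming that the $V$ in the Q-function decomposition coincides with $V^{\pi_E}_{\mathcal{M}\cup r}$, since the entire argument hinges on this coincidence. A secondary point worth a brief remark is the domain mismatch, namely $r\in\mathbb{R}^{S\times A}$ here versus $r\in[0,1]^{S\times A}$ in Lemma~\ref{lemma:metelli-implicit}; I would note that both the implicit conditions and the Bellman relations are insensitive to boundedness, so the equivalence extends verbatim to arbitrary real-valued rewards.
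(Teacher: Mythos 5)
Your proof is correct: the identities $\pi_E \bar{B}^{\pi_E}\zeta = 0$ and $\pi_E E V = V$ do identify the $V$ from Lemma~\ref{lemma:metelli-q-funct} with $V^{\pi_E}_{\mathcal{M}\cup r}$, the inversion $r = Q^{\pi_E}_{\mathcal{M}\cup r} - \gamma P V^{\pi_E}_{\mathcal{M}\cup r}$ handles the forward direction, and the contraction argument legitimately pins down $Q^{\pi_E}_{\mathcal{M}\cup r} = \widetilde{Q}$ in the converse. Note that the paper itself gives no proof of this lemma---it is recalled verbatim from \citet{metelli2021provably}---and your route (the $Q$-function characterization bridged to the reward via the Bellman evaluation equation) is precisely the standard derivation from that source, so beyond your apt remark that the equivalence is insensitive to the boundedness of $r$, there is nothing to add.
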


At this point, we proceed by proving Lemma \ref{lemma:implicit-multiple-experts} of Section \ref{sec:related-works}.

\lemmaimpplicit*
\begin{proof}
Condition (i) and (ii) are necessary conditions for the claim to hold. This directly follows by the definition of $\mathcal{R}_{\bar{\mathfrak{B}}}$ and from Lemma \ref{lemma:metelli-implicit}. At this point, what remains to be proven is condition (iii). Condition (iii), however, is a direct consequence of the structural assumption of the sub-optimal experts, which concludes the proof.
\end{proof}

We now continue by proving a stronger version of Lemma \ref{lemma:implicit-multiple-experts} that studies the presence of sub-optimal experts in terms of $Q$-function.

\begin{lemma}\label{lemma:implicit-multiple-experts-q}
Let $\bar{\mathfrak{B}}$ be an IRL problem with sub-optimal experts. Let $r \in [0,1]^{\mathcal{S} \times \mathcal{A}}$. Then, $r \in \mathcal{R}_{\bar{\mathfrak{B}}}$ if and only if the following conditions are satisfied:
\begin{itemize}
    \item[(i)] $Q_{\mathcal{M} \cup r}^{\pi_{E_1}}(s,a) = V_{\mathcal{M} \cup r}^{\pi_{E_1}}(s)$ \quad \quad $\forall (s,a) : \pi_{E_1}(a|s) > 0$,
    \item[(ii)] $Q_{\mathcal{M} \cup r}^{\pi_{E_1}}(s,a) \le  V_{\mathcal{M} \cup r}^{\pi_{E_1}}(s)$ \quad \quad $\forall (s,a) : \pi_{E_1}(a|s) = 0$,
    \item[(iii)] $Q_{\mathcal{M} \cup r}^{\pi_{E_1}} \le V_{\mathcal{M} \cup r}^{\pi_{E_i}} + \mathbf{1}_S \xi_i$ \quad  \quad $\textrm{ }  \forall i \in \left\{2, \dots, n+1\right\}$.
\end{itemize}
\end{lemma}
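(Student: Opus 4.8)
The plan is to prove Lemma \ref{lemma:implicit-multiple-experts-q} by showing it is essentially a restatement of Lemma \ref{lemma:implicit-multiple-experts}, where the only substantive change is in condition (iii). Conditions (i) and (ii) are identical in both lemmas, so they require no further argument beyond what was already established: they encode the optimality of $\pi_{E_1}$ for $\mathcal{M} \cup r$ and follow directly from Lemma \ref{lemma:metelli-implicit} applied to the single-agent problem $(\mathcal{M}, \pi_{E_1})$. The entire content of the proof therefore lies in reconciling the two formulations of condition (iii).

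First I would recall that under conditions (i) and (ii), the policy $\pi_{E_1}$ is optimal for $\mathcal{M} \cup r$, which by condition (i) means that $Q_{\mathcal{M} \cup r}^{\pi_{E_1}}(s,a) = V_{\mathcal{M} \cup r}^{\pi_{E_1}}(s)$ for every action $a$ actually played by $\pi_{E_1}$, i.e.\ whenever $\pi_{E_1}(a|s) > 0$. The key observation is that the new condition (iii), namely $Q_{\mathcal{M} \cup r}^{\pi_{E_1}}(s,a) \le V_{\mathcal{M} \cup r}^{\pi_{E_i}}(s) + \xi_i$ for \emph{all} $(s,a)$, is stronger pointwise in $(s,a)$ than the old condition (iii), $V_{\mathcal{M} \cup r}^{\pi_{E_1}}(s) \le V_{\mathcal{M} \cup r}^{\pi_{E_i}}(s) + \xi_i$, precisely because $Q_{\mathcal{M} \cup r}^{\pi_{E_1}}(s,a) \ge V_{\mathcal{M} \cup r}^{\pi_{E_1}}(s)$ can fail only at unplayed actions, where $Q \le V$. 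So the plan is to argue both directions of an equivalence conditioned on (i)--(ii) already holding.

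For the direction showing the new (iii) implies the old (iii): since for each state $s$ there exists at least one action $a$ with $\pi_{E_1}(a|s) > 0$, condition (i) gives $Q_{\mathcal{M} \cup r}^{\pi_{E_1}}(s,a) = V_{\mathcal{M} \cup r}^{\pi_{E_1}}(s)$ for that action, and applying the new (iii) at that same $(s,a)$ immediately yields $V_{\mathcal{M} \cup r}^{\pi_{E_1}}(s) \le V_{\mathcal{M} \cup r}^{\pi_{E_i}}(s) + \xi_i$. For the converse, showing the old (iii) plus (i)--(ii) implies the new (iii): I would use that conditions (i) and (ii) together state $Q_{\mathcal{M} \cup r}^{\pi_{E_1}}(s,a) \le V_{\mathcal{M} \cup r}^{\pi_{E_1}}(s)$ for \emph{all} $(s,a)$ (with equality on played actions), so chaining this with the old (iii) gives $Q_{\mathcal{M} \cup r}^{\pi_{E_1}}(s,a) \le V_{\mathcal{M} \cup r}^{\pi_{E_1}}(s) \le V_{\mathcal{M} \cup r}^{\pi_{E_i}}(s) + \xi_i$, which is exactly the new (iii). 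This establishes that, in the presence of (i) and (ii), the two versions of (iii) are equivalent.

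The main obstacle, such as it is, is purely bookkeeping: one must be careful that the equivalence of the two (iii) conditions genuinely relies on (i) and (ii) and is \emph{not} true in isolation — the new (iii) is strictly stronger as a standalone constraint, and it is only the optimality structure ($Q^{\pi_{E_1}} \le EV^{\pi_{E_1}}$ everywhere, with equality on the support of $\pi_{E_1}$) that collapses the gap. Since the statement characterizes membership in $\mathcal{R}_{\bar{\mathfrak{B}}}$, I would conclude by invoking Lemma \ref{lemma:implicit-multiple-experts}: a reward $r$ lies in $\mathcal{R}_{\bar{\mathfrak{B}}}$ iff it satisfies (i), (ii), and the old (iii); and we have just shown that, given (i) and (ii), the old (iii) holds iff the new (iii) holds. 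Therefore $r \in \mathcal{R}_{\bar{\mathfrak{B}}}$ iff it satisfies (i), (ii), and the new (iii), which is the claim.
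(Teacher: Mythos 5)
Your proposal is correct and follows essentially the same route as the paper's own proof: both arguments reduce to the observation that, given (i) and (ii), the $Q$-based condition (iii) is equivalent to the $V$-based sub-optimality requirement — one direction by chaining $Q_{\mathcal{M}\cup r}^{\pi_{E_1}}(s,a) \le V_{\mathcal{M}\cup r}^{\pi_{E_1}}(s) \le V_{\mathcal{M}\cup r}^{\pi_{E_i}}(s) + \xi_i$, the other by evaluating (iii) at an action in the support of $\pi_{E_1}$ where (i) gives equality. Your explicit routing through Lemma \ref{lemma:implicit-multiple-experts} rather than directly through the definition of $\mathcal{R}_{\bar{\mathfrak{B}}}$ is a cosmetic difference only, since that lemma is itself just the definition combined with Lemma \ref{lemma:metelli-implicit}.
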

\begin{proof}
Condition (i) and (ii) are necessary conditions for the claim to hold. This directly follows by the definition of $\mathcal{R}_{\bar{\mathfrak{B}}}$ and from Lemma \ref{lemma:metelli-implicit}. At this point, what remains to be proven is condition (iii). 

We begin by showing that, if $r \in \mathcal{R}_{\mathfrak{B}}$ then condition (iii) is satisfied. Since (i) and (ii) holds, it sufficies to plug $V_{\mathcal{M} \cup r}^{\pi_{E_1}} - V_{\mathcal{M} \cup r}^{\pi_{E_i}} \le \xi_i$ within (i) and (ii) to obtain the desired result.

We conclude by showing that if (i)-(iii) are satisfied, then $r \in \mathcal{R}_{\mathfrak{B}}$. Since (i) and (ii) holds, then, by Lemma \ref{lemma:metelli-implicit}, we know that $\pi_{E_1}$ is optimal for $\mathcal{M} \cup r$. It remains to be proven that $V_{\mathcal{M} \cup r}^{\pi_{E_1}} - V_{\mathcal{M} \cup r}^{\pi_{E_i}} \le \xi_i$ holds as well. 
However, since (i) holds, (iii) implies that $V_{\mathcal{M} \cup r}^{\pi_{E_1}}(s) - V_{\mathcal{M} \cup r}^{\pi_{E_i}}(s) \le \xi_i(s)$ holds for all $s \in \mathcal{S}$, thus concluding the proof. 
\end{proof}

We continue by proving Theorem \ref{theorem:multiple-expert-explicit}.

\theoremexplicit*
\begin{proof}
	First of all, Equation \eqref{eq:explicit-eq1} is a necessary condition for the claim to hold. Indeed, from Lemma \ref{lemma:metelli-explicit} and Lemma \ref{lemma:implicit-multiple-experts-q}, Equation \eqref{eq:explicit-eq1} is a necessary and sufficient condition to make $\pi_{E_1}$ optimal in $\mathcal{M} \cup r$.

    At this point, we proceed conditioning on the fact that Equation \eqref{eq:explicit-eq1} holds. We need to show that, if $r \in \mathcal{R}_{\bar{\mathfrak{B}}}$, then Equation \eqref{eq:explicit-eq2} holds, and if Equation \eqref{eq:explicit-eq2} holds, then $r \in \mathcal{R}_{\bar{\mathfrak{B}}}$.

    From Lemma \ref{lemma:implicit-multiple-experts-q}, we know that a necessary and sufficient condition for the previous statements to hold is that:
    \begin{align*}
        Q_{\mathcal{M} \cup r}^{\pi_{E_1}}(s,a) \le V_{\mathcal{M} \cup r}^{\pi_{E_i}}(s) + \xi_i,
    \end{align*}
    holds for any state-action pair $(s,a)$ and for all experts $i \in \left\{ 2, \dots, n+1 \right\}$.
    Given Equation \eqref{eq:explicit-eq1} and Lemma \ref{lemma:metelli-q-funct}, the previous equation can be conveniently written as:
    \begin{align}\label{eq:explicit-eq3}
        -\bar{B}^{\pi^{E_1}} \zeta + EV \le EV_{\mathcal{M} \cup r}^{\pi^{E_i}} + E\mathbf{1}_{S}\xi_i.  
    \end{align}
    At this point, consider $(s,a)$ such that $\pi_{E_1}(a|s) > 0$. Then, Equation \eqref{eq:explicit-eq3} reduces to:
    \begin{align}\label{eq:explicit-eq4}
        V(s) \le V_{\mathcal{M} \cup r}^{\pi_{E_i}}(s) + \xi_i.
    \end{align}
    Conversely, consider $(s,a)$ such that $\pi_{E_1}(a|s)=0$, then Equation \eqref{eq:explicit-eq3} reduces to:
    \begin{align}\label{eq:explicit-eq5}
        -\zeta(s,a) + V(s) \le V_{\mathcal{M} \cup r}^{\pi_{E_i}}(s) + \xi_i.
    \end{align}
    Since $\zeta(s,a) \ge 0$ by assumption, Equation 
    \eqref{eq:explicit-eq5} is directly implied by Equation \eqref{eq:explicit-eq4}. Therefore, it suffices to study:
    \begin{align*}
        V(s) \le V_{\mathcal{M} \cup r}^{\pi_{E_i}}(s) + \xi_i,
    \end{align*}
    which can be rewritten as:
    \begin{align*}
        V - \mathbf{1}_S\xi_i \le V_r^{\pi_{E_i}} = \left(I_{\mathcal{S}} - \gamma \pi_{E_i} P \right)^{-1} \pi_{E_i} r. 
    \end{align*}

    At this point, we can plug Equation \eqref{eq:explicit-eq1} within the previous Equation. More specifically, we obtain:
    \begin{align}\label{eq:explicit-eq6}
        V - \mathbf{1}_S \xi_i & \le \left(I_{\mathcal{S}} - \gamma \pi_{E_i} P \right)^{-1} \pi_{E_i} \left( -\bar{B}^{\pi_{E_1}} \zeta + \left(E-\gamma P\right)V \right).
    \end{align}
    We now proceed by rewriting the right hand side of Equation \eqref{eq:explicit-eq6}. More precisely, we notice that:
    \begin{align*}
    \left(I_{\mathcal{S}} - \gamma \pi_{E_i} P \right)^{-1} \pi_{E_i} \left( \left(E-\gamma P\right)V \right) & = \left(I_{\mathcal{S}} - \gamma \pi_{E_i} P \right)^{-1} V - \gamma \left(I_{\mathcal{S}} - \gamma \pi_{E_i} P \right)^{-1} \pi_{E_i}PV \\ & = \left(I_{\mathcal{S}} - \gamma \pi_{E_i} P \right)^{-1} (I_\mathcal{S} - \gamma \pi_{E_i}P)V \\ & = V.
    \end{align*}
    Plugging this result within Equation \eqref{eq:explicit-eq6}, we arrive at:
    \begin{align*}
        \left(I_{\mathcal{S}} - \gamma \pi_{E_i} P \right)^{-1} \pi_{E_i} \left( \bar{B}^{\pi_{E_1}} \zeta \right) \le \mathbf{1}_S \xi_i,
    \end{align*} 
    which concludes the proof.
\end{proof}

\subsection{Other Assumptions on the Behavior of the Sub-optimal Experts}
In this section, we investigate the generality of the results presented in Theorem \ref{theorem:multiple-expert-explicit}. Specifically, we remark that all the results that we derived in Section \ref{sec:reward-set} can be easily extended to other assumptions on the sub-optimal experts. More specifically, the results of Theorem \ref{theorem:multiple-expert-explicit} can easily be extended to the following cases:
\begin{align}\label{ass:inequality}
V_{\mathcal{M} \cup r}^{\pi_{E_1}} - V_{\mathcal{M} \cup r}^{\pi_{E_i}} \ge \bm{1}_\mathcal{S} \xi_i,
\end{align}
or 
\begin{align}\label{ass:equality}
V_{\mathcal{M} \cup r}^{\pi_{E_1}} - V_{\mathcal{M} \cup r}^{\pi_{E_i}} = \bm{1}_\mathcal{S} \xi_i.
\end{align}
Equation \eqref{ass:inequality} encodes the fact that that a given sub-optimal expert $i$ is at least $\xi_i$ sub-optimal w.r.t. the optimal policy $\pi_{E_1}$, while Equation \eqref{ass:equality} encodes the the fact that the sub-optimal expert $i$ is exactly $\xi_i$ sub-optimal w.r.t. the optimal agent.
In these cases, it is possible to derive the following generalizations of Theorem \ref{theorem:multiple-expert-explicit}.

\begin{theorem}\label{theorem:multiple-expert-explicit-inequality}
Let $\bar{\mathfrak{B}}$ be an IRL problem with sub-optimal experts where $V_{\mathcal{M} \cup r}^{\pi_{E_1}} - V_{\mathcal{M} \cup r}^{\pi_{E_i}} \ge \bm{1}_\mathcal{S} \xi_i$ holds for all sub-optimal experts $i$. Let $r \in [0,1]^{\mathcal{S} \times \mathcal{A}}$. Then, $r \in \mathcal{R}_{\bar{\mathfrak{B}}}$ if and only if there exists $\zeta \in \mathbb{R}^{\mathcal{S} \times \mathcal{A}}_{\ge 0}$ and $V \in \mathbb{R}^{\mathcal{S}}$ such that the following conditions are satisfied:
\begin{align}
    r = -\bar{B}^{\pi^{E_1}} \zeta  + (E-\gamma P)V,
\end{align}
and, for all $i \in \left\{ 2, \dots, n+1 \right\}$:
\begin{align}
    d^{\pi_{E_i}} \pi_{E_i}  \bar{B}^{\pi_{E_1}} \zeta  \ge \mathbf{1}_{S} \xi_i.
\end{align}
\end{theorem}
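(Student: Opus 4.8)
The plan is to mirror the proof of Theorem \ref{theorem:multiple-expert-explicit} essentially verbatim, changing only the direction of the final inequality and tracking where that reversal first enters. First I would observe that the necessity of the reward decomposition $r = -\bar{B}^{\pi_{E_1}} \zeta + (E - \gamma P)V$ is \emph{identical} in both settings: it expresses solely the optimality of $\pi_{E_1}$ in $\mathcal{M} \cup r$ (via Lemma \ref{lemma:metelli-explicit}), which is unchanged by whatever assumption we impose on the sub-optimal experts. So I would condition on Equation \eqref{eq:explicit-eq1} holding and focus entirely on translating the new sub-optimality condition $V_{\mathcal{M} \cup r}^{\pi_{E_1}} - V_{\mathcal{M} \cup r}^{\pi_{E_i}} \ge \mathbf{1}_\mathcal{S} \xi_i$ into a constraint on $\zeta$.

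Next I would set up the analogue of Lemma \ref{lemma:implicit-multiple-experts-q} for this reversed assumption. Here the new condition, combined with the optimality of $\pi_{E_1}$, becomes
\begin{align*}
Q_{\mathcal{M} \cup r}^{\pi_{E_1}}(s,a) \ge V_{\mathcal{M} \cup r}^{\pi_{E_i}}(s) + \xi_i,
\end{align*}
required to hold for all $(s,a)$ playable by $\pi_{E_1}$ (where $Q^{\pi_{E_1}} = V^{\pi_{E_1}}$). The key subtlety is the sign handling when I split into the cases $\pi_{E_1}(a|s) > 0$ and $\pi_{E_1}(a|s) = 0$. Using $Q_{\mathcal{M} \cup r}^{\pi_{E_1}} = -\bar{B}^{\pi_{E_1}} \zeta + EV$, the condition reads $-\bar{B}^{\pi_{E_1}} \zeta + EV \ge EV_{\mathcal{M} \cup r}^{\pi_{E_i}} + E\mathbf{1}_S \xi_i$. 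On the support of $\pi_{E_1}$ this gives $V(s) \ge V_{\mathcal{M} \cup r}^{\pi_{E_i}}(s) + \xi_i$; off the support it gives $-\zeta(s,a) + V(s) \ge V_{\mathcal{M} \cup r}^{\pi_{E_i}}(s) + \xi_i$. Since $\zeta \ge 0$, the off-support inequality is now the \emph{stronger} of the two, so I cannot simply discard it as in the original proof. This is exactly where the reversal bites, and it is the main obstacle: I must argue that the binding constraint is the one incorporating $\zeta$, i.e. $V(s) - \zeta(s,a) \ge V_{\mathcal{M} \cup r}^{\pi_{E_i}}(s) + \xi_i$ for $(s,a)$ off the support of $\pi_{E_1}$, and then check that this is what propagates to the claimed bound.

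Once the correct binding inequality is isolated, I would substitute $V_{\mathcal{M} \cup r}^{\pi_{E_i}} = (I_\mathcal{S} - \gamma \pi_{E_i} P)^{-1} \pi_{E_i} r$, plug in the decomposition \eqref{eq:explicit-eq1}, and reuse the telescoping identity
\begin{align*}
(I_\mathcal{S} - \gamma \pi_{E_i} P)^{-1} \pi_{E_i} (E - \gamma P) V = V
\end{align*}
from the proof of Theorem \ref{theorem:multiple-expert-explicit}, which holds verbatim since it does not depend on the inequality direction. This collapses everything to $(I_\mathcal{S} - \gamma \pi_{E_i} P)^{-1} \pi_{E_i} \bar{B}^{\pi_{E_1}} \zeta \ge \mathbf{1}_S \xi_i$, i.e. $d^{\pi_{E_i}} \pi_{E_i} \bar{B}^{\pi_{E_1}} \zeta \ge \mathbf{1}_S \xi_i$, as claimed. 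The only genuine care needed beyond the original argument is verifying that the matrix $(I_\mathcal{S} - \gamma \pi_{E_i} P)^{-1} = \sum_{t \ge 0} (\gamma \pi_{E_i} P)^t$ is entrywise non-negative, so that the reversed inequality survives the application of the resolvent operator; this monotonicity is what guarantees the off-support constraint indeed reduces cleanly to the stated form and confirms that the approach transfers without further complications. I expect the equality case \eqref{ass:equality} to follow immediately by combining this theorem with Theorem \ref{theorem:multiple-expert-explicit}, intersecting the $\le$ and $\ge$ constraint sets.
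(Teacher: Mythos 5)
Your setup is sound (necessity of the decomposition via Lemma~\ref{lemma:metelli-explicit}, conditioning on Eq.~\eqref{eq:explicit-eq1}, the telescoping identity, monotonicity of the resolvent), and you correctly sensed that the on/off-support case split is where the reversal matters. But your resolution of that case split is wrong, and carried out as written the proof fails. The assumption \eqref{ass:inequality} constrains only \emph{state-wise} quantities, $V^{\pi_{E_1}}_{\mathcal{M}\cup r}(s) - V^{\pi_{E_i}}_{\mathcal{M}\cup r}(s) \ge \xi_i$; in $Q$-function form this is equivalent (given optimality of $\pi_{E_1}$) to $Q^{\pi_{E_1}}_{\mathcal{M}\cup r}(s,a) \ge V^{\pi_{E_i}}_{\mathcal{M}\cup r}(s) + \xi_i$ \emph{only at on-support pairs}, where $Q^{\pi_{E_1}} = V^{\pi_{E_1}} = V$. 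The reason Lemma~\ref{lemma:implicit-multiple-experts-q} could state its condition (iii) over \emph{all} $(s,a)$ in the $\le$ case is that optimality ($Q^{\pi_{E_1}} \le EV^{\pi_{E_1}}$ element-wise) makes the all-pairs version equivalent to the state-wise one there; with $\ge$ that equivalence breaks, and the all-pairs version is strictly stronger than the definition of $\mathcal{R}_{\bar{\mathfrak{B}}}$. So there is no off-support constraint to retain at all: the only inequality that propagates is the on-support one, $V(s) \ge V^{\pi_{E_i}}_{\mathcal{M}\cup r}(s) + \xi_i$, and the telescoping substitution then gives exactly $d^{\pi_{E_i}}\pi_{E_i}\bar{B}^{\pi_{E_1}}\zeta \ge \mathbf{1}_S\xi_i$. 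That is the sense in which the paper's argument is ``identical'' to the proof of Theorem~\ref{theorem:multiple-expert-explicit}.

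If instead you keep the off-support inequality $V(s) - \zeta(s,a) \ge V^{\pi_{E_i}}_{\mathcal{M}\cup r}(s) + \xi_i$ as the ``binding'' one and propagate it, then substituting $V^{\pi_{E_i}}_{\mathcal{M}\cup r} = V - d^{\pi_{E_i}}\pi_{E_i}\bar{B}^{\pi_{E_1}}\zeta$ yields
\begin{align*}
\left(d^{\pi_{E_i}}\pi_{E_i}\bar{B}^{\pi_{E_1}}\zeta\right)(s) \ \ge\ \zeta(s,a) + \xi_i \qquad \forall (s,a):\ \pi_{E_1}(a|s)=0,
\end{align*}
which does \emph{not} collapse to the claimed bound: it is strictly stronger, and the ``only if'' direction of the theorem would be false for that characterization. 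Concretely, take a single state $s$ with actions $a_1,a_2,a_3$, all transitions self-loops, $\pi_{E_1}(a_1|s)=1$, $\pi_{E_i}(a_2|s)=1$, and $r(s,a_1)=1$, $r(s,a_2)=r(s,a_3)=0$. Then $V^{\pi_{E_1}}_{\mathcal{M}\cup r}(s)-V^{\pi_{E_i}}_{\mathcal{M}\cup r}(s) = (1-\gamma)^{-1}$, the decomposition gives $\zeta(s,a_2)=\zeta(s,a_3)=1$ and $\left(d^{\pi_{E_i}}\pi_{E_i}\bar{B}^{\pi_{E_1}}\zeta\right)(s) = (1-\gamma)^{-1}$, so for any $\xi_i \in \left(\gamma(1-\gamma)^{-1},\, (1-\gamma)^{-1}\right]$ this $r$ belongs to $\mathcal{R}_{\bar{\mathfrak{B}}}$ and satisfies the theorem's constraint, yet violates your off-support constraint at $(s,a_3)$. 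A clean way to sidestep the whole issue: under Eq.~\eqref{eq:explicit-eq1} one has the identity $V^{\pi_{E_1}}_{\mathcal{M}\cup r} - V^{\pi_{E_i}}_{\mathcal{M}\cup r} = d^{\pi_{E_i}}\pi_{E_i}\bar{B}^{\pi_{E_1}}\zeta$, from which all three variants ($\le$, $\ge$, $=$) are immediate; your final remark about the equality case then goes through as stated.
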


\begin{theorem}\label{theorem:multiple-expert-explicit-equality}
Let $\bar{\mathfrak{B}}$ be an IRL problem with sub-optimal experts where $V_{\mathcal{M} \cup r}^{\pi_{E_1}} - V_{\mathcal{M} \cup r}^{\pi_{E_i}} = \bm{1}_\mathcal{S} \xi_i$ holds for all sub-optimal experts $i$. Let $r \in [0,1]^{\mathcal{S} \times \mathcal{A}}$. Then, $r \in \mathcal{R}_{\bar{\mathfrak{B}}}$ if and only if there exists $\zeta \in \mathbb{R}^{\mathcal{S} \times \mathcal{A}}_{\ge 0}$ and $V \in \mathbb{R}^{\mathcal{S}}$ such that the following conditions are satisfied:
\begin{align}
    r = -\bar{B}^{\pi^{E_1}} \zeta  + (E-\gamma P)V,
\end{align}
and, for all $i \in \left\{ 2, \dots, n+1 \right\}$:
\begin{align}\label{eq:stronger-eq1}
    d^{\pi_{E_i}} \pi_{E_i}  \bar{B}^{\pi_{E_1}} \zeta = \mathbf{1}_{S} \xi_i.
\end{align}
\end{theorem}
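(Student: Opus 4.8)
The plan is to mirror the proof of Theorem \ref{theorem:multiple-expert-explicit}, replacing the one-sided inequality by a two-sided equality throughout; since the defining condition is now itself a $V$-function equality, the argument is in fact somewhat cleaner. First I would observe that the reward decomposition $r = -\bar{B}^{\pi_{E_1}}\zeta + (E-\gamma P)V$ is, by Lemma \ref{lemma:metelli-explicit}, equivalent to $\pi_{E_1}$ being optimal for $\mathcal{M} \cup r$. This handles the optimality requirement for membership in $\mathcal{R}_{\bar{\mathfrak{B}}}$ and establishes the first displayed condition as necessary and sufficient, exactly as in the inequality case. From here on I would condition on this decomposition holding.

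The key preliminary computation is that, under the decomposition, the optimal expert's value collapses to $V$. By Lemma \ref{lemma:metelli-q-funct} we have $Q^{\pi_{E_1}}_{\mathcal{M}\cup r} = -\bar{B}^{\pi_{E_1}}\zeta + EV$; applying the operator $\pi_{E_1}$ and using $\pi_{E_1}\bar{B}^{\pi_{E_1}}\zeta = 0$ (the term $\bar{B}^{\pi_{E_1}}\zeta$ is supported on actions with $\pi_{E_1}(a|s)=0$, which $\pi_{E_1}$ weights by zero) together with $\pi_{E_1}EV = V$, I obtain $V^{\pi_{E_1}}_{\mathcal{M}\cup r} = V$. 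Consequently the defining equality $V^{\pi_{E_1}}_{\mathcal{M}\cup r} - V^{\pi_{E_i}}_{\mathcal{M}\cup r} = \mathbf{1}_S \xi_i$ becomes $V - V^{\pi_{E_i}}_{\mathcal{M}\cup r} = \mathbf{1}_S \xi_i$.

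It then remains to rewrite $V^{\pi_{E_i}}_{\mathcal{M}\cup r}$ in terms of $\zeta$ and $V$. Using the Bellman equation $V^{\pi_{E_i}}_{\mathcal{M}\cup r} = d^{\pi_{E_i}}\pi_{E_i} r$, substituting the reward decomposition, and invoking the cancellation identity $d^{\pi_{E_i}}\pi_{E_i}(E-\gamma P)V = V$ (which follows since $\pi_{E_i}(E-\gamma P)V = (I_S - \gamma \pi_{E_i}P)V$ and $d^{\pi_{E_i}} = (I_S - \gamma\pi_{E_i}P)^{-1}$, exactly as derived in the proof of Theorem \ref{theorem:multiple-expert-explicit}), I get $V^{\pi_{E_i}}_{\mathcal{M}\cup r} = V - d^{\pi_{E_i}}\pi_{E_i}\bar{B}^{\pi_{E_1}}\zeta$. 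Plugging this into $V - V^{\pi_{E_i}}_{\mathcal{M}\cup r} = \mathbf{1}_S \xi_i$, the $V$ terms cancel and leave precisely $d^{\pi_{E_i}}\pi_{E_i}\bar{B}^{\pi_{E_1}}\zeta = \mathbf{1}_S \xi_i$, which is Equation \eqref{eq:stronger-eq1}. Since every step in this chain is an equivalence, both directions of the ``if and only if'' follow simultaneously.

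The proof is essentially routine once the structure above is in place, and I do not anticipate a genuine obstacle. The one point requiring care---and the only place where the equality case departs from the inequality proof---is that I cannot discard the constraints coming from actions with $\pi_{E_1}(a|s)=0$ by appealing to $\zeta \ge 0$, as was done for \eqref{eq:explicit-eq2}; here the condition must be imposed exactly. Working at the level of $V$-functions rather than $Q$-functions sidesteps this entirely, since the defining assumption \eqref{ass:equality} is already a per-state equality, so the reduction to \eqref{eq:stronger-eq1} is direct. I would also double check that the Bellman inversion and the cancellation identity are legitimate, i.e. that $I_S - \gamma\pi_{E_i}P$ is invertible (guaranteed for $\gamma \in [0,1)$), which is the same regularity used throughout the single-expert analysis.
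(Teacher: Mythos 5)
Your proof is correct. The paper itself proves this statement only by asserting that the argument of Theorem \ref{theorem:multiple-expert-explicit} carries over unchanged, and your proposal is essentially that argument: Lemma \ref{lemma:metelli-explicit} handles the optimality part, and the cancellation identity $d^{\pi_{E_i}}\pi_{E_i}(E-\gamma P)V = V$ turns the sub-optimality condition into the constraint on $\zeta$. The one place where you depart from a literal transcription is exactly the place where a literal transcription would fail, and you are right to flag it: the proof of Theorem \ref{theorem:multiple-expert-explicit} passes through the $Q$-function condition of Lemma \ref{lemma:implicit-multiple-experts-q} and discards the constraints at pairs with $\pi_{E_1}(a|s)=0$ by appealing to $\zeta \ge 0$. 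In the equality case that step is not available: imposing the $Q$-level equality at \emph{all} state-action pairs would, together with the equality at actions supported by $\pi_{E_1}$, force $\zeta(s,a)=0$ whenever $\pi_{E_1}(a|s)=0$, which collapses \eqref{eq:stronger-eq1} to $0=\mathbf{1}_S\xi_i$ and would empty the feasible set. Working at the level of $V$-functions, where hypothesis \eqref{ass:equality} is stated, avoids this entirely, and your chain $V^{\pi_{E_1}}_{\mathcal{M}\cup r}=V$ and $V^{\pi_{E_i}}_{\mathcal{M}\cup r}=V-d^{\pi_{E_i}}\pi_{E_i}\bar{B}^{\pi_{E_1}}\zeta$, with every step an equivalence, gives both directions at once. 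In short: same approach as the paper, with a correct resolution of the one subtlety that the paper's ``identical proof'' remark glosses over.
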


The proofs of Theorem \ref{theorem:multiple-expert-explicit-inequality} and \ref{theorem:multiple-expert-explicit-equality} are identical to the one of Theorem \ref{theorem:multiple-expert-explicit}. In terms of results, the only difference lies in the fact that the set of linear constraints introduces a different type of relationship between $\zeta$ and $\xi_i$.\footnote{As a consequence of these results, we notice that it is direct to extend the properties of the feasible reward set to the case in which, e.g., for some experts it holds that $V_{\mathcal{M} \cup r}^{\pi_{E_1}} - V_{\mathcal{M} \cup r}^{\pi_{E_i}}\le \xi_i$, while for other experts it holds that $V_{\mathcal{M} \cup r}^{\pi_{E_1}} - V_{\mathcal{M} \cup r}^{\pi_{E_i}} \ge \xi_i$.} 

At this point, we remark that Theorem \ref{theorem:multiple-expert-explicit-inequality} follows a very similar interpretation of the one we presented in Section \ref{sec:reward-set}. Concerning Theorem \ref{theorem:multiple-expert-explicit-equality}, instead, we notice that the result is much more stronger w.r.t. to the case in which inequalities are involved in the problem (i.e., Theorem \ref{theorem:multiple-expert-explicit} and Theorem \ref{theorem:multiple-expert-explicit-inequality}). More specifically, we notice that in this case, starting from Equation \eqref{eq:stronger-eq1}, it is possible to obtain the following result on the values of $\zeta$:
\begin{align}
\pi_{E_i}  \bar{B}^{\pi_{E_1}} \zeta = (I_{\mathcal{S}} -\gamma \pi_{E_i} P )\mathbf{1}_{S} \xi_i.
\end{align}
Developing this constraint for a specific state $s$, we obtain the following linear constraint:
\begin{align}\label{eq:stronger-2}
\sum_{a: \pi_{E_1}(a|s) = 0} \pi_{E_i}(a|s) \zeta(s,a) = \xi_i (1 - \gamma).
\end{align}
In other words, it provides a set of hard constraints that the values of $\zeta$ should satisfy. 
At this point, we remark on two important observations. The first one is that Equation \eqref{eq:stronger-2} might not be satisfied for any choice of $\zeta \in \mathbb{R}^{S \times A}$. In particular, suppose that there is only one sub-optimal expert. In this case, if $\pi_{E_1} = \pi_{E_i}$, Equation \eqref{eq:stronger-2} reduces to:
\begin{align*}
0 = \xi_i (1-\gamma),
\end{align*}
which is clearly false for any strictly positive value of $\xi_i$. We notice that this result is as expected. If the two policies are identical, then there should be no gap in the performance of $\pi_{E_1}$ and $\pi_{E_i}$. As a consequence, the feasible reward set of this IRL problem is empty.\footnote{Notice that, to obtain an empty feasible region, it is sufficient that the two experts behave identically in a single state-action pair. This is a direct consequence of the fact that the sub-optimality constraint is imposed with equality for each state-action pair.}
Secondly, instead, suppose that all the experts are deterministic, and suppose that they all behave differently to $\pi_{E_1}$ in each state-action pair (so that the feasible reward set is non-empty). Then, focus on the $i$-th expert and consider a state-action pair $(s,a)$ such that $\pi_{E_i}(a|s)=1$. Then, Equation \eqref{eq:stronger-2} reduces to:
\begin{align*}
\zeta(s,a) = \xi_i (1-\gamma),
\end{align*}
In this sense, the presence of the sub-optimal expert implies a unique value that $\zeta(s,a)$ can take.\footnote{Notice, in this sense, that, if any other deterministic sub-optimal expert $j$ is present, if it holds that $\pi_{E_j}(a|s) = \pi_{E_i} (a|s) = 1$, then we should have $\xi_i = \xi_j$ to avoid obtaining an empty reward set.} It follows that, is for each state-action pair $(s,a)$ such that $\pi_{E_1}(a|s) = 0$, there exists a single sub-optimal expert $i$ such that $\pi_{E_i}(a|s) = 1$, we are able to recover entirely a unique vector $\bar{\zeta}$ that is compatible with the underlying IRL problem. In other words, in this scenario, we are able to exactly recover the values of the advantage function that express the sub-optimality gaps of actions that are not played by the optimal agent.

\subsection{Measuring Volumes of the Feasible Values of $\zeta$}
Finally, we conclude this section by reporting an additional analysis that quantitatively measure the reduction in the feasible values of $\zeta$ that are compatible with the presence of the sub-optimal experts. 

To begin, we first introduce some notation. Let $\mathcal{X}$ be a measurable subset of $\mathbb{R}^{n}$, we denote with $\textrm{Vol}(\mathcal{X})$ the Lebesgue measure of $\mathcal{X}$ \citep{doob2012measure}. In other words, $\textrm{Vol}(\mathcal{X})$ represents the $n$-dimensional volume of $\mathcal{X}$.

At this point, from Theorem \ref{theorem:multiple-expert-explicit}, we know that the presence of sub-optimal experts can effectively limit the values that $\zeta$ can assume. Consequently, in order to measure the reduction of the feasible reward set, we will compute upper bounds on the volume of the region of $\zeta$ that induces at least one feasible reward function in $\mathcal{R}_{\bar{\mathfrak{B}}}$. In the remainder of this section, for a generic IRL problem $\mathfrak{B}$, we will denote with $Z(\mathfrak{B})$ such set. More specifically, we define:
\begin{align*}
Z(\mathfrak{B}) = \left\{ \zeta \in \mathbb{R}^{S \times A}_{\ge 0}: \exists r \in \mathcal{R}_{\mathfrak{B}}: \exists  V \in \mathbb{R}^{S} : r = -\bar{B}^{\pi_{E_1}} \zeta + (E-\gamma P)V \textup{ and } \zeta(s,a) = 0 \textup{ } \forall (s,a): \pi_{E_1}(a|s) = 0\right\}.
\end{align*}
Notice that, we are directly restricting the analysis to state-action pairs for which $\pi_{E_1}(a|s) = 0$ holds. Indeed, fix $(s,a)$ such that $\pi_{E_1}(a|s) = 0$ holds. Then, changing the value of $\zeta(s,a)$ does not affect class of compatible reward functions.

At this point, the following proposition provides upper bounds on the volume of the feasible values of $\zeta$ for IRL and IRL-SE problems.
\begin{proposition}\label{prop:volumes}
Let $\mathfrak{B}$ and $\bar{\mathfrak{B}}$ be an IRL and an IRL-SE problem. Let $g(s,a)$ be defined as follows:
\begin{align*}
g(s,a) \coloneqq \begin{cases}
						\min\left\{ k(s,a) , \frac{1}{1-\gamma}\right\} & \quad \textup{if } (s,a): \pi_{E_1}(a|s) = 0 \textup{ and } \Big|\mathcal{X}(s,a)\Big| > 0 \\ 
						\frac{1}{1-\gamma} &  \quad \textup{otherwise}
				  \end{cases},
\end{align*}
where $k(s,a)$ is defined as in Equation \eqref{eq:ksa}, and $\mathcal{X}(s,a)$ denotes the subset of sub-optimal experts such that $\pi_{E_i}(a|s) > 0$. Then, it holds that:
\begin{align}
& \textup{Vol}(Z(\mathfrak{B})) \le \prod_{(s,a): \pi_{E_1}(s,a) = 0} \frac{1}{1-\gamma}, \label{eq:vol1} \\ 
& \textup{Vol}(Z(\bar{\mathfrak{B}})) \le \prod_{(s,a): \pi_{E_1}(s,a) = 0} g(s,a) \label{eq:vol2}. 
\end{align}
\end{proposition}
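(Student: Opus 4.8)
The plan is to over-approximate each set $Z(\mathfrak{B})$ and $Z(\bar{\mathfrak{B}})$ by an axis-aligned box and then exploit the fact that the Lebesgue measure of a box factorizes as the product of its side lengths. By definition every $\zeta$ in either set is fixed to zero on the coordinates $(s,a)$ with $\pi_{E_1}(a|s) > 0$, so the only free coordinates are those with $\pi_{E_1}(a|s) = 0$, and the relevant volume is measured in the subspace they span. Hence it suffices to exhibit, for each such coordinate, an interval $[0,\cdot]$ that contains all admissible values of $\zeta(s,a)$; the product of the lengths of these intervals then upper bounds the volume. This reduces both claims to producing the correct per-coordinate upper bounds and checking that they assemble into a product.

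For Equation \eqref{eq:vol1}, I would first invoke the characterization of Lemma \ref{lemma:metelli-explicit} (equivalently Equation \eqref{eq:reward-char}): any feasible $\zeta$ is non-negative and, on coordinates with $\pi_{E_1}(a|s) = 0$, satisfies $-\zeta(s,a) = A^{\pi_{E_1}}_{\mathcal{M} \cup r}(s,a)$. Since rewards lie in $[0,1]$, the $Q$- and $V$-functions lie in $[0,(1-\gamma)^{-1}]$, so the advantage is bounded in absolute value by $(1-\gamma)^{-1}$; combined with $\zeta \ge 0$ this yields $\zeta(s,a) \in [0,(1-\gamma)^{-1}]$ for every relevant coordinate. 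Thus $Z(\mathfrak{B})$ is contained in the box $\prod_{(s,a):\pi_{E_1}(a|s)=0}[0,(1-\gamma)^{-1}]$, whose volume is exactly the claimed product.

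For Equation \eqref{eq:vol2}, I would retain the box of the previous step and additionally invoke Theorem \ref{theorem:multiple-expert-explicit}: every $\zeta$ inducing a feasible reward must satisfy the linear constraints of Equation \eqref{eq:explicit-eq2}, which in scalar form read $\sum_{s} d^{\pi_{E_i}}_{s'}(s)\sum_{a:\pi_{E_1}(a|s)=0}\pi_{E_i}(a|s)\zeta(s,a) \le \xi_i$ for all $i$ and all $s'$ (Equation \eqref{eq:interpretation}). The key step is to decouple these joint constraints into per-coordinate bounds: since every summand is a product of non-negative quantities ($d^{\pi_{E_i}}_{s'} \ge 0$, $\pi_{E_i} \ge 0$, $\zeta \ge 0$), I can discard all but a single term to obtain $d^{\pi_{E_i}}_{s'}(s)\pi_{E_i}(a|s)\zeta(s,a) \le \xi_i$, hence $\zeta(s,a) \le \xi_i/(d^{\pi_{E_i}}_{s'}(s)\pi_{E_i}(a|s))$ whenever $\pi_{E_i}(a|s) > 0$. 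Minimizing over $i \in \mathcal{X}(s,a)$ and $s' \in \mathcal{S}$ gives $\zeta(s,a) \le k(s,a)$ as in Equation \eqref{eq:ksa}, and intersecting with the earlier bound $(1-\gamma)^{-1}$ gives $\zeta(s,a) \le g(s,a)$ on coordinates with $|\mathcal{X}(s,a)| > 0$; on the remaining relevant coordinates only $\zeta(s,a) \le (1-\gamma)^{-1} = g(s,a)$ survives. Therefore $Z(\bar{\mathfrak{B}}) \subseteq \prod_{(s,a):\pi_{E_1}(a|s)=0}[0,g(s,a)]$ and the volume bound follows by factorization.

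The main obstacle is precisely the decoupling in the last paragraph: the constraints of Equation \eqref{eq:explicit-eq2} couple many coordinates of $\zeta$ through the occupancy measure $d^{\pi_{E_i}}_{s'}$, so they define a polytope rather than a box. Non-negativity of all the involved quantities is what lets me drop terms and replace this coupled polytope by a (larger) axis-aligned box whose volume factorizes; this is also exactly why the result is an inequality rather than an equality, since the box strictly over-approximates the true feasible region in general. A secondary point to handle carefully is the measure-theoretic bookkeeping: because $\zeta$ is constrained to vanish on the coordinates with $\pi_{E_1}(a|s) > 0$, the volume must be interpreted in the lower-dimensional subspace indexed by $\{(s,a):\pi_{E_1}(a|s)=0\}$, which is exactly the index set appearing in the products of Equations \eqref{eq:vol1} and \eqref{eq:vol2}.
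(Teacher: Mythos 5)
Your proof is correct and follows essentially the same route as the paper: enclose the set of admissible $\zeta$'s in an axis-aligned box using the per-coordinate bounds $\zeta(s,a)\le (1-\gamma)^{-1}$ for Equation \eqref{eq:vol1} and $\zeta(s,a)\le g(s,a)$ for Equation \eqref{eq:vol2}, and then bound the volume by the product of the side lengths. The only difference is that you spell out the derivation of these bounds (the advantage-function interpretation and the term-dropping decoupling of Equation \eqref{eq:interpretation}), whereas the paper simply cites them from Section \ref{sec:explicit}; the substance is identical.
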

\begin{proof}
The proof of Equation \eqref{eq:vol1} follows directly by noticing that $||\zeta||_\infty \le (1-\gamma)^{-1}$. Therefore, as a consequence:
\begin{align*}
\textup{Vol}(Z(\mathfrak{B})) \le \prod_{(s,a): \pi_{E_1}(a|s) = 0} \int_{0}^{(1-\gamma)^{-1}} 1 dx = \prod_{(s,a): \pi_{E_1}(s,a) = 0} \frac{1}{1-\gamma}.
\end{align*}

Equation \eqref{eq:vol2}, instead, follows from worst-case upper bounds on $\zeta(s,a)$ that arise from Equation \eqref{eq:explicit-eq2}. Specifically, as discussed in Section \ref{sec:explicit}, it is possible to show that:
\begin{align*}
\zeta(s,a) \le g(s,a).
\end{align*}
As a consequence, we have that:
\begin{align*}
\textup{Vol}(Z(\bar{\mathfrak{B}})) \le \prod_{(s,a): \pi_{E_1}(a|s) = 0} \int_{0}^{g(s,a)} 1 dx = \prod_{(s,a): \pi_{E_1}(s,a) = 0} g(s,a),
\end{align*}
which concludes the proof.
\end{proof}

At this point, we recall that $k(s,a)$ is given by:
\begin{align*}
k(s,a) \coloneqq \min_{i \in \mathcal{X}(s,a), s' \in \mathcal{S}} \frac{\xi_i}{d^{\pi_{E_i}}_{s'}(s)\pi_{E_i}(a|s)}.
\end{align*}
Therefore, Proposition \ref{prop:volumes} highlights that, for sufficiently small values of $\xi_i$, we obtain a notable reduction in the upper bounds of $\textup{Vol}(Z(\bar{\mathfrak{B}}))$.

\section{Proofs and Derivations of Section \ref{sec:learning}}\label{app:proof-app-sec-4}

In this section, we provide formal proofs for the statements of Section \ref{sec:learning}. We first proceed with the proof of the lower bound (Theorem \ref{theo:lb}), and then we continue with the analysis US-IRL-SE (Theorem \ref{theo:ub}).

\subsection{Proof of Theorem \ref{theo:lb}}
In this section, we provide a formal proof for Theorem \ref{theo:lb}.

As we have discussed, Theorem \ref{theo:lb} is composed of two parts. The first one is related to learning reward functions that are compatible with the policy of the optimal expert (i.e., Equation \eqref{eq:lb-one}), while the second one directly arises from the structure of the sub-optimal experts (i.e., Equation \eqref{eq:lb-two}). 

We begin by proving Equation \eqref{eq:lb-one}. As we have anticipated in Section \ref{sec:lower-bound}, Equation \eqref{eq:lb-one} is directly connected with the problem of learning reward functions that are compatible with the behavior of the optimal expert $\pi_{E_1}$. In this sense, we recall that, recently, \citet{metelli2023towards} have provided lower bounds for the single-agent IRL problem in finite-horizon MDPs. In our work, we provide an extension of these results for the infinite-horizon IRL formulation. 

\begin{theorem}\label{theo:lb-opt-exp}
Let $\mathfrak{A}$ be a $(\epsilon, \delta)$-correct algorithm for the IRL problem with sub-optimal experts. There exists a problem instance $\bar{\mathfrak{B}}$ such that the expected sample complexity is lower bounded by:
\begin{align}
	\mathbb{E}_{\mathfrak{A}, \bar{\mathfrak{B}}}[\tau] \ge {\Omega} \left(\frac{SA}{\epsilon^2 (1-\gamma)^2} \left(\log\left( \frac{1}{\delta}\right) + S \right) \right).
\end{align}
\end{theorem}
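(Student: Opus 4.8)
The plan is to reduce the statement to a lower bound for the single-agent IRL problem and then attack the two additive terms separately with standard information-theoretic machinery, adapted from the finite-horizon construction of \citet{metelli2023towards} to the infinite-horizon discounted setting. First I would make the sub-optimal experts uninformative: by the second part of Example~\ref{exe:1}, choosing $\xi_i \ge (1-\gamma)^{-1}$ for every $i$ renders condition (iii) of Lemma~\ref{lemma:implicit-multiple-experts} vacuous, so $\mathcal{R}_{\bar{\mathfrak{B}}} = \mathcal{R}_{\mathfrak{B}}$ collapses to the single-agent feasible set characterized by Equation~\eqref{eq:reward-char}, i.e.\ $r = -\bar{B}^{\pi_{E_1}}\zeta + (E-\gamma P)V$. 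On such instances, recovering $\mathcal{R}_{\bar{\mathfrak{B}}}$ in $H_\infty$ amounts to estimating the shaping term $\gamma P V$ (hence the transition kernel $p$) and the support of $\pi_{E_1}$ (which controls $\bar{B}^{\pi_{E_1}}$). I would then invoke the usual $(\epsilon,\delta)$-to-KL reduction (a transportation/change-of-measure lemma in the spirit of Garivier--Kaufmann): for any two instances $\bar{\mathfrak{B}}, \bar{\mathfrak{B}}'$ that an $(\epsilon,\delta)$-correct algorithm must distinguish, $\sum_{(s,a)} \mathbb{E}_{\mathfrak{A},\bar{\mathfrak{B}}}[N_\tau(s,a)]\, \mathrm{KL}\big(P_{s,a} \,\|\, P'_{s,a}\big) \ge \mathrm{kl}(\delta, 1-\delta) = \Omega(\log(1/\delta))$.

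For the $\frac{SA}{\epsilon^2(1-\gamma)^2}\log(1/\delta)$ term I would use a two-point (Le Cam / Bretagnolle--Huber) construction localized at each state-action pair. Around a base instance I perturb the transition kernel at a single pair $(s,a)$ toward a ``good'' versus ``bad'' successor by an amount $\Theta(\epsilon(1-\gamma))$, calibrated so that the induced feasible reward sets are more than $2\epsilon$ apart in $H_\infty$ (via the change in $\gamma P V$ with $\|V\|_\infty = (1-\gamma)^{-1}$), while the per-sample KL between the two kernels is $O(\epsilon^2(1-\gamma)^2)$. Correctness then forces $\mathbb{E}[N_\tau(s,a)] = \Omega\!\big(\log(1/\delta)/(\epsilon^2(1-\gamma)^2)\big)$; since the perturbations at distinct pairs can be made simultaneously active and independent, summing over all $SA$ pairs yields the first term.

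For the $\delta$-independent $\frac{S^2A}{\epsilon^2(1-\gamma)^2}$ term I would replace the two-point argument with an Assouad-type (or Fano/Gilbert--Varshamov packing) argument that exposes the full dimension of each transition distribution. At every pair $(s,a)$ I build $2^{\Theta(S)}$ hypotheses by perturbing $p(\cdot|s,a)$ along each of its $S$ successor coordinates independently by $\pm\Theta(\epsilon(1-\gamma))$; each misidentified coordinate contributes a constant amount to $H_\infty(\mathcal{R}_{\bar{\mathfrak{B}}},\mathcal{R}_{\widehat{\bar{\mathfrak{B}}}_\tau})$ through the shaping term, so Assouad's lemma forces $\mathbb{E}[N_\tau(s,a)] = \Omega\!\big(S/(\epsilon^2(1-\gamma)^2)\big)$, and summing over the $SA$ pairs gives $\Omega\!\big(S^2A/(\epsilon^2(1-\gamma)^2)\big)$. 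Taking the maximum of the two regimes (equivalently, their sum up to constants) recovers the claimed $\Omega\!\big(\frac{SA}{\epsilon^2(1-\gamma)^2}(\log(1/\delta)+S)\big)$.

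The hard part will be the infinite-horizon bookkeeping rather than the abstract scheme. Unlike the layered finite-horizon MDPs of \citet{metelli2023towards}, where the horizon $H$ bounds every quantity and cleanly separates the contributions of different layers, here I must verify carefully that a $\Theta(\epsilon(1-\gamma))$ perturbation of $p(\cdot|s,a)$ produces a $\Theta(\epsilon)$ change of the feasible reward set measured by $H_\infty$ \emph{in both directions}, which requires controlling $(I_{\mathcal{S}}-\gamma\pi P)^{-1}$ and the propagation of the shaping $V$ through the discounted dynamics, and simultaneously that every constructed instance remains admissible (rewards in $[0,1]$ and $\|V\|_\infty, \|\zeta\|_\infty \le (1-\gamma)^{-1}$). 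Getting the $(1-\gamma)^{-2}$ scaling exactly right, and ensuring the feasible sets of competing hypotheses are genuinely $\epsilon$-separated while keeping the KL small, is the delicate technical step I expect to consume most of the effort.
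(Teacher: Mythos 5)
Your proposal is correct and essentially reproduces the paper's proof: after neutralizing the sub-optimal experts (you via $\xi_i \ge (1-\gamma)^{-1}$, the paper by letting every expert play the optimal action $a_1$ — both cases covered by Example~\ref{exe:1}), it adapts the two constructions of \citet{metelli2023towards} to the discounted setting, namely a per-pair two-point change-of-measure argument with $\Theta(\epsilon(1-\gamma))$ transition perturbations toward good/bad absorbing states for the $\frac{SA}{\epsilon^2(1-\gamma)^2}\log(1/\delta)$ term (the paper's Theorem~\ref{theo:lb-opt-exp-caso1}) and a sign-vector packing (Assouad/Fano-type) argument exposing the dimension of each $p(\cdot|s,a)$ for the $\frac{S^2A}{\epsilon^2(1-\gamma)^2}$ term (Theorem~\ref{theo:lb-opt-exp-caso2}). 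The final combination is also the same: the paper realizes your ``maximum of the two regimes'' by mixing the two instance families with probability $1/2$ each.
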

\begin{proof}
Similarly to \citet{metelli2023towards}, this results follows from two different lower bounds (i.e., Theorem \ref{theo:lb-opt-exp-caso1} and Theorem \ref{theo:lb-opt-exp-caso2}), and by assuming to observe instances like the ones of Theorem \ref{theo:lb-opt-exp-caso1} with probability $\frac{1}{2}$ and instances like the ones of Theorem \ref{theo:lb-opt-exp-caso2} with probability $\frac{1}{2}$.
\end{proof}

As discussed, Theorem \ref{theo:lb-opt-exp} follows from two intermediate results that leverage two different constructions (i.e., Theorem \ref{theo:lb-opt-exp-caso1} and Theorem \ref{theo:lb-opt-exp-caso2}). We now delve into the proofs of these intermediate theorems.

\begin{figure}[t]
\centering\includegraphics[width=15cm]{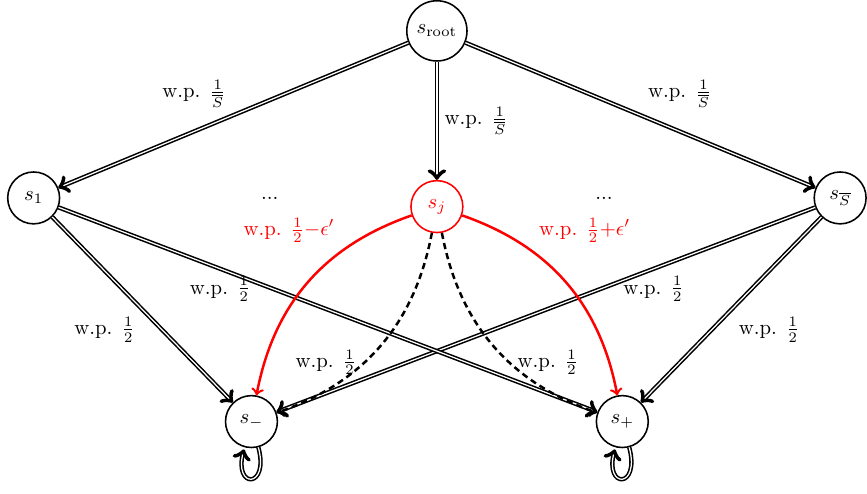} 
\vspace{.3in}
\caption{Representation of the IRL-SE problem for the instances used in Theorem \ref{theo:lb-opt-exp-caso1}.}
\label{fig:instance-opt-agent-uno}
\end{figure}

\begin{theorem}\label{theo:lb-opt-exp-caso1}
Let $\mathfrak{A}$ be a $(\epsilon, \delta)$-correct algorithm for the IRL problem with sub-optimal experts. There exists a problem instance $\bar{\mathfrak{B}}$ such that the expected sample complexity is lower bounded by:
\begin{align}
	\mathbb{E}_{\mathfrak{A}, \bar{\mathfrak{B}}}[\tau] \ge {\Omega} \left(\frac{SA}{\epsilon^2 (1-\gamma)^2} \log\left( \frac{1}{\delta}\right) \right).
\end{align}
\end{theorem}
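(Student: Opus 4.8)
The plan is to prove this lower bound via a standard change-of-measure (likelihood ratio / Le~Cam type) argument, tailored to the feasible-reward-set identification problem. The target rate $\frac{SA}{\epsilon^2(1-\gamma)^2}\log(1/\delta)$ is the classical ``$\log(1/\delta)$-style'' term, so the natural tool is a reduction to a family of hard instances that differ only slightly at each state-action pair, forcing any $(\epsilon,\delta)$-correct algorithm to gather $\Omega\!\left(\frac{1}{\epsilon^2(1-\gamma)^2}\log(1/\delta)\right)$ samples \emph{per} $(s,a)$ pair, and then summing over the $SA$ pairs.

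Concretely, I would build a base instance $\bar{\mathfrak{B}}$ (depicted in Figure~\ref{fig:instance-opt-agent-uno}) in which the sub-optimal experts play no active role, so that by Theorem~\ref{theorem:multiple-expert-explicit} the feasible set is governed purely by the single-expert characterization $r = -\bar{B}^{\pi_{E_1}}\zeta + (E-\gamma P)V$. For each state-action pair $(s,a)$ with $\pi_{E_1}(a|s)=0$, I would associate a perturbed transition kernel (or expert-action distribution) that shifts the value of the corresponding advantage component by roughly $\epsilon(1-\gamma)$, an amount calibrated so that the feasible reward sets of the base and perturbed instances are more than $\epsilon$ apart in Hausdorff distance $H_\infty$. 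The correctness requirement \eqref{pac:def} then forces the algorithm to statistically distinguish the base instance from each perturbation. The steps, in order, are: (i) specify the instance family and verify via Theorem~\ref{theorem:multiple-expert-explicit} that the feasible set reduces to the single-agent form; (ii) show that a perturbation of magnitude $\Theta(\epsilon(1-\gamma))$ in the relevant transition/advantage parameter produces an $H_\infty$-separation exceeding $\epsilon$; (iii) apply a change-of-measure inequality (e.g.\ the Bretagnolle--Huber or high-probability likelihood-ratio bound) relating the expected number of pulls $\mathbb{E}[N_\tau(s,a)]$ at a pair to the KL divergence between the two transition laws; (iv) lower bound the per-pair KL by $\Theta(\epsilon^2(1-\gamma)^2)$ and conclude $\mathbb{E}[N_\tau(s,a)] \ge \Omega\!\left(\frac{\log(1/\delta)}{\epsilon^2(1-\gamma)^2}\right)$; (v) sum over all $SA$ pairs using $\mathbb{E}[\tau]=\sum_{(s,a)}\mathbb{E}[N_\tau(s,a)]$.

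The main obstacle I expect is step (ii): controlling precisely how a local perturbation at a single $(s,a)$ pair propagates through the operator $(E-\gamma P)V$ and the advantage term $-\bar{B}^{\pi_{E_1}}\zeta$ to yield a guaranteed $H_\infty$ separation of the \emph{entire} feasible reward sets. Because the feasible set is a high-dimensional polytope-like object in reward space, I must exhibit an explicit reward $r^\star$ that lies in one feasible set but is $\epsilon$-far from the other, and verify this using the explicit characterization; this requires carefully choosing the free variables $\zeta,V$ at the perturbed coordinate so that the $(1-\gamma)$ factors from the discounted occupancy $d^{\pi}$ align correctly with the $\epsilon(1-\gamma)$ scaling of the perturbation. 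A secondary technical point is ensuring the perturbations at distinct pairs can be analyzed independently (or combined without loss of the $SA$ factor), which is why the standard approach treats each $(s,a)$ as a separate binary hypothesis test and sums the resulting per-pair sample-complexity lower bounds, exactly mirroring the finite-horizon construction of \citet{metelli2023towards} that I am adapting to the infinite-horizon discounted setting.
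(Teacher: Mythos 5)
Your proposal follows essentially the same route as the paper's proof: the paper likewise constructs a base instance in which all experts act identically (so the sub-optimal-expert constraints of Theorem \ref{theorem:multiple-expert-explicit} are vacuous and the feasible set reduces to the single-agent form), perturbs the transition kernel at each of the $\bar{S}\times A$ candidate pairs by $\epsilon' = \Theta(\epsilon(1-\gamma)/\gamma)$, exhibits an explicit reward in the perturbed feasible set to certify an $H_\infty$ separation of at least $2\epsilon$, and then defers the per-pair change-of-measure/KL step and the summation over pairs to the argument of Theorem B.2 in \citet{metelli2023towards}, exactly as you outline. The obstacle you flag in step (ii) is handled in the paper by the two-absorbing-state construction ($s_-$, $s_+$), which makes the propagation of the local perturbation through $(E-\gamma P)V$ explicit and yields the separation $\epsilon'\gamma/(1-\gamma)$ directly.
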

\begin{proof}
The proof is split into several steps. Since most of the arguments are borrowed from the proof of Theorem B.2 in \citet{metelli2023towards}, we only provide a short sketch that involves the main differences between the finite-horizon and the infinite horizon IRL model.

\paragraph{Step 1: Base Instance and Alternative Instances} 
We consider MDP\textbackslash R instances that are presented in Figure~\ref{fig:instance-opt-agent-uno}. More specifically, the state space is given by $\mathcal{S} = \left\{ s_{\textup{root}}, s_1, \dots, s_{\bar{S}}, s_{-}, s_{+} \right\}$, the action space is composed of $k$ actions $\mathcal{A} = \left\{ a_1, a_2, \dots, a_k \right\}$, the transition model is described below, and $\gamma \in (0, 1)$. 

In state $s_{\textup{root}}$ all actions have the same effect, and they lead, with probability $\frac{1}{\bar{S}}$ to a state in $\left\{ s_1, \dots, s_{\bar{S}} \right\}$. Similarly, in state $s_{-}$ and $s_{+}$ all actions have deterministic effect, and they all lead to $s_{-}$ and $s_{+}$ respectively. All the experts are deterministic and take action $a_1$ with probability $1$ in all states. We then consider a base instance, where, in each state $s_j \in \left\{s_1, \dots, s_{\bar{S}} \right\}$, all actions lead, with probability $\frac{1}{2}$ to $s_{-}$ and $s_{+}$.

We then consider a set of $\bar{S} \times A$ alternative instances by varying the behavior of the transition model in state-action pairs $(s_j, a_k) \in \left\{ s_1, \dots s_{\bar{S}}\right\} \times \mathcal{A}$. Specifically, by taking $a_k$, the agent will end up, with probability $\frac{1}{2} +\epsilon'$ in $s_+$, and, with probability $\frac{1}{2} - \epsilon'$ in $s_{-}$.

\paragraph{Step 2: Feasible Reward Set and Hausdorff Distance lower bound}
At this point, we study the structure of the feasible reward set that is compatible with the instances we described. Specifically, we are interested in the behavior of the reward function related to actions taken in states $s_j \in \left\{ s_1, \dots s_{\bar{S}}\right\}$. 

Specifically, for the base instance, we have:
\begin{align*}
r(s_j,a_1) + \frac{\gamma}{1-\gamma} \left[ \frac{1}{2}r(s_-, a_1) + \frac{1}{2}r(s_+, a_1)\right] \ge r(s_j,a_k) + \frac{\gamma}{1-\gamma} \left[ \frac{1}{2}r(s_-, a_1) + \frac{1}{2}r(s_+, a_1)\right],
\end{align*}
which can be rewritten as:
\begin{align*}
r(s_j,a_1) \ge r(s_j,a_k).
\end{align*}
For the alternative instance in which we varied the behavior of the state-action pair $(s_j, a_k)$, instead, we obtain:
\begin{align*}
r(s_j,a_1) + \frac{\gamma}{1-\gamma} \left[ \frac{1}{2}r(s_-, a_1) + \frac{1}{2}r(s_+, a_1)\right] \ge r(s_j,a_k) + \frac{\gamma}{1-\gamma} \left[ \left(\frac{1}{2}-\epsilon'\right)r(s_-, a_1) + \left(\frac{1}{2}+\epsilon'\right) r(s_+, a_1)\right],
\end{align*}
thus leading to:
\begin{align*}
r(s_j,a_1) \ge r(s_j,a_k) - \epsilon' \frac{\gamma}{1-\gamma} \left[ r({s_{-},a_1}) - r(s_+, a_1) \right],
\end{align*}
Given this construction, we can lower bound the Hausdorff distance between the feasible reward set of the base and the alternative instance. Specifically, we first pick a reward function $r'$ compatible with the alternative instance as follows: $r'(s_-,a_1)=1, r'(s_+, a_1) = 0, r'(s_j, a_k) = 1$ and $r'(s_j, a_1) = 1 - \frac{\epsilon' \gamma}{1-\gamma}$. Then, we study which reward function compatible with the base instance minimizes the infinite norm w.r.t. $r'$. Given these choices, similarly to \citet{metelli2023towards}, it is possible to obtain the following lower-bound to the Hausdorff distance:
\begin{align*}
H_\infty(\mathcal{R}, \mathcal{R}') \ge \frac{\epsilon' \gamma}{1-\gamma},
\end{align*}
where $\mathcal{R}$ denotes the feasible reward set of the base instance, while $\mathcal{R}'$ denotes the feasible reward set of the alternative instance. We enforce the following constraint on the previous equation:
\begin{align*}
\frac{\epsilon' \gamma}{1-\gamma} \ge 2 \epsilon,
\end{align*}
which, in turns, leads to the following requirement on $\epsilon'$:
\begin{align*}
\epsilon' \ge \frac{2 \epsilon (1-\gamma)}{\gamma}.
\end{align*}

\paragraph{Step 3: Lower bounding the sample complexity} At this point, the rest of the proof follows identical to Theorem B.2 in \citet{metelli2023towards}, and leads to the desired result.

\end{proof}

We now continue with the proof of second intermidiate result that is needed for the proof of Theorem \ref{theo:lb-opt-exp}.

\begin{figure}[t]
\centering\includegraphics[width=17cm]{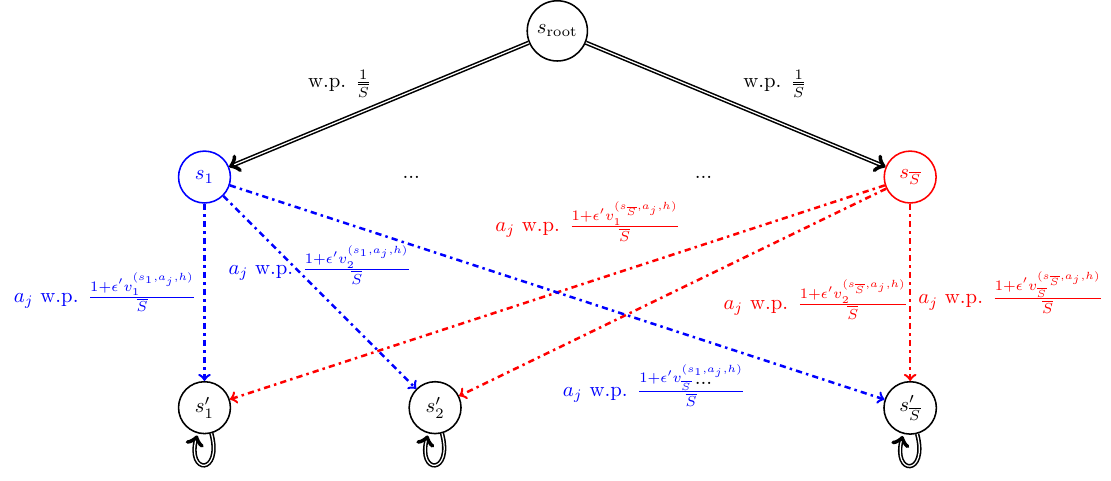} 
\vspace{.3in}
\caption{Representation of the IRL-SE problem for the instances used in Theorem \ref{theo:lb-opt-exp-caso2}.}
\label{fig:instance-opt-agent-due}
\end{figure}

\begin{theorem}\label{theo:lb-opt-exp-caso2}
Let $\mathfrak{A}$ be a $(\epsilon, \delta)$-correct algorithm for the IRL problem with sub-optimal experts. There exists a problem instance $\bar{\mathfrak{B}}$ such that the expected sample complexity is lower bounded by:
\begin{align}
	\mathbb{E}_{\mathfrak{A}, \bar{\mathfrak{B}}}[\tau] \ge {\Omega} \left(\frac{S^2A}{\epsilon^2 (1-\gamma)^2} \right).
\end{align}
\end{theorem}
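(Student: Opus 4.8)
The plan is to follow the same three-step scheme as the proof of Theorem~\ref{theo:lb-opt-exp-caso1}, but to engineer a construction whose correct identification forces the learner to estimate an entire $\Theta(S)$-dimensional transition distribution at each of $\Theta(SA)$ state-action pairs. Whereas caso1 reduced each pair to a single Bernoulli bias test (costing $\Omega(\log(1/\delta)/\epsilon'^2)$ samples), here each pair becomes a full distribution-estimation problem (costing a $\delta$-free $\Omega(S/\epsilon'^2)$), which is exactly what upgrades one factor in the bound from $\log(1/\delta)$ to $S$. Concretely, I would keep a root state $s_{\textup{root}}$ that spreads uniformly over $\bar S = \Theta(S)$ ``middle'' states $s_1,\dots,s_{\bar S}$, and add $\Theta(S)$ absorbing ``sink'' states carrying fixed, known rewards. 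In the base instance every pair $(s_j,a)$ transitions according to a near-uniform distribution over the sinks, and all experts (including the sub-optimal ones) play $a_1$ deterministically, so that condition (iii) of Lemma~\ref{lemma:implicit-multiple-experts} is vacuous and the feasible set reduces to the single-expert one of Equation~\eqref{eq:explicit-eq1}. The alternative instances are obtained by perturbing each transition vector $p(\cdot \mid s_j,a)$ coordinate-wise by $\pm\beta$, indexed by a sign pattern.

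For the Hausdorff lower bound, I would use Theorem~\ref{theorem:multiple-expert-explicit} (equivalently Lemma~\ref{lemma:metelli-explicit}): the feasible reward at $s_j$ is controlled by the inner product $\langle p(\cdot \mid s_j,a_1) - p(\cdot \mid s_j,a), V \rangle$ with $\|V\|_\infty \le (1-\gamma)^{-1}$. Choosing $V$ aligned with the perturbation direction, I would exhibit a reward in one feasible set whose closest counterpart in the other lies at $\ell_\infty$-distance $\gtrsim \beta\gamma/(1-\gamma)$, and then impose $\beta\gamma/(1-\gamma) \ge 2\epsilon$, i.e.\ $\beta \gtrsim \epsilon(1-\gamma)/\gamma$. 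Consequently, any $(\epsilon,\delta)$-correct algorithm must recover the $\Theta(S)$-dimensional sign pattern at each pair, which in turn requires estimating each transition distribution in $\ell_1$ to accuracy $O(\beta)$.

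The sample-complexity step is the technical heart, and here I would invoke a $\delta$-free, high-dimensional argument (Assouad's lemma over the $\Theta(S)$ perturbed coordinates, equivalently the distribution-estimation lower bound underlying the corresponding theorem of \citet{metelli2023towards}): estimating an $\Theta(S)$-outcome distribution in $\ell_1$ to accuracy $O(\beta)$ with constant probability costs $\Omega(S/\beta^2)$ samples at that pair, and the $\Theta(SA)$ pairs are statistically independent under the generative model. Summing over all pairs and substituting $\beta \asymp \epsilon(1-\gamma)/\gamma$ then yields $\Omega\!\left( S^2 A / (\epsilon^2 (1-\gamma)^2) \right)$.

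The main obstacle is precisely this last step: because the target bound is independent of $\delta$, the single-alternative change-of-measure inequality used in caso1 no longer suffices, and one must instead run a genuinely multi-hypothesis (Assouad/Fano-type) argument across the $\Theta(S)$ coordinates of each transition distribution. A secondary but necessary check is that the discounted dynamics propagate the transition perturbations into the feasible reward set exactly as in the finite-horizon construction of \citet{metelli2023towards}, with the horizon $H$ consistently replaced by the effective horizon $(1-\gamma)^{-1}$.
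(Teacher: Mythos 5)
Your proposal follows essentially the same route as the paper's proof: the same root/middle/absorbing-sink construction in which all experts play $a_1$ deterministically (so the sub-optimal-expert constraints of Equation~\eqref{eq:explicit-eq2} are vacuous), the same sign-pattern perturbation of the transition vectors with the Hausdorff distance lower bounded by aligning the sink rewards with the perturbation and enforcing $\epsilon'\gamma/(1-\gamma) \gtrsim \epsilon$, and the same multi-hypothesis (Assouad/Fano-type) argument yielding the $\delta$-free $\Omega(S/\epsilon'^2)$ per-pair cost, which the paper inherits verbatim from Theorem B.3 of \citet{metelli2023towards}. The only cosmetic differences are that the paper perturbs only the non-expert actions $a_k \neq a_1$ and imposes the zero-sum condition $\sum_i v_i = 0$ on the sign patterns so the perturbed vectors remain probability distributions, details your sketch leaves implicit.
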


\begin{proof}
The proof is split into several steps. Since most of the arguments are borrowed from the proof of Theorem B.3 in \citet{metelli2023towards}, we only provide a short sketch that involves the main differences between the finite-horizon and the infinite horizon IRL model.

\paragraph{Step 1: Base Instance and Alternative Instances}
We consider MDP\textbackslash R instances that are presented in Figure~\ref{fig:instance-opt-agent-due}. More specifically, the state space is given by $\mathcal{S} = \left\{ s_{\textup{root}}, s_1, \dots, s_{\bar{S}}, s'_1, \dots, s'_{\bar{S}}\right\}$, the action space is composed of $k$ actions $\mathcal{A} = \left\{ a_1, a_2, \dots, a_k \right\}$, the transition model is described below, and $\gamma \in (0, 1)$. In the following, we will assume $\bar{S}$ to be divisible by $16$.

In state $s_{\textup{root}}$ all actions have the same effect, and they lead, with probability $\frac{1}{\bar{S}}$ to a state in $\left\{ s_1, \dots, s_{\bar{S}} \right\}$. In state $s \in \left\{s'_1, \dots, s'_{\bar{S}} \right\}$ all actions have deterministic effect, and they all lead to the same state in which the action is taken. All the experts are deterministic and take action $a_1$ with probability $1$ in all states. Finally, in all states $s \in \left\{s_1, \dots s_{\bar{S}} \right\}$, action $a_1$ leads w.p. $\frac{1}{{\bar{S}}}$ to any state in $s'_{\bar{S}} \in \left\{s'_1, \dots, s'_{\bar{S}} \right\}$.

We then consider a class of instances by varying the behavior of the transition model in state-action pairs $(s_j, a_k) \in \left\{ s_1, \dots s_{\bar{S}}\right\} \times \mathcal{A} \setminus \left\{ a_1 \right\}$. Specifically, for each triplet $(s_j, s'_i, a_k)$, $p(s'_{i}|s_j, a_k ) = \frac{1+\epsilon' v_i^{(s_j, a_k)}}{2}$, where $v^{(s_j,_k)} =  \left(v_1^{(s_i, a_j)}, \dots v_{\bar{S}}^{(s_i, a_j)} \right) \in \mathcal{V}$, and $\mathcal{V} = \left\{ \left\{ -1, 1\right\}^{\bar{S}}: \sum_{i}^{\bar{S}} v_j = 0 \right\}$ and $\epsilon' \in \left[0, \frac{1}{2}\right]$. 


\paragraph{Step 2: Feasible Reward Set and Hausdorff Distance lower bound }
We now proceed with lower bounding the Hausdorff distance between the feasible reward set that is induced by two instances that belongs to $\mathcal{V}$, namely $\mathcal{R}_{\bar{\mathfrak{B}}_v}$ and $\mathcal{R}_{\bar{\mathfrak{B}}_w}$. To this end, first of all, we notice that our class of reward functions admits elements that are bounded in $[0,1]$. Let us denote with $\bar{\mathcal{R}}_{\bar{\mathfrak{B}}_v}$ the class of compatible rewards functions that are bounded in $[-1,1]$. Then, we have that:
\begin{align}\label{eq:rescale-trick}
H_\infty\left(\mathcal{R}_{\bar{\mathfrak{B}}_v}, \mathcal{R}_{\bar{\mathfrak{B}}_w}\right) \ge \frac{1}{2}H_\infty \left( \bar{\mathcal{R}}_{\bar{\mathfrak{B}}_v}, \bar{\mathcal{R}}_{\bar{\mathfrak{B}}_w}\right).
\end{align}
The proof of Equation \eqref{eq:rescale-trick} follows by the following reasonings. First of all, it is easy to see that, for any $v$, $\bar{r} \in \bar{\mathcal{R}}_{\bar{\mathfrak{B}}_v}$ holds if and only if $\frac{\bar{r}+1}{2} \in \mathcal{R}_{\bar{\mathfrak{B}}_v}$ holds.\footnote{Notice that we are considering single-experts IRL problems. Indeed, in the considered IRL-SE problem, all sub-optimal experts behave identically to the optimal agent, and the constraints of Equation \eqref{eq:explicit-eq2} are vacuous.} Then, with simple algebraic manipulations, we have that:
\begin{align*}
\sup_{x \in \mathcal{R}_{\bar{\mathfrak{B}}_v}} \inf_{y \in \mathcal{R}_{\bar{\mathfrak{B}}_w}} ||x-y||_\infty &  = \sup_{x \in \bar{\mathcal{R}}_{\bar{\mathfrak{B}}_v}} \inf_{y \in \bar{\mathcal{R}}_{\bar{\mathfrak{B}}_w}} \Big|\Big|\frac{x+1}{2}-\frac{y+1}{2}\Big|\Big|_\infty \\ & = \frac{1}{2} \sup_{x \in \bar{\mathcal{R}}_{\bar{\mathfrak{B}}_v}} \inf_{y \in \bar{\mathcal{R}}_{\bar{\mathfrak{B}}_w}} || x - y ||_\infty,
\end{align*}
from which it follows Equation \eqref{eq:rescale-trick}.

At this point, our analysis follows by lower bounding the Hausdorff distance using reward functions that are bounded in $[-1,1]$. To this end, first of all, we analyze the feasible reward set for a single instance $\bar{\mathfrak{B}}_v$. In this case, since $a_1$ is played by the optimal expert, we have that:
\begin{align*}
r(s_j, a_1) + \frac{1}{\bar{S}} \frac{\gamma}{1-\gamma} \sum_{s'_i} r^v(s'_i) \ge r(s_j, a_k)  + \frac{1}{\bar{S}} \frac{\gamma}{1-\gamma} \sum_{s'_i} \left(1+\epsilon' v_i\right) r^v(s'_i),
\end{align*}
which, in turns, leads to:
\begin{align*}
r(s_j, a_1) \ge r(s_j, a_k)  + \epsilon' \frac{1}{\bar{S}} \frac{\gamma}{1-\gamma} \sum_{s'_i} v_i r^v(s'_i).
\end{align*}
At this point, following the same steps of \citet{metelli2023towards}, we obtain:
\begin{align*}H_\infty\left(\mathcal{R}_{\bar{\mathfrak{B}}_v}, \mathcal{R}_{\bar{\mathfrak{B}}_w}\right) \ge \frac{1}{2}H_\infty \left( \bar{\mathcal{R}}_{\bar{\mathfrak{B}}_v}, \bar{\mathcal{R}}_{\bar{\mathfrak{B}}_w}\right) \ge \frac{\epsilon'}{64} \frac{\gamma}{1-\gamma},
\end{align*}
on which we enforce the following constraint:
\begin{align*}
\frac{\epsilon'}{64} \frac{\gamma}{1-\gamma} \ge 2 \epsilon,
\end{align*}
that is $\epsilon' \ge 128 \epsilon \frac{1-\gamma}{\gamma}$.

\paragraph{Step 3: Lower-bounding the sample complexity}
At this point, the rest of the proof follows identical to Theorem B.3 in \citet{metelli2023towards}, and leads to the desired result.

\end{proof}

\begin{figure}[t]
\centering\includegraphics[width=15cm]{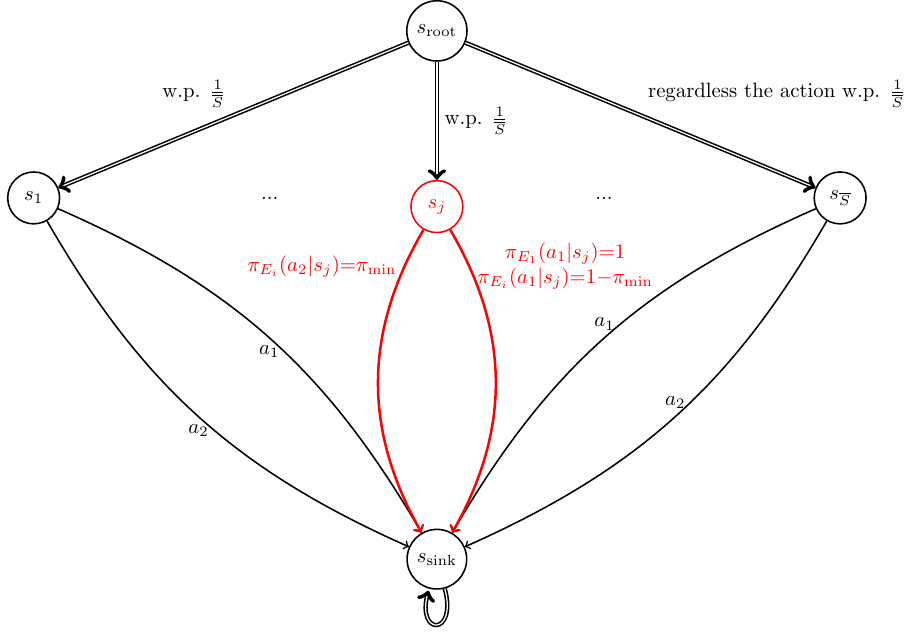} 
\vspace{.3in}
\caption{Representation of the IRL-SE problem for the instances used in Theorem \ref{theo:lb-sub-opt}.}
\label{fig:instance-sub-opt}
\end{figure}

We continue by proving the novel statement related to Equation \eqref{eq:lb-two}, which captures the role of the estimating reward functions that are compatible with the behavior of the sub-optimal experts. 

\begin{theorem}\label{theo:lb-sub-opt}
Let $\mathfrak{A}$ be a $(\epsilon, \delta)$-correct algorithm for the IRL problem with sub-optimal experts. For sufficiently small values of $\epsilon$, there exists an instance $\bar{\mathfrak{B}}$ in which $q_0 < 1$ such that the expected sample complexity is lower bounded by:
\begin{align}\label{eq:lb-two}
	\mathbb{E}_{\mathfrak{A}, \bar{\mathfrak{B}}}[\tau] \ge {\Omega} \left( \frac{q_0^2 S \log\left( \frac{1}{\delta} \right)}{\epsilon^2 \pi_\textup{min}} \right).
\end{align}
\end{theorem}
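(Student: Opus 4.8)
The plan is to construct a family of IRL-SE instances that differ only in the behavior of a single sub-optimal expert, and to show that distinguishing between them — which is forced by the $(\epsilon,\delta)$-PAC requirement — demands $\Omega\left(q_0^2 S \log(1/\delta) / (\epsilon^2 \pi_\textup{min})\right)$ samples. The key structural insight, following the discussion in Section \ref{sec:explicit}, is that Equation \eqref{eq:explicit-eq2} constrains $\zeta(s,a)$ via $k(s,a) = \min_{i,s'} \xi_i / (d^{\pi_{E_i}}_{s'}(s)\pi_{E_i}(a|s))$, so the worst-case upper bound on $\zeta$ depends inversely on $\pi_{E_i}(a|s)$. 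Since $q_0 = \pi_\textup{min}^{-1}\max_i \xi_i$, requiring $q_0 < 1$ means the sub-optimality constraint is active (the $k(s,a)$ bound beats the trivial $(1-\gamma)^{-1}$ bound), and the precise value of the feasible-$\zeta$ boundary is governed by the sub-optimal expert's action probabilities, which must be estimated from samples.

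I would use the MDP\textbackslash{}R depicted in Figure~\ref{fig:instance-sub-opt}, consisting of a root state that uniformly distributes over $\Theta(S)$ intermediate states, each of which is a place where the optimal expert $\pi_{E_1}$ plays one action while the sub-optimal expert plays a distinct action with probability close to $\pi_\textup{min}$. First I would write out, via Theorem \ref{theorem:multiple-expert-explicit}, the explicit dependence of the feasible-set boundary on the estimated quantity $\pi_{E_i}(a|s)$: since $\zeta(s,a) \le \xi_i / (d^{\pi_{E_i}}_{s'}(s)\pi_{E_i}(a|s))$, a perturbation of the estimate of $\pi_{E_i}(a|s)$ of order $\epsilon'$ shifts the feasible boundary for $\zeta(s,a)$, and hence — through Equation \eqref{eq:explicit-eq1} — shifts the feasible reward by an amount I would lower-bound in $\|\cdot\|_\infty$, thereby lower-bounding the Hausdorff distance $H_\infty$ between the two instances' feasible sets. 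The scaling factor $q_0^2 / \pi_\textup{min}$ should emerge from the sensitivity of this boundary: the $q_0$-dependence from the magnitude of the $\zeta$-bound and the $\pi_\textup{min}^{-1}$ from the fact that estimating a Bernoulli-type parameter with mean $\approx \pi_\textup{min}$ to additive accuracy $\epsilon'$ carries variance $\pi_\textup{min}(1-\pi_\textup{min})$.

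The core of the argument is a change-of-measure / Le Cam or Bretagnolle–Huber bound: I would set up a base instance and an alternative that perturb $\pi_{E_i}(a|s)$ at each of the $\Theta(S)$ intermediate states, calibrate the perturbation $\epsilon'$ so that the induced Hausdorff distance exceeds $2\epsilon$ (giving a constraint $\epsilon' \gtrsim \epsilon/q_0$ or similar, whence the $q_0^2$ in the numerator), and then invoke the standard KL-divergence decomposition over the samples drawn at each state. The expected number of samples an $(\epsilon,\delta)$-correct algorithm must take in the relevant states is then lower-bounded by $\log(1/\delta)$ divided by the per-sample KL divergence between the two sub-optimal-expert action distributions; this per-sample KL, for Bernoulli parameters separated by $\epsilon'$ near mean $\pi_\textup{min}$, is $\Theta((\epsilon')^2/\pi_\textup{min})$, and summing the sampling requirement over the $\Theta(S)$ states yields the claimed bound. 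The main obstacle I expect is the careful computation of the Hausdorff distance as a function of the perturbation $\epsilon'$ — specifically, verifying that shifting a single $\pi_{E_i}(a|s)$ genuinely forces a reward displacement of the claimed order in the $\|\cdot\|_\infty$ metric rather than being absorbable by adjusting the free parameters $\zeta$ and $V$ elsewhere — together with the calibration of $\epsilon'$ under the constraint $q_0 < 1$ so that all the inequalities close consistently and the restriction to small $\epsilon$ can be invoked.
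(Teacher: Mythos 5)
Your proposal follows essentially the same route as the paper's proof: the same instance family (Figure \ref{fig:instance-sub-opt}) with a per-state perturbation of the sub-optimal expert's action probability, the same Hausdorff-distance calibration against $2\epsilon$ (including the need to rule out absorbing the shift into the free parameters, which the paper handles via an explicit infimum/optimization problem), and the same change-of-measure argument with per-sample KL of order $(\epsilon')^2/\pi_{\textup{min}}$ summed over $\Theta(S)$ states. One small quantitative note: the calibration actually forces $\epsilon' = \Theta\left(\epsilon \pi_{\textup{min}}/q_0\right)$ rather than $\epsilon/q_0$, and it is precisely this extra $\pi_{\textup{min}}$ factor that places $\pi_{\textup{min}}$ in the denominator of the final bound.
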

\begin{proof}
The proof follows as the combination of several steps. We first describe the construction considering the case in which there is a single sub-optimal expert. Then, the results will follow by iterating the procedure for each sub-optimal expert in the given set. 

\paragraph{Step 1: Base Instance and Alternative Instance}
We consider MDP\textbackslash R instances that are presented in Figure~\ref{fig:instance-sub-opt}. More specifically, the state space is given by $\mathcal{S} = \left\{ s_{\textup{root}}, s_1, \dots, s_{\bar{S}}, s_{\textup{sink}} \right\}$, the action space is composed of two actions $\mathcal{A} = \left\{ a_1, a_2 \right\}$, the transition model is described below, and $\gamma \in (0, 1)$. 

In state $s_{\textup{root}}$ all actions have the same effect, and they lead, with probability $\frac{1}{\bar{S}}$ to a state in $\left\{ s_1, \dots, s_{\bar{S}} \right\}$. In each state $s_i \in \left\{ s_1, \dots, s_{\bar{S}}\right\}$, again, all actions have the same effect, and they lead, deterministically to state $s_{\textup{sink}}$. In state $s_{\textup{sink}}$ all actions are deterministic and they all lead to $s_{\textup{sink}}$.

We then consider a base instance, where, in states $s_{\textup{root}}$ and state $s_{\textup{sink}}$ all experts take action $a_1$ with probability $1$. In each state $s_j \in \left\{s_1, \dots, s_{\bar{S}} \right\}$, instead, $\pi_{E_1}(a_1|s_j) = 1$, while $\pi_{E_i}(a_1|s_j) = 1-\pi_{\textup{min}}$ and $\pi_{E_i}(a_2|s_j) = \pi_{\textup{min}}$ for the sub-optimal expert.

We then consider a set of $\bar{S}$ alternative instances by varying the behavior of the sub-optimal expert in state $s_j \in \left\{ s_1, \dots s_{\bar{S}}\right\}$. Specifically, the sub-optimal expert will act accordingly to the following policy: $\pi_{E_i}(a_1|s_j) = 1-\alpha \pi_{\textup{min}}$, and $\pi_{E_i}(a_2|s_j) = \alpha \pi_\textup{min}$ for some $\alpha > 1$ (to be defined later). We will denote our set of instances as $\mathbb{M} = \left\{ \pi_{E_i}^{(l)}: l \in [\bar{S}] \right\}$, and $0$ denotes the behavior of the base instance we described above.

\paragraph{Step 2: Feasible Reward Set}
At this point, we study the structure of the feasible reward set that is compatible with the each instance we described. Specifically, we are interested in the properties of feasible reward functions in states $s_j \in \left\{ s_1, \dots, s_{\bar{S}}\right\}$. 

For the base instance, in each state $s_j \in \left\{ s_1, \dots, s_{\bar{S}}\right\}$, we have that, since $a_1$ is the action taken by the optimal expert:
\begin{align*}
r(s_j, a_1) + \frac{\gamma}{1-\gamma} r(s_{\textup{sink}}, a_1) \ge r(s_j,a_2) + \frac{\gamma}{1-\gamma} r(s_{\textup{sink}}, a_1),
\end{align*} 
which reduces to:
\begin{align}\label{eq:lb-proof-ours-1}
r(s_j,a_1) \ge r(s_j,a_2).
\end{align} 
Moreover, due to the presence of the sub-optimal expert, we have that $V^{\pi_{E_1}^{(0)}}_{\mathcal{M} \cup r}(s_j) \le V^{\pi_{E_i}}_{\mathcal{M} \cup r}(s_j) + \xi_i$, which results in:
\begin{equation*}
\resizebox{\linewidth}{!}{$\displaystyle
r(s_j,a_1) + \frac{\gamma}{1-\gamma} r(s_{\textup{sink}},a_1) \le \left( 1-\pi_\textup{min} \right) \left( r(s_j,a_1) + \frac{\gamma}{1-\gamma} r(s_{\textup{sink}},a_1) \right) + \pi_{\textup{min}} \left( r(s_j,a_2) + \frac{\gamma}{1-\gamma} r(s_{\textup{sink}},a_1) \right) + \xi_i,$
}%
\end{equation*}
which can be rewritten as:
\begin{align}\label{eq:lb-proof-ours-2}
r(s_j,a_1) - r(s_j,a_2) \le \frac{\xi_i}{\pi_{\textup{min}}}.
\end{align}

Similarly, for each alternative instance in $\mathbb{M}$, we obtain:
\begin{align}
& r(s_j,a_1) \ge r(s_j,a_2) \\
& r(a_1,s_j) - r(s_j,a_2) \le \frac{\xi_i}{\alpha \pi_{\textup{min}}} \label{eq:lb-proof-ours-3}.
\end{align}

\paragraph{Step 3: Lower-bounding the Hausdorff Distance}
At this point, we proceed by lower bounding the Hausdorff distance between the feasible reward set of the base instance, namely $\mathcal{R}_{\mathfrak{B}^{(0)}}$, and that of alternative instance $j$ in which the policy of the sub-optimal expert is modified in state $s_j$, namely $\mathcal{R}_{\mathfrak{B}^{(j)}}$. 

To this end, we notice that it is sufficient to pick any $r \in \mathcal{R}_{\mathfrak{B}^{(0)}}$, and then study the following optimization problem:
\begin{align}\label{eq:lb-proof-ours-4}
\inf_{r' \in \mathcal{R}_{\mathfrak{B}^{(j)}}} ||r-r' ||_\infty.
\end{align}
We pick the following choice for $r$. For action $a_1$ in state $s_j$ we pick $r(s_j,a_1) = \frac{\xi_i}{\pi_{\textup{min}}}$. Instead, for all the other state-action pair, we pick $r(\cdot, \cdot) = 0$.\footnote{Notice that in this step we have used the assumption that $q_0 < 1$.} In this case, it is easy to verify that the optimization problem in \eqref{eq:lb-proof-ours-4} can be rewritten as follows:

\begin{equation}\label{eq:lb-proof-ours-5}
\begin{aligned} 
\inf_{x,y} \quad & \max\left\{ \Big| \frac{\xi_i}{\pi_{\textup{min}}} - x \Big|,  y  \right\}  \\
\textrm{s.t.} \quad &  x,y \in [0,1] \\
  & x-y \le \frac{\xi_i}{\alpha \pi_{\textup{min}}} \\
  & x \ge y,
\end{aligned}
\end{equation}
where $x$ corresponds to $r'(s_j,a_1)$, and $y$ to $r'(s_j,a_2)$. The optimization problem in \eqref{eq:lb-proof-ours-5} can be further lower-bounded by:
\begin{equation}\label{eq:lb-proof-ours-6}
\begin{aligned} 
\inf_{x,y} \quad & \Big| \frac{\xi_i}{\pi_{\textup{min}}} - x \Big|\\
\textrm{s.t.} \quad &  x,y \in [0,1] \\
  & x-y \le \frac{\xi_i}{\alpha \pi_{\textup{min}}} \\
  & x \ge y \\
  & \Big| \frac{\xi_i}{\pi_{\textup{min}}} - x \Big| \ge y  
\end{aligned}
\end{equation}
We study the optimization problem in \eqref{eq:lb-proof-ours-6} by cases (i.e., ${\xi_i}{\pi_{\textup{min}}}^{-1} < x$, and ${\xi_i}{\pi_{\textup{min}}}^{-1} \ge x$). First, we suppose that ${\xi_i}{\pi_{\textup{min}}}^{-1} < x$. However, in this case, we would have $y \le x - {\xi_i}{\pi_{\textup{min}}}^{-1}$, that is $x \ge y + {\xi_i}{\pi_{\textup{min}}}^{-1}$. Chaining this inequality with $x \le y + {\xi_i}\left({\alpha \pi_{\textup{min}}}\right)^{-1}$ yields:
\begin{align*}
y + \frac{\xi_i}{\pi_{\textup{min}}} \le x \le y + \frac{\xi_i}{\alpha \pi_{\textup{min}}},
\end{align*}
which is impossible since $\alpha > 1$. Concerning the case in which ${\xi_i}{\pi_{\textup{min}}}^{-1} \ge x$, instead, we have that $y \le {\xi_i}{\pi_{\textup{min}}}^{-1} - x$. It follows that, since $x \le \xi_i \pi_{\textup{min}}^{-1} + y$, the maximum value of $x$ will be attained for the maximum value of $y$. Therefore, we obtain:
\begin{align*}
x \le \frac{\xi_i}{\alpha \pi_{\textup{min}}} + \frac{\xi_i}{\pi_{\textup{min}}} - x,
\end{align*}
Thus leading to:
\begin{align*}
x \le \frac{1}{2} \frac{\xi_i}{\pi_{\textup{min}}} \left(\frac{1}{\alpha} + 1 \right).
\end{align*}
Plugging this result into the objective function, we obtain the solution to optimization problem \eqref{eq:lb-proof-ours-6}, which in turn provides a lower bound to the Hausdorff distance. More specifically, we have that:
\begin{align}\label{eq:lb-proof-ours-7}
H_\infty \left( \mathcal{R}_{\mathfrak{B}^{(0)}}, \mathcal{R}_{\mathfrak{B}^{(j)}} \right) \ge \frac{1}{2} \frac{\xi_i}{\pi_{\textup{min}}} \left( 1 - \frac{1}{\alpha} \right).
\end{align}

We enforce the following constraint on this quantity, that is:
\begin{align}\label{eq:lb-proof-ours-8}
\frac{1}{2} \frac{\xi_i}{\pi_{\textup{min}}} \left( 1 - \frac{1}{\alpha} \right) = 2 \epsilon.
\end{align}

We notice that in this sense, we need to impose $\alpha = {\xi_i} \left(\xi_i - 4\epsilon\pi_{\textup{min}}\right)^{-1}$, which results being greater than $1$ for all values of $\xi_i > 4 \epsilon \pi_{\textup{min}}$. Furthermore, since $\alpha \pi_{\textup{min}} < 1$,\footnote{This condition directly arises from the fact that $\alpha \pi_{\textup{min}}$ represents the probability with which the sub-optimal expert takes a certain action.} we obtain the following condition on the values of $\epsilon$ which guarantes that the construction is valid:
\begin{align*}
\epsilon < \min \left\{ \frac{\xi_i}{4\pi_{\textup{min}}}, \frac{1-\xi_i}{4} \right\}.
\end{align*}

\paragraph{Step 4: Lower-bounding Probability} At this point, consider an $\left(\epsilon, \delta\right)$-correct algorithm $\mathfrak{A}$ that outputs the correct feasible set $\mathcal{R}_{\widehat{\bar{\mathfrak{B}}}}$ for each IRL-SE problem $\bar{\mathfrak{B}}$. For $\mathfrak{A}$ it holds that:

\begin{align}
\delta & \ge \sup_{\bar{\mathfrak{B}}} \mathbb{P}_{\bar{\mathfrak{B}},\mathfrak{A}} \left( H\left( \mathcal{R}_{{\bar{\mathfrak{B}}}},\mathcal{R}_{\widehat{\bar{\mathfrak{B}}}} \right) \ge \epsilon \right) \\ & \ge \max_{\mathfrak{B} \in \mathbb{M}} \mathbb{P}_{\bar{\mathfrak{B}},\mathfrak{A}} \left( H\left( \mathcal{R}_{{\bar{\mathfrak{B}}}},\mathcal{R}_{\widehat{\bar{\mathfrak{B}}}} \right) \ge \epsilon \right),
\end{align}
where $\mathbb{P}_{\bar{\mathfrak{B}}, \mathfrak{A}}$ denotes the joint probability distribution of all events realized by the execution of algorithm $\mathfrak{A}$ in the IRL-SE problem $\bar{\mathfrak{B}}$, that is:
\begin{align*}
\mathbb{P}_{\bar{\mathfrak{B}}, \mathfrak{A}} = \prod_{t=1}^{\tau} \nu_t\left(s_t, a_t|\mathcal{D}_{t-1}\right) p(s'_t|s_t,a_t) \prod_{i=1}^{n+1} \pi_{E_i}(a_t^{(i)}|s_t)
\end{align*}

At this point, following the same reasonings of, e.g., Theorem B.2 in \citet{metelli2023towards}, it is easy to see that, for all the alternative instances in $l \in \mathbb{M}$, the previous equation can be further lower-bounded to obtain:
\begin{align}\label{eq:lb-proof-ours-9}
\delta \ge \frac{1}{4} \exp\left( -\textup{KL}\left( \mathbb{P}_{\bar{\mathfrak{B}}^{(0)},\mathfrak{A}}, \mathbb{P}_{\bar{\mathfrak{B}}^{(l)},\mathfrak{A}} \right) \right),
\end{align}

\paragraph{Step 5: KL-divergence} As a final step, we will proceed by analyzing Equation \eqref{eq:lb-proof-ours-9}.
We will start by analyzing the KL divergence between $\mathbb{P}_{\bar{\mathfrak{B}}^{(0)},\mathfrak{A}}$ and $\mathbb{P}_{\bar{\mathfrak{B}}^{(0)},\mathfrak{A}}$. Specifically, we have that:
\begin{align*}
\textup{KL}\left( \mathbb{P}_{\bar{\mathfrak{B}}^{(0)},\mathfrak{A}}, \mathbb{P}_{\bar{\mathfrak{B}}^{(l)},\mathfrak{A}} \right) & = \mathbb{E}_{\bar{\mathfrak{B}}^{(0)},\mathfrak{A}} \left[ \sum_{t=1}^\tau \textup{KL} \left(  \pi_{E_i}^{(0)}(\cdot|s_t), \pi_{E_i}^{{(l)}}(\cdot|s_t) \right) \right] \\ & \le \mathbb{E}_{\bar{\mathfrak{B}}^{(0)},\mathfrak{A}} \left[ N_\tau(s_j) \right] \frac{\textup{TV}\left(\pi_{E_i}^{(0)}(\cdot|s_j), \pi_{E_i}^{(l)}(\cdot|s_j)\right)^2}{\pi_{\textup{min}}} \\ & \le \mathbb{E}_{\bar{\mathfrak{B}}^{(0)},\mathfrak{A}} \left[ N_\tau(s_j) \right] \frac{\left( \alpha \pi_{\textup{min}} - \pi_{\textup{min}} \right)^2}{\pi_{\textup{min}}},
\end{align*}
where $\textup{TV}(p_1,p_2)$ represents the total variation distance between distribution $p_1$ and $p_2$. The first inequality step follows from the reverse Pinkser's inequality \citep{sason2015reverse}, and the second one, instead, from the definition of the total variation distance.

At this point, however, from Equation \eqref{eq:lb-proof-ours-8}, we obtain that:
\begin{align*}
\alpha \pi_{\textup{min}} - \pi_{\textup{min}} = \frac{4 \alpha \pi_{\textup{min}}^2 \epsilon}{\xi_i},
\end{align*}
thus leading to:
\begin{align}
\textup{KL}\left( \mathbb{P}_{\bar{\mathfrak{B}}^{(0)},\mathfrak{A}}, \mathbb{P}_{\bar{\mathfrak{B}}^{(l)},\mathfrak{A}} \right) \le \mathbb{E}_{\bar{\mathfrak{B}}^{(0)},\mathfrak{A}} \left[ N_\tau(s_j) \right] \frac{16 \alpha^2 \pi_{\textup{min}}^3 \epsilon^2}{\xi_i^{2}}
\end{align}

Plugging this result into Equation \eqref{eq:lb-proof-ours-9}, and rearranging the terms, we obtain:
\begin{align}\label{eq:lb-proof-ours-9}
\mathbb{E}_{\bar{\mathfrak{B}}^{(0)},\mathfrak{A}} \left[ N_\tau(s_j) \right] \ge \frac{\log \left( \frac{1}{4\delta}\right) \xi_i^2}{16 \epsilon^2 \pi_{\textup{min}}^3 \alpha^2} \ge \frac{\xi_i^2 \log \left( \frac{1}{4\delta}\right) }{16 \epsilon^2 \pi_{\textup{min}}^3}.
\end{align}

Finally, to conclude, we notice that:
\begin{align}
\mathbb{E}_{\bar{\mathfrak{B}}^{(0)},\mathfrak{A}} \left[ \tau \right] & \ge \sum_{s \in \left\{s_1, \dots, s_{\bar{S}} \right\}} \mathbb{E}_{\bar{\mathfrak{B}}^{(0)},\mathfrak{A}} \left[ N_\tau(s) \right] \\ & \ge \frac{(S-2) \xi_i^2}{16 \epsilon^2 \pi_{\textup{min}}^3} \log \left( \frac{1}{4\delta}\right)
\end{align}

Iterating this procedure over all the possible experts yields the desired result.

\end{proof}

Finally, we are now ready to prove Theorem \ref{theo:lb}.

\lb*
\begin{proof}
The proof follows directly by combining Theorem \ref{theo:lb-opt-exp} and Theorem \ref{theo:lb-sub-opt}.
\end{proof}

\subsection{Proof of Theorem \ref{theo:ub}}
In this section, we provide a formal proof for Theorem \ref{theo:ub}.

First of all, we begin by defining a good event $\mathcal{E}$ that holds with probability at least $1-\delta$. To this end, we begin by reporting, for the sake of completeness, the concentration tools that are used in controlling the probability with which $\mathcal{E}$ holds.

\begin{lemma}[Multiplicative Chernoff bound \citep{hagerup1990guided}]\label{lemma:multiplicative-chern}
Consider $t$ independent random variables $X_1, \dots, X_t$ taking values in $[0, 1]$. Suppose that $\mathbb{E}[X_i] = \mu$ holds for all $i \in \left\{1, \dots, t \right\}$. Consider $\alpha \in (0, 1)$. Then, we have that:
\begin{align*}
\mathbb{P} \left( \sum_{i=1}^t X_i \ge t \mu (1+\alpha) \right) \le \exp \left( -\frac{1}{3}t \mu \alpha^2 \right).
\end{align*}
\end{lemma}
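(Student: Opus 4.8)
The plan is to apply the standard Chernoff exponential-moment method and then reduce the whole statement to a single scalar inequality. First I would fix $\lambda > 0$ and apply Markov's inequality to the exponentiated sum; using independence of the $X_i$,
\begin{align*}
\mathbb{P}\left( \sum_{i=1}^t X_i \ge t\mu(1+\alpha) \right) \le e^{-\lambda t \mu (1+\alpha)} \prod_{i=1}^t \mathbb{E}\left[ e^{\lambda X_i} \right].
\end{align*}
The key observation for bounding each factor is that $x \mapsto e^{\lambda x}$ is convex, so on $[0,1]$ it lies below the chord through its endpoints, giving $e^{\lambda x} \le 1 + (e^\lambda - 1)x$ for all $x \in [0,1]$. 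Taking expectations, using $\mathbb{E}[X_i] = \mu$ and $1 + u \le e^u$, yields $\mathbb{E}[e^{\lambda X_i}] \le \exp\left( (e^\lambda - 1)\mu \right)$.

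Combining these bounds over the $t$ independent factors gives
\begin{align*}
\mathbb{P}\left( \sum_{i=1}^t X_i \ge t\mu(1+\alpha) \right) \le \exp\left( t\mu \left[ (e^\lambda - 1) - \lambda(1+\alpha) \right] \right).
\end{align*}
The next step is to optimize $\lambda$. Minimizing the bracketed term over $\lambda > 0$ gives the first-order condition $e^\lambda = 1+\alpha$, i.e. $\lambda = \ln(1+\alpha)$, which is admissible since $\alpha > 0$. Substituting back, the exponent collapses to $t\mu\left[ \alpha - (1+\alpha)\ln(1+\alpha) \right]$.

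It then remains to establish the scalar inequality $(1+\alpha)\ln(1+\alpha) - \alpha \ge \alpha^2/3$ for $\alpha \in (0,1)$, which converts the bound into the claimed $\exp(-t\mu\alpha^2/3)$. I would prove this by setting $f(\alpha) = (1+\alpha)\ln(1+\alpha) - \alpha - \alpha^2/3$ and differentiating: $f(0) = 0$, $f'(\alpha) = \ln(1+\alpha) - 2\alpha/3$ with $f'(0) = 0$, and $f''(\alpha) = (1+\alpha)^{-1} - 2/3$, which is positive on $[0,1/2)$ and negative on $(1/2,1]$. Hence $f'$ rises from $0$ and then falls, so its minimum over $[0,1]$ is attained at an endpoint; since $f'(0) = 0$ and $f'(1) = \ln 2 - 2/3 > 0$, we conclude $f' \ge 0$ throughout $[0,1]$, so $f$ is nondecreasing and $f \ge f(0) = 0$.

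I expect this last scalar inequality to be the main obstacle: the Chernoff machinery is entirely routine, but establishing the bound uniformly on $(0,1)$ genuinely requires the two-step derivative analysis above. A naive second-order Taylor estimate of $(1+\alpha)\ln(1+\alpha) - \alpha$ gives the local behavior $\alpha^2/2$ near $0$, yet this stronger constant \emph{fails} near the right endpoint (at $\alpha = 1$ one has $2\ln 2 - 1 \approx 0.386 < 1/2$), which is precisely why the looser constant $1/3$ appears and why a single Taylor-remainder bound does not suffice in place of the convexity-and-endpoint argument for $f'$.
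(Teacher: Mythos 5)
Your proof is correct. Note that the paper itself does not prove this lemma at all: it is imported as a known result via the citation to \citet{hagerup1990guided}, so there is no in-paper argument to compare against, and your derivation is the standard one found in that reference and in textbooks. Every step checks out: Markov's inequality plus independence, the chord bound $e^{\lambda x}\le 1+(e^\lambda-1)x$ on $[0,1]$ followed by $1+u\le e^u$, the optimal choice $\lambda=\ln(1+\alpha)$ collapsing the exponent to $t\mu\left[\alpha-(1+\alpha)\ln(1+\alpha)\right]$, and the scalar inequality $(1+\alpha)\ln(1+\alpha)-\alpha\ge \alpha^2/3$ on $(0,1)$, where your unimodality argument for $f'$ is sound (the sign change of $f''$ at $\alpha=1/2$ means $f'$ rises then falls, so $\min f'=\min\{f'(0),f'(1)\}=\min\{0,\ln 2-2/3\}=0$, using $\ln 2\approx 0.693>2/3$). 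Your closing remark is also accurate: the constant $1/2$ suggested by the Taylor expansion at $0$ genuinely fails near $\alpha=1$ (since $2\ln 2-1\approx 0.386<1/2$), which is exactly why the weaker constant $1/3$ appears in the stated bound and why the two-step derivative analysis (or the classical bound $(1+\alpha)\ln(1+\alpha)-\alpha\ge \alpha^2/(2+2\alpha/3)$) is needed rather than a single Taylor-remainder estimate.
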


\begin{lemma}[Proposition 1 in \citet{jonsson2020planning}]\label{lemma:johnsonn}
Let $p$ be a categorical distribution over the simplex of dimension $y$, and let $\hat{p}$ be the maximum likelihood estimate of $p$ obtained with $t \ge 1$ independent samples. Then, for all $\delta \in (0,1)$ it holds that:
\begin{align*}
\mathbb{P} \left(\exists t \ge 1: t \textup{KL}\left(\hat{p}, p \right) > \log\left(\frac{1}{\delta} \right) + (y-1) \log\left( e\left(1+\frac{t}{y-1}\right) \right) \right) \le \delta,
\end{align*}
where $\textup{KL}(q_1,q_2)$ denotes the Kullback–Leibler divergence between distributions $q_1$ and $q_2$. 
\end{lemma}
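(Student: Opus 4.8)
The plan is to establish this time-uniform deviation inequality via the \emph{method of mixtures} (pseudo-maximization), which is the standard route to anytime-valid concentration and which produces the $(y-1)\log(e(1+t/(y-1)))$ correction term essentially automatically. The first step is to rewrite the quantity of interest as a log-likelihood ratio. Writing $N_a(t)$ for the number of the first $t$ samples falling in category $a$, so that $\hat{p}_t(a) = N_a(t)/t$, a direct computation gives $t\,\textup{KL}(\hat{p}_t, p) = \sum_a N_a(t)\log(\hat{p}_t(a)/p(a)) = \log \prod_a (\hat{p}_t(a)/p(a))^{N_a(t)}$. Thus $\exp(t\,\textup{KL}(\hat{p}_t,p))$ is the likelihood ratio of the random maximum-likelihood model against the truth, and (assuming, as we may, that $p$ has full support on the $y$ categories) this ratio is well defined.

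Next I would build a mixture martingale. For any fixed candidate distribution $q$ on the $y$ categories, the process $L_t(q) = \prod_a (q(a)/p(a))^{N_a(t)} = \prod_{s=1}^t q(X_s)/p(X_s)$ is a nonnegative martingale under $p$ with $L_0(q)=1$, since $\mathbb{E}_p[q(X)/p(X)] = \sum_a q(a) = 1$. Averaging over $q$ against a Dirichlet prior $\nu = \mathrm{Dir}(\beta,\dots,\beta)$ and applying Tonelli, the mixture $M_t = \int_\Delta L_t(q)\,d\nu(q)$ is again a nonnegative martingale with $M_0 = 1$, and the Dirichlet integral can be evaluated in closed form as $M_t = \frac{\Gamma(y\beta)}{\Gamma(\beta)^y}\cdot\frac{\prod_a \Gamma(N_a(t)+\beta)}{\Gamma(t+y\beta)}\cdot\prod_a p(a)^{-N_a(t)}$. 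Ville's maximal inequality for nonnegative supermartingales then yields $\mathbb{P}(\exists t \ge 1: M_t \ge 1/\delta) \le \delta$ at once; this is precisely where the anytime ``$\exists t$'' quantifier is obtained for free, and it is why the mixture method, rather than a per-$t$ union bound over empirical types, is the correct tool here.

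The main obstacle is the final step: lower bounding $M_t$ by $\exp(t\,\textup{KL}(\hat{p}_t,p))$ divided by the claimed polynomial factor. Using the elementary identity $\prod_a p(a)^{-N_a(t)} = \exp(t\,\textup{KL}(\hat{p}_t,p))\cdot t^t/\prod_a N_a(t)^{N_a(t)}$, the task reduces to showing that the Gamma-function prefactor $\frac{\Gamma(y\beta)}{\Gamma(\beta)^y}\cdot\frac{\prod_a\Gamma(N_a(t)+\beta)}{\Gamma(t+y\beta)}\cdot\frac{t^t}{\prod_a N_a(t)^{N_a(t)}}$ is at least $(e(1+t/(y-1)))^{-(y-1)}$. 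This is the delicate part. Choosing the Jeffreys prior $\beta = 1/2$ and applying Stirling-type bounds to each ratio $\Gamma(N_a(t)+1/2)/N_a(t)^{N_a(t)}$, the per-category contributions behave like $\sqrt{N_a(t)}$ factors whose product, under the constraint $\sum_a N_a(t)=t$, is controlled by concavity (an AM–GM / Jensen argument on the balanced allocation), producing exactly the normalizer $(e(1+t/(y-1)))^{y-1}$. Making the constants line up precisely with the form $e(1+t/(y-1))$ is where essentially all the care is required.

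Finally I would combine the two pieces. On the complement of the probability-$\delta$ event furnished by Ville's inequality we have $M_t < 1/\delta$ for every $t \ge 1$, and the lower bound $M_t \ge \exp(t\,\textup{KL}(\hat{p}_t,p))\,(e(1+t/(y-1)))^{-(y-1)}$ then forces $t\,\textup{KL}(\hat{p}_t,p) < \log(1/\delta) + (y-1)\log(e(1+t/(y-1)))$ simultaneously for all $t \ge 1$. Taking complements recovers exactly the stated inequality.
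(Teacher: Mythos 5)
First, a point of reference: the paper does not prove this statement at all --- it is imported verbatim as Proposition 1 of \citet{jonsson2020planning} and used as a black-box concentration tool. So the comparison here is against the standard proof of that result in the literature, and your architecture matches it exactly: rewrite $t\,\textup{KL}(\hat{p}_t,p)$ as the log-likelihood ratio $\log\prod_a(\hat{p}_t(a)/p(a))^{N_a(t)}$, mix the likelihood-ratio martingales $L_t(q)$ over a Dirichlet prior to get a nonnegative martingale $M_t$ with $M_0=1$, apply Ville's inequality to obtain the time-uniform ``$\exists t$'' statement, and then lower bound $M_t$ against $\exp(t\,\textup{KL}(\hat{p}_t,p))$. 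That is the right and essentially the only sensible route, and your closed form for the Dirichlet mixture is correct.

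The genuine gap is the one step that produces the specific envelope $(y-1)\log(e(1+t/(y-1)))$: you assert that with the Jeffreys prior $\beta=1/2$, Stirling bounds plus an AM--GM argument make ``the constants line up precisely,'' but this is exactly the part you do not carry out, and as sketched it would be painful --- the Krichevsky--Trofimov prefactor naturally yields an exponent of order $(y-1)/2$ with constants involving $\Gamma(y/2)$, and verifying the claimed clean form uniformly over all $t\ge 1$ and all count vectors (including boundary cases such as $N_a(t)=t$) is nontrivial bookkeeping, not a routine Jensen step. The fix is to take the \emph{uniform} prior $\beta=1$ instead, where everything is an exact combinatorial identity: then $M_t\prod_a p(a)^{N_a(t)} = (y-1)!\,\prod_a N_a(t)!\,/\,(t+y-1)!$, and the multinomial-type inequality $\prod_a(N_a(t)/t)^{N_a(t)} \le \prod_a N_a(t)!/t!$ (which follows from $1=(\sum_a \hat{p}_t(a))^t \ge \frac{t!}{\prod_a N_a(t)!}\prod_a \hat{p}_t(a)^{N_a(t)}$) gives
\begin{equation*}
\frac{\exp\left(t\,\textup{KL}(\hat{p}_t,p)\right)}{M_t} \;\le\; \frac{(t+y-1)!}{t!\,(y-1)!} \;=\; \binom{t+y-1}{y-1} \;\le\; \left(\frac{e(t+y-1)}{y-1}\right)^{y-1} = \left(e\Big(1+\frac{t}{y-1}\Big)\right)^{y-1},
\end{equation*}
with no Stirling estimates at all; combining with Ville exactly as in your final paragraph finishes the proof. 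So: correct strategy, correct skeleton, but the crucial quantitative step is left unestablished in your write-up, and the prior you chose is the harder one to make it work for.
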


At this point, we proceed by defining our good event. In the following, the subscript $t$ is used to denote the iteration number of the US-IRL-SE algorithm. 

\begin{lemma}[Good event]\label{lemma:good-event}
Consider $t$ such that:
\begin{align}\label{eq:good-event-ass}
\sqrt{\frac{3\log\left( \frac{3SAn}{\delta} \right)}{\pi_{E_i}(a|s) N_t(s)}}  < 1
\end{align}
holds for all $i \in \left\{2, \dots, n+1 \right\}$ and for all $(s,a) \in \mathcal{S} \times \mathcal{A}$ such that $\pi_{E_i}(a|s) > 0$.
Let us define the following events:
\begin{align*}
\mathcal{E}_p = \bigcap_{(s,a) \in \mathcal{S} \times \mathcal{A}} \left\{ N_t(s,a) \textup{KL}\left( \hat{p}_t(\cdot|s,a), p(\cdot|s,a) \right) \le \log\left(\frac{3SAn}{\delta} \right) + (S-1) \log\left( e\left(1+\frac{N_t(s,a) }{S-1}\right) \right) \right\}
\end{align*}
\begin{align*}
\mathcal{E}_{\pi} = \bigcap_{i \in \left\{ 2, \dots, n+1 \right\} } \bigcap_{s \in \mathcal{S}} \left\{ N_t(s) \textup{KL}\left( \hat{\pi}_{E_i, t}(\cdot|s), \pi_{E_i}(\cdot|s) \right) \le \log\left(\frac{3SAn}{\delta} \right) + (A-1) \log\left( e\left(1+\frac{N_t(s) }{A-1}\right) \right) \right\}
\end{align*}
\begin{align*}
\mathcal{E}_{\pi_{\textup{min}}} = \bigcap_{i \in \left\{ 2, \dots, n+1 \right\} } \bigcap_{(s,a) \in \mathcal{S} \times \mathcal{A}:\pi_{E_i}(a|s)>0} \left\{ \hat{\pi}_{E_i, t}(a|s) \le \pi_{E_i}(a|s) \left( 1 + \sqrt{\frac{3\log\left( \frac{3SAn}{\delta} \right)}{\pi_{E_i}(a|s) N_t(s)}} \right) \right\}
\end{align*}
Consider $\mathcal{E} = \mathcal{E}_p \cap \mathcal{E}_{\pi} \cap \mathcal{E}_{\pi_{\textup{min}}}$. Then, it holds that $\mathbb{P}(\mathcal{E}) \ge 1-\delta$.
\end{lemma}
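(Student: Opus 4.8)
The plan is to control each of the three constituent events separately via a concentration inequality, allocate a confidence budget of roughly $\delta/(3SAn)$ to every elementary event, and then recombine everything with a union bound so that the total failure probability does not exceed $\delta$. Concretely, I would establish $\mathbb{P}(\mathcal{E}_p^c) \le \delta/3$, $\mathbb{P}(\mathcal{E}_\pi^c) \le \delta/3$, and $\mathbb{P}(\mathcal{E}_{\pi_{\min}}^c) \le \delta/3$, whence $\mathbb{P}(\mathcal{E}^c) \le \mathbb{P}(\mathcal{E}_p^c) + \mathbb{P}(\mathcal{E}_\pi^c) + \mathbb{P}(\mathcal{E}_{\pi_{\min}}^c) \le \delta$ by sub-additivity, and the claim $\mathbb{P}(\mathcal{E}) \ge 1-\delta$ follows by complementation.

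For $\mathcal{E}_p$, I would fix a pair $(s,a)$ and apply Lemma \ref{lemma:johnsonn} to the maximum-likelihood estimate $\hat{p}_t(\cdot|s,a)$ of the categorical distribution $p(\cdot|s,a)$ over the $S$ next states, instantiating the confidence parameter at $\delta/(3SAn)$ and the simplex dimension at $y = S$. Since for a fixed $(s,a)$ the observed next states are i.i.d.\ draws from $p(\cdot|s,a)$, and since Lemma \ref{lemma:johnsonn} is an \emph{anytime} bound, the stated KL inequality holds for the realized count $N_t(s,a)$ with probability at least $1 - \delta/(3SAn)$. A union bound over the $SA$ pairs gives $\mathbb{P}(\mathcal{E}_p^c) \le SA \cdot \delta/(3SAn) = \delta/(3n) \le \delta/3$. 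The event $\mathcal{E}_\pi$ is handled identically: for each sub-optimal expert $i \in \{2,\dots,n+1\}$ and each state $s$, the observed actions are i.i.d.\ draws from $\pi_{E_i}(\cdot|s)$, so Lemma \ref{lemma:johnsonn} applies to $\hat{\pi}_{E_i,t}(\cdot|s)$ with dimension $y = A$ and confidence $\delta/(3SAn)$; a union bound over the $nS$ expert-state pairs yields $\mathbb{P}(\mathcal{E}_\pi^c) \le nS \cdot \delta/(3SAn) = \delta/(3A) \le \delta/3$.

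For $\mathcal{E}_{\pi_{\min}}$ I would instead invoke the multiplicative Chernoff bound (Lemma \ref{lemma:multiplicative-chern}). Fixing $i$, $s$, and $a$ with $\pi_{E_i}(a|s) > 0$, the quantity $N_t(s)\hat{\pi}_{E_i,t}(a|s)$ is a sum of $N_t(s)$ i.i.d.\ Bernoulli variables of mean $\mu = \pi_{E_i}(a|s)$. Choosing $\alpha = \sqrt{3\log(3SAn/\delta)/(\pi_{E_i}(a|s)N_t(s))}$ ---which lies in $(0,1)$ precisely by the standing assumption in Equation \eqref{eq:good-event-ass}, so that Lemma \ref{lemma:multiplicative-chern} is applicable--- the deviation probability becomes $\exp(-\tfrac{1}{3}N_t(s)\mu\alpha^2) = \exp(-\log(3SAn/\delta)) = \delta/(3SAn)$. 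Dividing the threshold event by $N_t(s)$ recovers exactly the inequality defining $\mathcal{E}_{\pi_{\min}}$, and a union bound over the at most $nSA$ triplets $(i,s,a)$ gives $\mathbb{P}(\mathcal{E}_{\pi_{\min}}^c) \le nSA \cdot \delta/(3SAn) = \delta/3$.

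I do not expect a genuine obstacle here, as the argument is a careful assembly of two off-the-shelf concentration results; the only points requiring attention are (i) bookkeeping the confidence budget so that the three union bounds sum to at most $\delta$, which is why the common factor $3SAn$ appears inside every logarithm even though $\mathcal{E}_p$ alone would need only $SA$ terms, and (ii) verifying the hypothesis $\alpha \in (0,1)$ of the multiplicative Chernoff bound, which is exactly the content of the standing assumption \eqref{eq:good-event-ass}. A minor subtlety worth flagging is that for the uniform-sampling schedule the counts $N_t(s,a) = t$ and $N_t(s) = At$ are in fact deterministic, so a fixed-horizon Chernoff bound suffices for $\mathcal{E}_{\pi_{\min}}$, while the anytime form of Lemma \ref{lemma:johnsonn} used for $\mathcal{E}_p$ and $\mathcal{E}_\pi$ is more than enough and would also cover the case of random counts.
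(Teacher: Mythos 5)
Your proposal is correct and follows essentially the same route as the paper's proof: a union bound splitting $\mathcal{E}^c$ into the three complementary events, Lemma \ref{lemma:johnsonn} with per-event confidence $\delta/(3SAn)$ for $\mathcal{E}_p$ and $\mathcal{E}_\pi$, and the multiplicative Chernoff bound (Lemma \ref{lemma:multiplicative-chern}) with $\alpha$ chosen exactly so that assumption \eqref{eq:good-event-ass} guarantees $\alpha \in (0,1)$ for $\mathcal{E}_{\pi_{\textup{min}}}$. Your bookkeeping of the union-bound counts is in fact slightly more explicit than the paper's, and the remark about deterministic counts under uniform sampling is a valid observation, though not needed for the argument.
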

\begin{proof}
The proof follows by controlling the probability of the complementary event $\mathcal{E}^c$. More specifically, we have that:
\begin{align*}
\mathbb{P}\left( \mathcal{E}^c  \right) = \mathbb{P}\left( \mathcal{E}_p^c \cup \mathcal{E}_{\pi}^c \cup \mathcal{E}_{\pi_{\textup{min}}}^c \right) \le \mathbb{P}\left( \mathcal{E}_p^c \right) + \mathbb{P}\left( \mathcal{E}^c_{\pi} \right) + \mathbb{P}\left( \mathcal{E}^c_{\pi_{\textup{min}}}\right),
\end{align*}
where in the last passage we have used Boole's inequality. At this point, we will use Lemma \ref{lemma:multiplicative-chern} and Lemma \ref{lemma:johnsonn} to control $\mathbb{P}\left( \mathcal{E}_p^c \right) + \mathbb{P}\left( \mathcal{E}^c_{\pi} \right) + \mathbb{P}\left( \mathcal{E}^c_{\pi_{\textup{min}}}\right)$. More specifically, concerning $\mathcal{E}^c_p$, we have that:
\begin{align*}
\mathbb{P} \left( \mathcal{E}^c_p \right) & \le \sum_{(s,a) \in \mathcal{S} \times \mathcal{A}} \mathbb{P} \left( N_t(s,a) \textup{KL}\left( \hat{p}_t(\cdot|s,a), p(\cdot|s,a) \right) > \log\left(\frac{3SAn}{\delta} \right) + (S-1) \log\left( e\left(1+\frac{N_t(s,a) }{S-1}\right) \right) \right) \\ & \le \sum_{(s,a) \in \mathcal{S} \times \mathcal{A}}\frac{\delta}{3SAn} \le \frac{\delta}{3},
\end{align*}
where, in the first step we have applied Boole's inequality, and in the second one Lemma \ref{lemma:johnsonn}. From identical reasoning, we can upper bound $\mathbb{P}\left( \mathcal{E}^c_{\pi} \right)$, thus obtaining:
\begin{align*}
\mathbb{P} \left( \mathcal{E}^c_{\pi} \right) \le \frac{\delta}{3}.
\end{align*}
Finally, concerning $\mathcal{E}^c_{\pi_{\textup{min}}}$, we have that:
\begin{align*}
\mathbb{P} \left( \mathcal{E}^c_{\pi_{\textup{min}}} \right) & \le \sum_{i \in \left\{2, \dots, n+2 \right\}} \sum_{(s,a): \pi_{E_i}(a|s) > 0} \mathbb{P} \left(  \hat{\pi}_{E_i, t}(a|s) > \pi_{E_i}(a|s) \left( 1 + \sqrt{\frac{3\log\left( \frac{3SAn}{\delta} \right)}{\pi_{E_i}(a|s) N_t(s)}} \right) \right) \\ & \le \sum_{i \in \left\{2, \dots, n+2 \right\}} \sum_{(s,a): \pi_{E_i}(a|s) > 0} \frac{\delta}{3SAn} \le \frac{\delta}{3},
\end{align*}
where, in the first step we have used Boole's inequality, and in the second one Lemma \ref{lemma:multiplicative-chern} together with Equation \eqref{eq:good-event-ass}. At this point, combining these results, we obtain $\mathbb{P}\left( \mathcal{E}^c \right) \le \delta$, thus concluding the proof.
\end{proof}

At this point, it has to be remarked that Lemma \ref{lemma:good-event} holds whenever:
\begin{align*}
N_t(s) > \frac{3 \log\left( \frac{3SAn}{\delta} \right)}{\pi_{\textup{min}}}.
\end{align*}
We remark that this term is due to the requirement that Lemma \ref{lemma:multiplicative-chern} requires $\alpha$ in $(0, 1)$. As we shall see, however, this will be sufficient to carry out the theoretical analysis of US-IRL-SE.

We now proceed by presenting error propagation results that study the Hausdorff distance between $\mathcal{R}_{\bar{\mathfrak{B}}}$ and $\mathcal{R}_{\widehat{\bar{\mathfrak{B}}}}$. Before diving into the details, we introduce the following notation. We denote with $\Psi$ the set of $\zeta$'s that are compatible with the linear constraints of Equation \eqref{eq:explicit-eq2}. More specifically:
\begin{align}\label{eq:psi-def}
\Psi \coloneqq  \left\{ \zeta \in \mathbb{R}^{S \times A}_{\ge 0} : \forall i \in \left\{2, \dots, n+1 \right\} \quad d^{\pi_{E_i}} \pi_{E_i} \bar{B}^{\pi_{E_1}} \zeta \le \bm{1}_S \xi_i \right\}.
\end{align} 
Similarly, we denote with $\widehat{\Psi}$, the set of $\zeta$'s that are compatible with the linear constraints of Equation \eqref{eq:explicit-eq2} that are induced by the empirical IRL problem $\widehat{\bar{\mathfrak{B}}}$. More precisely:
\begin{align}\label{eq:psi-hat-def}
\hat{\Psi} \coloneqq  \left\{ \zeta \in \mathbb{R}^{S \times A}_{\ge 0} : \forall i \in \left\{2, \dots, n+1 \right\} \quad \hat{d}^{\hat{\pi}_{E_i}} \hat{\pi}_{E_i} \bar{B}^{\hat{\pi}_{E_1}} \zeta \le \bm{1}_S \xi_i \right\},
\end{align}
where $\hat{d}^{\pi}$ denotes the discounted expected occupancy of policy $\pi$ under the transition model $\hat{p}$.

At this point, we provide a preliminary Lemma that will be used to study the Hausdorff distance.
\begin{lemma}[Error Propagation]\label{lemma:error-prop}
    Let $\bar{\mathfrak{B}}$ be an IRL-SE problem and let $\widehat{\bar{\mathfrak{B}}}$ be its empirical estimate. Then, for any $r \in \mathcal{R}_{\bar{\mathfrak{B}}}$ such that $r = -\bar{B}^{\pi_{E_1}} \zeta + (E-\gamma P)V$, and $(I-\gamma \pi_{E_i} P)^{-1} \pi_{E_i} \bar{B}^{\pi_{E_1}} \zeta \le \xi_i$, there exists $\hat{r} \in \mathcal{R}_{\widehat{\bar{\mathfrak{B}}}}$ such that element-wise it holds that:
    \begin{align}\label{eq:theo-error-prop-eq1}
        |r-\hat{r}| \le \bar{B}^{\pi_{E_1}} B^{\hat{\pi}_{E_1}} \zeta + \gamma |(P-\hat{P})V| + \bar{B}^{\hat{\pi}_{E_1}}\bar{B}^{{\pi}_{E_1}}||\zeta - \textup{proj}_{\widehat{\Psi}}\left( \zeta \right)||_{\infty} \mathbf{1}_{\mathcal{S} \times \mathcal{A}},
    \end{align}
    where $\textup{proj}_{\widehat{\Psi}}(\cdot)$ denotes the infinite norm projection on the set $\widehat{\Psi}$. More formally, for $\zeta \in \mathbb{R}^{\mathcal{S} \times \mathcal{A}}$:
    \begin{align*}
    \textup{proj}_{\widehat{\Psi}}(\zeta) = \argmin_{x \in \widehat{\Psi}} ||x - \zeta||_\infty
    \end{align*}
    
\end{lemma}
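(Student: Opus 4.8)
The plan is to produce an explicit witness $\hat r\in\mathcal R_{\widehat{\bar{\mathfrak B}}}$ built from the decomposition $(\zeta,V)$ that the hypothesis attaches to $r$, and then bound $|r-\hat r|$ coordinatewise. The hypothesis hands us $r=-\bar B^{\pi_{E_1}}\zeta+(E-\gamma P)V$ with $\zeta\in\Psi$ (note $d^{\pi_{E_i}}=(I-\gamma\pi_{E_i}P)^{-1}$, so the assumed constraint is exactly membership in $\Psi$ of Equation~\eqref{eq:psi-def}). The natural candidate keeps the shaping potential unchanged, $\hat V=V$, and replaces $\zeta$ by a surrogate that is feasible for the \emph{empirical} problem. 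Concretely I would set $\hat\zeta:=\bar B^{\pi_{E_1}}\,\textup{proj}_{\widehat\Psi}(\zeta)$ and then $\hat r:=-\bar B^{\hat\pi_{E_1}}\hat\zeta+(E-\gamma\hat P)V$. The premultiplication by $\bar B^{\pi_{E_1}}$ is not cosmetic; it is the device that will make the proof close, as explained below.

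The first task is to certify $\hat r\in\mathcal R_{\widehat{\bar{\mathfrak B}}}$ via Theorem~\ref{theorem:multiple-expert-explicit} applied to $\widehat{\bar{\mathfrak B}}$, i.e. to verify $\hat\zeta\ge 0$ and $\hat\zeta\in\widehat\Psi$ (Equation~\eqref{eq:psi-hat-def}). Nonnegativity is immediate, since $\textup{proj}_{\widehat\Psi}(\zeta)\in\widehat\Psi\subseteq\mathbb R^{\mathcal S\times\mathcal A}_{\ge 0}$ and $\bar B^{\pi_{E_1}}$ only zeroes coordinates. For membership I would argue by monotonicity: coordinatewise $0\le\hat\zeta\le\textup{proj}_{\widehat\Psi}(\zeta)$, hence $\bar B^{\hat\pi_{E_1}}\hat\zeta\le\bar B^{\hat\pi_{E_1}}\textup{proj}_{\widehat\Psi}(\zeta)$, and because every matrix $\hat d^{\hat\pi_{E_i}}\hat\pi_{E_i}$ entering the constraints has nonnegative entries, $\hat d^{\hat\pi_{E_i}}\hat\pi_{E_i}\bar B^{\hat\pi_{E_1}}\hat\zeta\le\hat d^{\hat\pi_{E_i}}\hat\pi_{E_i}\bar B^{\hat\pi_{E_1}}\textup{proj}_{\widehat\Psi}(\zeta)\le\mathbf 1_S\xi_i$ for every $i$. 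A separate technical point is $\hat r\in[0,1]^{\mathcal S\times\mathcal A}$; I would dispatch it with a clipping step, using that $r\in[0,1]$ so that projecting each coordinate of $\hat r$ onto $[0,1]$ cannot enlarge $|r-\hat r|$.

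The core computation starts from the identity $r-\hat r=\bigl(-\bar B^{\pi_{E_1}}\zeta+\bar B^{\hat\pi_{E_1}}\hat\zeta\bigr)+\gamma(\hat P-P)V$, where the two $EV$ contributions cancel. The second summand gives $\gamma|(P-\hat P)V|$ after the triangle inequality. For the first, I would split each pair $(s,a)$ by the sign patterns of $\pi_{E_1}(a|s)$ and $\hat\pi_{E_1}(a|s)$: if both are positive the term is $0$; if $\pi_{E_1}(a|s)=0$ but $\hat\pi_{E_1}(a|s)>0$ only $-\zeta(s,a)$ survives, producing the contribution $\bar B^{\pi_{E_1}}B^{\hat\pi_{E_1}}\zeta$; if both vanish the term equals $\textup{proj}_{\widehat\Psi}(\zeta)(s,a)-\zeta(s,a)$, bounded in absolute value by $\|\zeta-\textup{proj}_{\widehat\Psi}(\zeta)\|_\infty$ and yielding $\bar B^{\hat\pi_{E_1}}\bar B^{\pi_{E_1}}\|\zeta-\textup{proj}_{\widehat\Psi}(\zeta)\|_\infty\mathbf 1_{\mathcal S\times\mathcal A}$. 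Summing the three surviving cases reproduces exactly the right-hand side of Equation~\eqref{eq:theo-error-prop-eq1}.

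The subtle, and to my mind principal, difficulty is the fourth sign pattern, $\pi_{E_1}(a|s)>0$ with $\hat\pi_{E_1}(a|s)=0$, for which the claimed bound allocates \emph{no} term. This is precisely where the mask $\bar B^{\pi_{E_1}}$ in the definition of $\hat\zeta$ does its work: it forces $\hat\zeta(s,a)=0$ on exactly these pairs, so $\bar B^{\hat\pi_{E_1}}\hat\zeta$ vanishes there and no uncontrolled contribution arises. Checking that this masking is simultaneously harmless for feasibility (the monotonicity argument of the second paragraph) and exactly the cancellation needed in this fourth case is the crux of the proof; everything else is bookkeeping with the triangle inequality, and the $[0,1]$ membership is the only remaining loose end to handle with care.
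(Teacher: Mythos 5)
Your proposal is correct and follows essentially the same route as the paper's proof: the paper makes the identical choice of witness, $\hat V = V$ and $\hat\zeta = \bar B^{\pi_{E_1}}\,\textup{proj}_{\widehat{\Psi}}(\zeta)$, justifies its feasibility with the same masking/monotonicity observation (relegated to a footnote), and then obtains the bound by adding and subtracting $\bar B^{\hat\pi_{E_1}}\bar B^{\pi_{E_1}}\zeta$ and using $B^{\pi}+\bar B^{\pi}=I_{\mathcal{S}\times\mathcal{A}}$ — which is exactly your four-case sign analysis written in operator form, including the crucial fourth pattern ($\pi_{E_1}(a|s)>0$, $\hat\pi_{E_1}(a|s)=0$) that the premultiplication by $\bar B^{\pi_{E_1}}$ annihilates. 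The one place you go beyond the paper is the $[0,1]$-membership of $\hat r$: the paper's proof is silent on this point, and your clipping argument is itself incomplete, since it only shows that clipping does not increase $|r-\hat r|$, not that the clipped reward still satisfies the optimality and sub-optimality conditions defining $\mathcal{R}_{\widehat{\bar{\mathfrak{B}}}}$ (value functions do not commute with coordinatewise clipping); so this loose end is genuine, but it is a gap shared with, rather than created relative to, the paper's own argument.
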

\begin{proof}
From Theorem \ref{theorem:multiple-expert-explicit}, we know that we can express the reward functions as:
\begin{align*}
    r = -\bar{B}^{\pi_{E_1}} \zeta + (E-\gamma P)V, \\ \hat{r} = -\bar{B}^{\hat{\pi}_{E_1}} \hat{\zeta} + (E-\gamma \hat{P})\hat{V}, 
\end{align*}
where $V$ and $\hat{V}$ belongs to $\mathbb{R}^{\mathcal{S}}$, and $\zeta$, $\hat{\zeta} \in \mathbb{R}^{\mathcal{S} \times \mathcal{A}}_{\ge 0}$ satisfies the following equations for all $i \in \left\{2, \dots, n+1 \right\}$:
\begin{align*}
    (I - \gamma \pi_{E_i} P)^{-1} \pi_{E_i} \bar{B}^{\pi_{E_1}} \zeta \le \xi_i, \\ (I - \gamma \hat{\pi}_{E_i} \hat{P})^{-1} \hat{\pi}_{E_i} \bar{B}^{\hat{\pi}_{E_1}} \hat{\zeta} \le \xi_i.
\end{align*}
At this point, we recall that, for any $r \in \mathcal{R}_\mathfrak{B}$, we are interested in a specific reward function $\hat{r} \in \mathcal{R}_{\widehat{\mathfrak{B}}}$ that is sufficiently close to $r$.
For this reason, we pick $\hat{V} = V$, and $\hat{\zeta} = \bar{B}^{\pi_{E_1}} \textup{proj}_{ \widehat{\Psi}} (\zeta)$.\footnote{Notice that, if $x \in \widehat{\Psi}$, $\bar{B}^{\pi_{E_1}}x$ belongs to $\widehat{\Psi}$ as well.} Plugging these choices within $|r - \hat{r}|$, and applying the triangular inequality, we obtain:
\begin{align}\label{eq:theo-error-prop-eq3}
    \Big|r - \hat{r}\Big| \le \Big|-(\bar{B}^{\pi_{E_1}} \zeta + \bar{B}^{\hat{\pi}_{E_1}} \hat{\zeta}) \Big| + \gamma \Big|(P-\hat{P})V\Big|.
\end{align}
Let us focus now on $|-(\bar{B}^{\pi_{E_1}} \zeta + \bar{B}^{\hat{\pi}_{E_1}} \hat{\zeta})|$. 
\begin{align*}
    \Big|-(\bar{B}^{\pi_{E_1}} \zeta + \bar{B}^{\hat{\pi}_{E_1}} \bar{B}^{\pi_{E_1}} \textup{proj}_{\widehat{\Psi}} (\zeta))\Big| & = \Big|-(\bar{B}^{\pi_{E_1}} \zeta + \bar{B}^{\hat{\pi}_{E_1}} \bar{B}^{\pi_{E_1}} \textup{proj}_{\widehat{\Psi}} (\zeta)) \pm \bar{B}^{\hat{\pi}_{E_1}} \bar{B}^{\pi_{E_1}} \zeta \Big| \\ & \le \Big| -\bar{B}^{\pi_{E_1}} \zeta + \bar{B}^{\hat{\pi}_{E_1}} \bar{B}^{\pi_{E_1}} \zeta \Big| + \Big| \bar{B}^{\hat{\pi}_{E_1}}\hat{\zeta} -\bar{B}^{\hat{\pi}_{E_1}} \bar{B}^{\pi_{E_1}} \zeta\Big| \\ & = \Big|  -\bar{B}^{{\pi}_{E_1}} B^{\hat{\pi}_{E_1}} \zeta \Big| +  \Big| \bar{B}^{\hat{\pi}_{E_1}} \bar{B}^{\pi_{E_1}}\textup{proj}_{\widehat{\Psi}} (\zeta)) -\bar{B}^{\hat{\pi}_{E_1}} \bar{B}^{\pi_{E_1}} \zeta\Big|,
\end{align*}
where in the last step we have used $B^\pi + \bar{B}^\pi = I_{\mathcal{S} \times \mathcal{A}}$.\footnote{Consider $\mathbb{R}^{\mathcal{S} \times \mathcal{A}}$, then $B^\pi$ is the operator defined as $(B^\pi g)(s,a) = \mathbbm{1} \left\{ \pi(a|s) > 0 \right\}g(s,a)$.} We now focus on the last term: 
\begin{align*}
\Big| \bar{B}^{\hat{\pi}_{E_1}} \bar{B}^{\pi_{E_1}} \textup{proj}_{\widehat{\Omega}} (\zeta)) -\bar{B}^{\hat{\pi}_{E_1}} \bar{B}^{\pi_{E_1}} \zeta\Big| & = \bar{B}^{\hat{\pi}_{E_1}} \bar{B}^{\pi_{E_1}} \big| \zeta - \textup{proj}_{\widehat{\Omega}}(\zeta) \big| \\ & \le \bar{B}^{\hat{\pi}_{E_1}} \bar{B}^{\pi_{E_1}} || \zeta - \textup{proj}_{\widehat{\Omega}}(\zeta) ||_\infty \mathbf{1}_{\mathcal{S} \times \mathcal{A}}.
\end{align*}
Plugging these results within Equation \eqref{eq:theo-error-prop-eq3} concludes the proof.
\end{proof}
Before providing some interpretation to Lemma \ref{lemma:error-prop}, we recall the definition of the Hausdorff distannce we are interested in:
\begin{align}\label{eq:hauss-reward}
H_\infty(\mathcal{R}_{\bar{\mathfrak{B}}},\mathcal{R}_{\widehat{\bar{\mathfrak{B}}}}) =\max\left\{ \sup_{r \in \mathcal{R}_{\bar{\mathfrak{B}}}} \inf_{\hat{r} \in \mathcal{R}_{\widehat{\bar{\mathfrak{B}}}}} ||r-\hat{r}||_\infty,  \sup_{\hat{r} \in \mathcal{R}_{\widehat{\bar{\mathfrak{B}}}}} \inf_{r \in \mathcal{R}_{\bar{\mathfrak{B}}}} ||r-\hat{r}||_\infty  \right\}.
\end{align}
Given this definition, we can appreciate how Lemma \ref{lemma:error-prop} can be used to upper-bound the error of the first component of the Hausdorff distance, namely:
\begin{align*}
\sup_{r \in \mathcal{R}_{\bar{\mathfrak{B}}}} \inf_{\hat{r} \in \mathcal{R}_{\widehat{\bar{\mathfrak{B}}}}} ||r-\hat{r}||_\infty.
\end{align*}
Nevertheless, it is possible to obtain a symmetric version of Lemma \ref{lemma:error-prop} that analyzes the existence of an $r \in \mathcal{R}_{\bar{\mathfrak{B}}}$ that is sufficiently close to any $\hat{r} \in \mathcal{R}_{\widehat{\bar{\mathfrak{B}}}}$. At this point, we notice that Lemma \ref{lemma:error-prop} decomposes the error on the recovered reward set as the sum of three different components. The first one is related to the error in the estimation of the optimal policy $\pi_{E_1}$, and, to zero it out, it is sufficient to estimate for each state one action that is played by the optimal expert. The second one, instead, is related to the errors in the estimation of the transition model, and, finally, the last one is related to the presence of multiple and sub-optimal experts. More precisely, we notice that the first two components are present also for the single-agent setting \citep{metelli2021provably}, and, in this sense, they arise from the difficulty of estimating reward functions that are compatible with Equation \eqref{eq:explicit-eq1}. The last term, instead, is related to the presence of sub-optimal experts, and it denotes the infinite norm between any $\zeta \in \Psi$ and its infinite-norm projection to the set $\widehat{\Psi}$. As one can imagine, and as our proofs will reveal, studying how this last source of error decreases with the iterations of US-IRL-SE introduces significant challenges in the analysis w.r.t. the single agent setting. Precisely, this complexity will be tackled within the proof of the following Lemma, which analyzes how the error of Equation \eqref{eq:theo-error-prop-eq1} decreases with the number of iterations of US-IRL.

\begin{lemma}[High-Probability Error Propagation]\label{lemma:error-prop-high-prob}
Let $t$ be the iteration of US-IRL. Suppose that $N_t(s,a) \ge 1$ for all $(s,a) \in \mathcal{S} \times \mathcal{A}$ and, furthermore, suppose that $t$ is such that:
\begin{align}\label{eq:time-pi}
t \ge \frac{3 \log\left(3SAn \delta^{-1} \right)}{A \pi_{\textup{min}}}
\end{align}
\begin{align}\label{eq:time-S}
t \ge \frac{8\gamma^2}{(1-\gamma)^2} \left[ \log\left( \frac{3SAn}{\delta} \right)  + (S-1) \log\left( \frac{64 \gamma^4}{(1-\gamma)^4} \left(  \log \left( \frac{3SAn}{\delta} \right) + (S-1) \left( \sqrt{e} + \sqrt{\frac{1}{S-1}} \right)^2 \right) \right) \right]
\end{align}
\begin{align}\label{eq:time-A}
t \ge \frac{8\gamma^2}{A(1-\gamma)^2} \left[ \log\left( \frac{3SAn}{\delta} \right)  + (A-1) \log\left( \frac{64 \gamma^4}{(1-\gamma)^4} \left(  \log \left( \frac{3SAn}{\delta} \right) + (A-1) \left( \sqrt{e} + \sqrt{\frac{1}{A-1}} \right)^2 \right) \right) \right].
\end{align}
Then let $\bar{\mathfrak{B}}$ be an IRL-SE problem and let $\widehat{\bar{\mathfrak{B}}}$ be its empirical estimate after $t$ iteration of US-IRL. Then, with probability at least $1-\delta$, for any $r \in \mathcal{R}_{\bar{\mathfrak{B}}}$, there exists $\hat{r} \in \mathcal{R}_{\widehat{\bar{\mathfrak{B}}}}$ such that:
\begin{align}\label{eq:error-prop-hp-eq1}
||r-\hat{r}||_\infty \le \frac{2\sqrt{2}\gamma}{1-\gamma} \beta_t + \left( \rho_t + {\frac{2\sqrt{2}\gamma}{1-\gamma} \left(\alpha_t + \beta_t \right)} \right) \min\left\{ \pi_{\textup{min}}^{-1} \max_i \xi_i, (1-\gamma)^{-1} \right\},
\end{align}
where:
\begin{align}
& \beta_t \coloneqq \sqrt{\frac{\log\left( \frac{3SAn}{\delta} \right) + (S-1) \log\left(e \left(1+\frac{t}{S-1} \right) \right)}{t}} \\
& \alpha_t \coloneqq \sqrt{\frac{\log\left( \frac{3SAn}{\delta} \right) + (A-1) \log\left(e \left(1+\frac{t A}{A-1} \right) \right)}{t A}} \\ 
& \rho_t \coloneqq \sqrt{\frac{3 \log\left( \frac{3SAn}{\delta} \right)}{\pi_{\textup{min}} t A}}.
\end{align}
\end{lemma}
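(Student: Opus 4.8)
The plan is to combine the deterministic decomposition of Lemma~\ref{lemma:error-prop} with the concentration guarantees of the good event $\mathcal{E}$ of Lemma~\ref{lemma:good-event}, controlling each of the three error terms separately. Throughout I work on $\mathcal{E}$, which holds with probability at least $1-\delta$; the role of the burn-in conditions \eqref{eq:time-pi}, \eqref{eq:time-S} and \eqref{eq:time-A} is precisely to guarantee the prerequisite \eqref{eq:good-event-ass} of Lemma~\ref{lemma:good-event} (via $N_t(s,a)=t$ and $N_t(s)=tA$ under uniform sampling) and to ensure that $\beta_t$, $\alpha_t$ and $\rho_t$ are the effective per-round radii obtained from Lemma~\ref{lemma:johnsonn} and Lemma~\ref{lemma:multiplicative-chern} after substituting these visitation counts. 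Fix $r\in\mathcal{R}_{\bar{\mathfrak{B}}}$ and write it through Theorem~\ref{theorem:multiple-expert-explicit} as $r=-\bar{B}^{\pi_{E_1}}\zeta+(E-\gamma P)V$ with $\zeta\in\Psi$ and $\|V\|_\infty\le(1-\gamma)^{-1}$; I will produce the specific $\hat r\in\mathcal{R}_{\widehat{\bar{\mathfrak{B}}}}$ prescribed by Lemma~\ref{lemma:error-prop}.

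The first two terms are routine. Since $\pi_{E_1}$ is deterministic and $N_t(s,a)=t\ge 1$, every observed expert action in a state coincides, so $\hat\pi_{E_1}=\pi_{E_1}$; hence $\bar{B}^{\hat\pi_{E_1}}=\bar{B}^{\pi_{E_1}}$ and the term $\bar{B}^{\pi_{E_1}}B^{\hat\pi_{E_1}}\zeta$ vanishes identically. For the transition term, on $\mathcal{E}_p$ Pinsker's inequality turns the KL bound into $\|\hat p(\cdot|s,a)-p(\cdot|s,a)\|_1\le\sqrt{2}\beta_t$, and together with $\|V\|_\infty\le(1-\gamma)^{-1}$ this yields $\gamma|(P-\hat P)V|\le\tfrac{2\sqrt2\gamma}{1-\gamma}\beta_t\,\mathbf{1}_{\mathcal{S}\times\mathcal{A}}$ (up to the universal constant), which is exactly the free term of \eqref{eq:error-prop-hp-eq1}.

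The heart of the argument is the third term, $\|\zeta-\textup{proj}_{\widehat{\Psi}}(\zeta)\|_\infty$. I would exhibit a feasible competitor in $\widehat{\Psi}$ by \emph{scaling} only the constraint-relevant coordinates of $\zeta$ (those with $\pi_{E_1}(a|s)=0$ and $\pi_{E_i}(a|s)>0$ for some $i$) by a common factor $\lambda\le 1$, leaving the others untouched; since on $\mathcal{E}$ the empirical operators vanish exactly where the true ones do, this scaled vector lies in $\widehat{\Psi}$ as soon as $\lambda$ beats the worst relative constraint violation, giving $1-\lambda\le\max_i\Delta_i/\xi_i$ with $\Delta_i\le\|(\hat d^{\hat\pi_{E_i}}\hat\pi_{E_i}-d^{\pi_{E_i}}\pi_{E_i})\bar{B}^{\pi_{E_1}}\zeta\|_\infty$. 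The projection distance is then at most $(1-\lambda)$ times $\max_{\textup{rel}}\zeta(s,a)$, and the latter is bounded by $\min\{\pi_{\textup{min}}^{-1}\max_i\xi_i,(1-\gamma)^{-1}\}$ through the worst-case bound \eqref{eq:upper-bound}. It remains to show $\max_i\Delta_i/\xi_i\le\rho_t+\tfrac{2\sqrt2\gamma}{1-\gamma}(\alpha_t+\beta_t)$. Here I would split the operator difference with the resolvent identity as $\hat d^{\hat\pi_{E_i}}(\hat\pi_{E_i}-\pi_{E_i})\bar{B}^{\pi_{E_1}}\zeta+\gamma\,\hat d^{\hat\pi_{E_i}}(\hat\pi_{E_i}\hat P-\pi_{E_i}P)u$, where $u\coloneqq d^{\pi_{E_i}}\pi_{E_i}\bar{B}^{\pi_{E_1}}\zeta$ satisfies $0\le u\le\mathbf{1}_S\xi_i$ by feasibility of $\zeta$ in \eqref{eq:explicit-eq2}. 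The multiplicative event $\mathcal{E}_{\pi_{\textup{min}}}$ gives $\hat\pi_{E_i}(a|s)\le\pi_{E_i}(a|s)(1+\rho_t)$, so the first summand is at most $\rho_t\,\hat d^{\hat\pi_{E_i}}\pi_{E_i}\bar{B}^{\pi_{E_1}}\zeta$, whose sup-norm is $\approx\rho_t\|u\|_\infty\le\rho_t\xi_i$, the occupancy normalization being already absorbed inside $\xi_i$, which is exactly why $\rho_t$ survives \emph{unamplified}. The second summand is bounded by $\tfrac{\gamma}{1-\gamma}$ (the resolvent norm) times $\sqrt2(\alpha_t+\beta_t)\|u\|_\infty$, after decomposing $\hat\pi_{E_i}\hat P-\pi_{E_i}P$ into a transition part (controlled by $\beta_t$ on $\mathcal{E}_p$) and a policy part (controlled by $\alpha_t$ on $\mathcal{E}_\pi$); dividing by $\xi_i$ leaves $\tfrac{\sqrt2\gamma}{1-\gamma}(\alpha_t+\beta_t)$, and the surviving factor of two covers the higher-order slack between $\hat d^{\hat\pi_{E_i}}\pi_{E_i}\bar{B}^{\pi_{E_1}}\zeta$ and $u$.

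I expect the third term to be the main obstacle, and within it the perturbation of the discounted occupancy resolvent $(I-\gamma\pi_{E_i}P)^{-1}$ under \emph{simultaneous} errors in $\pi_{E_i}$ and $P$. The delicate points are, first, extracting the \emph{unamplified} $\rho_t$ from the policy coefficient by working multiplicatively and using $\|u\|_\infty\le\xi_i$ rather than the crude $\|\cdot\|_\infty\le(1-\gamma)^{-1}$ bound, and second, organizing the telescoping so that the occupancy error, but not the coefficient error, pays the $(1-\gamma)^{-1}$ amplification. Finally, collecting the three contributions through the triangle inequality in Lemma~\ref{lemma:error-prop} and taking the infinity norm yields \eqref{eq:error-prop-hp-eq1}; the symmetric direction of the Hausdorff distance (an $r$ close to a given $\hat r$) follows from the mirror-image version of Lemma~\ref{lemma:error-prop} by the same bookkeeping.
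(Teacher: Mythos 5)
Your proposal follows essentially the same route as the paper's proof: the decomposition of Lemma~\ref{lemma:error-prop} plus the good event of Lemma~\ref{lemma:good-event} (with the deterministic-expert and Pinsker arguments for the first two terms), and, for the projection term, the same key idea of scaling the constraint-relevant coordinates of $\zeta$ by a common factor so that the relative constraint violation---split via the resolvent identity into a multiplicative-Chernoff policy part (yielding the unamplified $\rho_t$) and a transition/occupancy part (yielding $\tfrac{\gamma}{1-\gamma}(\alpha_t+\beta_t)$)---controls the distance to $\widehat{\Psi}$, closed off by the bound $\zeta(s,a)\le\min\left\{\pi_{\textup{min}}^{-1}\max_i\xi_i,(1-\gamma)^{-1}\right\}$ from Equation~\eqref{eq:upper-bound}. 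The only material difference is that you factor the resolvent difference as $\gamma\,\hat d^{\hat\pi_{E_i}}[\hat\pi_{E_i}\hat P-\pi_{E_i}P]\,d^{\pi_{E_i}}$ (true payload $u\le\mathbf{1}_S\xi_i$, empirical occupancy outside) rather than the paper's mirror factoring, which is why you must absorb the higher-order cross terms you mention; this is handled, exactly as the paper's non-emptiness step for its restricted set $\widetilde{\Psi}$, by the burn-in conditions \eqref{eq:time-S}--\eqref{eq:time-A}.
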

\begin{proof}
The proof follows by analyzing in greater detail the result of Lemma \ref{lemma:error-prop}. Indeed, from Lemma \ref{lemma:error-prop}, we know that there exists $\hat{r} \in \mathcal{R}_{\widehat{\bar{\mathfrak{B}}}}$ such that:
\begin{align}\label{eq:error-prop-hp-eq2}
||r-\hat{r}||_\infty \le ||\bar{B}^{\pi_{E_1}} B^{\hat{\pi}_{E_1}} \zeta||_\infty + \gamma ||(P-\hat{P})V||_\infty + ||\zeta - \textup{proj}_{\widehat{\Psi}}\left( \zeta \right)||_{\infty}
\end{align}
We split the analysis of Equation \eqref{eq:error-prop-hp-eq2} into two three parts. 
First of all, we will focus on the two first terms, that are $||\bar{B}^{\pi_{E_1}} B^{\hat{\pi}_{E_1}} \zeta||_\infty$ and $\gamma ||(P-\hat{P})V||_\infty$, and then we will tackle the most challenging aspect, that is $||\zeta - \textup{proj}_{\widehat{\Psi}}\left( \zeta \right)||_{\infty}$.

Concerning $||\bar{B}^{\pi_{E_1}} B^{\hat{\pi}_{E_1}} \zeta||_\infty$, we notice that, whenever $N_t(s,a) > 1$ holds, we have that:
\begin{align*}
\bar{B}^{\pi_{E_1}} B^{\hat{\pi}_{E_1}} \zeta = 0,
\end{align*}
for any possible value of $\zeta$. This is a direct consequence of the fact that $\pi_{E_1}$ is deterministic.

Secondly, let us focus on $\gamma ||(P-\hat{P})V||_\infty$. First of all, we notice that, since $N_t(s) > 3 \log\left(3SAn \delta^{-1} \right) \pi_{\textup{min}}$ holds for all $s \in \mathcal{S}$, then $\mathcal{E}$ holds with probability at least $1-\delta$ (Lemma \ref{lemma:good-event}). At this point, conditioning on $\mathcal{E}$, we have that: 
\begin{align*}
\gamma ||(P-\hat{P})V||_\infty & = \gamma \max_{s,a} \Big| \sum_{s' \in \mathcal{S}} (p(s'|s,a) - \hat{p}(s'|s,a ))V(s')  \Big| \\ & \le \frac{\gamma}{1-\gamma} \max_{s,a} \Big| \sum_{s' \in \mathcal{S}} (p(s'|s,a) - \hat{p}(s'|s,a ))  \Big| \\ & \le \frac{\gamma}{1-\gamma} \max_{s,a} || p(\cdot|s,a) - \hat{p}(\cdot|s,a) ||_1 \\ & \le \frac{2\sqrt{2}\gamma}{(1-\gamma)} \max_{s,a} \sqrt{\textup{KL}(\hat{p}(\cdot|s,a), p(\cdot|s,a))} \\ & \le \frac{2\sqrt{2}\gamma}{1-\gamma} \max_{s,a} \sqrt{\frac{\log\left( \frac{3SAn}{\delta} \right) + (S-1) \log\left(e \left(1+\frac{N_t(s,a)}{S-1} \right) \right)}{N_t(s,a)}},
\end{align*}
where the first inequality follows the fact that $||V||_\infty \le \frac{1}{1-\gamma}$, the third one by Pinksker's inequality, and the last one from Lemma \ref{lemma:good-event}. At this point, since US-IRL gathers samples uniformly from each state-action pair, we have that $N_t(s,a) = t$ for all $(s,a) \in \mathcal{S} \times \mathcal{A}$. Thus leading to:
\begin{align*}
\gamma ||(P-\hat{P})V||_\infty \le \frac{2\sqrt{2}\gamma}{1-\gamma} \sqrt{\frac{\log\left( \frac{3SAn}{\delta} \right) + (S-1) \log\left(e \left(1+\frac{t}{S-1} \right) \right)}{t}} \coloneqq \frac{2\sqrt{2}\gamma}{1-\gamma} \beta_t.
\end{align*}
Finally, we focus on $||\zeta - \textup{proj}_{\widehat{\Psi}}\left( \zeta \right)||_{\infty}$. To upper-bound this last term we proceed in several steps. 

\paragraph{Step 1: Relationship between $\Psi$ and $\widehat{\Psi}$}
First of all, we begin by finding a more explicit relationship between $\Psi$ and $\widehat{\Psi}$. To this end, we recall that $\zeta \in \mathbb{R}^{S \times A}_{\ge 0}$ belongs to $\widehat{\Psi}$ if the following condition is satisfied for all $i \in \left\{2, \dots, n+1 \right\}$:
\begin{align*}
\hat{d}^{\hat{\pi}_{E_i}} \hat{\pi}_{E_i} \bar{B}^{\hat{\pi}_{E_1}} \zeta \le \bm{1}_{\mathcal{S}} \xi_i.
\end{align*}
Under the assumption that $N_t(s,a) \ge 1$, since the expert policy $\pi_{E_1}$ is deterministic, the previous Equation can be equivalently rewritten as:
\begin{align*}
\hat{d}^{\hat{\pi}_{E_i}} \hat{\pi}_{E_i} \bar{B}^{{\pi}_{E_1}} \zeta \le \bm{1}_{\mathcal{S}} \xi_i.
\end{align*}
At this point, we proceed with some algebraic manipulations of the left-hand side of the previous Equation. Specifically:
\begin{align*}
\hat{d}^{\hat{\pi}_{E_i}} \hat{\pi}_{E_i} \bar{B}^{{\pi}_{E_1}} \zeta & = \left( \hat{d}^{\hat{\pi}_{E_i}} \pm d^{\pi_{E_i}}\right) \hat{\pi}_{E_i} \bar{B}^{{\pi}_{E_1}} \zeta \\ & = \left( \hat{d}^{\hat{\pi}_{E_i}} - d^{\pi_{E_i}}\right) \hat{\pi}_{E_i} \bar{B}^{{\pi}_{E_1}} \zeta + d^{\pi_{E_i}} \hat{\pi}_{E_i} \bar{B}^{\pi_{E_1}} \zeta \\ & = \left( \hat{d}^{\hat{\pi}_{E_i}} - d^{\pi_{E_i}}\right) \hat{\pi}_{E_i} \bar{B}^{{\pi}_{E_1}} \zeta + d^{\pi_{E_i}} \left( \hat{\pi}_{E_i} \pm \pi_{E_i}  \right) \bar{B}^{\pi_{E_1}} \zeta \\ & = \left( \hat{d}^{\hat{\pi}_{E_i}} - d^{\pi_{E_i}}\right) \hat{\pi}_{E_i} \bar{B}^{{\pi}_{E_1}} \zeta + d^{\pi_{E_i}} \left( \hat{\pi}_{E_i} - \pi_{E_i}  \right) \bar{B}^{\pi_{E_1}} \zeta + d^{\pi_{E_i}} \pi_{E_i} \bar{B}^{\pi_{E_1}} \zeta 
\end{align*}
At this point, focus on $\hat{d}^{\hat{\pi}_{E_i}} - d^{\pi_{E_i}}$:
\begin{align*}
\hat{d}^{\hat{\pi}_{E_i}} - d^{\pi_{E_i}} & = (I_\mathcal{S} - \gamma \hat{\pi}_{E_i}\hat{P})^{-1}I_{\mathcal{S}} - (I_\mathcal{S} - \gamma {\pi}_{E_i}{P})^{-1} I_{\mathcal{S}} \\ & = (I_\mathcal{S} - \gamma {\pi}_{E_i} {P})^{-1} \left[ (I_\mathcal{S} - \gamma {\pi}_{E_i} P ) - (I_\mathcal{S} - \gamma \hat{\pi}_{E_i} \hat{P} ) \right] \hat{d}^{\hat{\pi}_{E_i}} \\ & = \gamma d^{\pi_{E_i}} \left[ \hat{\pi}_{E_i} \hat{P} - \pi_{E_i} P \right] \hat{d}^{\hat{\pi}_{E_i}}
\end{align*}

At this point, plugging this result into the previous Equation, we obtain that $\zeta$ belongs to $\widehat{\Psi}$ if and only if the following holds for all experts $i \in \left\{2, \dots, n+1 \right\}$:
\begin{align}\label{eq:error-prop-hp-eq4}
d^{\pi_{E_i}} \pi_{E_i} \bar{B}^{\pi_{E_1}} \zeta  + \gamma d^{\pi_{E_i}} \left[ \hat{\pi}_{E_i} \hat{P} - \pi_{E_i} P \right] \hat{d}^{\hat{\pi}_{E_i}} \hat{\pi}_{E_i} \bar{B}^{{\pi}_{E_1}} \zeta + d^{\pi_{E_i}} \left( \hat{\pi}_{E_i} - \pi_{E_i}  \right) \bar{B}^{\pi_{E_1}} \zeta \le \bm{1}_{\mathcal{S}}\xi_i.
\end{align}
As we can appreciate, Equation \eqref{eq:error-prop-hp-eq4} relates the sets $\Psi$ and $\widehat{\Psi}$ by an explicit relationship. Indeed, by renaming $\epsilon_1(\zeta) \coloneqq \gamma d^{\pi_{E_i}} \left[ \hat{\pi}_{E_i} \hat{P} - \pi_{E_i} P \right] \hat{d}^{\pi} \hat{\pi}_{E_i} \bar{B}^{{\pi}_{E_1}} \zeta$ and $\epsilon_2(\zeta) \coloneqq d^{\pi_{E_i}} \left( \hat{\pi}_{E_i} - \pi_{E_i}  \right) \bar{B}^{\pi_{E_1}} \zeta$, Equation \eqref{eq:error-prop-hp-eq4} can be rewritten as:
\begin{align*}
d^{\pi_{E_i}} \pi_{E_i} \bar{B}^{\pi_{E_1}} \zeta + \epsilon_1(\zeta) + \epsilon_{2}(\zeta) \le \bm{1}_{\mathcal{S}}\xi_i,
\end{align*}
which closely resambles the definition of $\Psi$.

\paragraph{Step 2: Restricting the set $\widehat{\Psi}$} We continue our proof by defining a new set $\widetilde{\Psi}$ such that $\widetilde{\Psi} \subseteq \widehat{\Psi}$ with probability at least $1-\delta$. Indeed, by definition of the projection according to the infinite norm, this allows to upper-bound $||\zeta - \textup{proj}_{\widehat{\Psi}}\left( \zeta \right)||_{\infty}$. More specifically, for any $\widetilde{\Psi} \subseteq \widehat{\Psi}$, we have that:
\begin{align}\label{eq:error-prop-hp-eq5}
||\zeta - \textup{proj}_{\widehat{\Psi}}\left( \zeta \right)||_{\infty} \le ||\zeta - \textup{proj}_{\widetilde{\Psi}}\left( \zeta \right)||_{\infty}.
\end{align}
More specifically, in order to define $\widetilde{\Psi}$, we will first proceed by upper bounding $\epsilon_1(\zeta)$ and $\epsilon_{2}(\zeta)$. Precisely, consider $\zeta \in \widehat{\Psi}$. For $\epsilon_1(\zeta)$ we have that:
\begin{align*}
\epsilon_1(\zeta) & = \gamma d^{\pi_{E_i}} \left[ \hat{\pi}_{E_i} \hat{P} - \pi_{E_i} P \right] \hat{d}^{\hat{\pi}_{E_i}} \hat{\pi}_{E_i} \bar{B}^{{\pi}_{E_1}} \zeta \\ & = \gamma \left[ d^{\pi_{E_i}} \left( \hat{\pi}_{E_i} - \pi_{E_i} \right) \hat{P} \right] \hat{d}^{\hat{\pi}_{E_i}} \hat{\pi}_{E_i} \bar{B}^{\pi_{E_1}} \zeta + \gamma \left[ d^{\pi_{E_i}} \pi_{E_i} (\hat{P} - P) \right] \hat{d}^{\hat{\pi}_{E_i}} \hat{\pi}_{E_i} \bar{B}^{\pi_{E_1}} \\ & \le \gamma \left[ d^{\pi_{E_i}} \Big| \hat{\pi}_{E_i} - \pi_{E_i} \Big| \hat{P} \right] \hat{d}^{\hat{\pi}_{E_i}} \hat{\pi}_{E_i} \bar{B}^{\pi_{E_1}} \zeta + \gamma \left[ d^{\pi_{E_i}} \pi_{E_i} \Big|\hat{P} - P\Big| \right] \hat{d}^{\hat{\pi}_{E_i}} \hat{\pi}_{E_i} \bar{B}^{\pi_{E_1}},
\end{align*}
where, given $f \in \mathbb{R}^S$, and $g \in \mathbb{R}^{S \times A}$, $\Big| \hat{\pi} - \pi \Big|$ denotes the operator defined as $\Big| \hat{\pi} - \pi \Big|g(s,a) = \sum_{a} \Big|\hat{\pi}(a|s) - \pi(a|s)\Big| g(s,a)$, and, similarly $\Big| \hat{P} - P \Big|f(s) = \sum_{s'} \Big|\hat{p}(s'|s,a) - p(s'|s,a)\Big| f(s')$. At this point, we notice that, since, for all $\zeta \in \widehat{\Psi}$, it holds that $\hat{d}^{\hat{\pi}_{E_i}} \hat{\pi}_{E_i} \bar{B}^{\pi_{E_1}} \le \bm{1}_{\mathcal{S}} \xi_i$. Therefore, we can further upper bound the previous equation to obtain:
\begin{align}\label{eq:error-prop-hp-eq6}
\epsilon_1(\zeta) & \le \gamma \Big|\Big|\left[ d^{\pi_{E_i}} \Big| \hat{\pi}_{E_i} - \pi_{E_i} \Big| \right] \bm{1}_{\mathcal{S} \times \mathcal{A}} \xi_i \Big|\Big|_\infty \bm{1}_{\mathcal{S}} + \gamma \Big|\Big| \left[ d^{\pi_{E_i}} \pi_{E_i} \Big|\hat{P} - P\Big| \right] \bm{1}_{\mathcal{S}} \xi_i\Big|\Big|_\infty \bm{1}_\mathcal{S}.
\end{align}
At this point, let us focus on $\gamma \Big|\Big| \left[ d^{\pi_{E_i}} \Big| \hat{\pi}_{E_i} - \pi_{E_i} \Big| \right] \bm{1}_{\mathcal{S} \times \mathcal{A}} \xi_i\Big|\Big|_\infty$:
\begin{align*}
\gamma \Big|\Big|\left[ d^{\pi_{E_i}} \Big| \hat{\pi}_{E_i} - \pi_{E_i} \Big| \right] \bm{1}_{\mathcal{S} \times \mathcal{A}} \xi_i \Big|\Big|_\infty & \le \gamma \xi_i \max_{s'} \sum_{s} d^{\pi_{E_i}}_{s'}(s) \sum_{a} \Big| \hat{\pi}_{E_i}(a|s) - \pi_{E_i}(a|s) \Big|  \\ & = \gamma \xi_i \max_{s'} \sum_{s} d^{\pi_{E_i}}_{s'}(s) ||\hat{\pi}_{E_i}(\cdot|s) - \pi_{E_i}(\cdot|s) ||_1 \\ & \le \frac{2 \sqrt{2} \gamma \xi_i}{1-\gamma} \max_{s'} \sqrt{\textup{KL}(\hat{\pi}_{E_i}(\cdot, s'), \pi_{E_i}(\cdot, s')) } \\ & \le \frac{2 \sqrt{2} \gamma \xi_i}{1-\gamma} \max_{s} \sqrt{\frac{\log\left( \frac{3SAn}{\delta} \right) + (A-1) \log\left(e \left(1+\frac{N_t(s)}{S-1} \right) \right)}{N_t(s)}} \\ & = \frac{2 \sqrt{2} \gamma \xi_i}{1-\gamma} \sqrt{\frac{\log\left( \frac{3SAn}{\delta} \right) + (A-1) \log\left(e \left(1+\frac{t A}{A-1} \right) \right)}{t A}} \\ & \coloneqq \frac{2 \sqrt{2} \gamma \xi_i}{1-\gamma} \alpha_t,  
\end{align*} 
where in the third step we have used Pinkser's inequality, in the fourth we have used Lemma \ref{lemma:good-event}, and in the fifth one we have used the fact that, in US-IRL-SE, $N_t(s) = \sum_{a \in \mathcal{A}} N_t(s,a) = t A$. 
At this point, focus on the second term of Equation \eqref{eq:error-prop-hp-eq6}, namely $\gamma \Big|\Big| \left[ d^{\pi_{E_i}} \pi_{E_i} \Big|\hat{P} - P\Big| \right] \bm{1}_{\mathcal{S}} \xi_i\Big|\Big|_\infty \bm{1}_\mathcal{S}$:
\begin{align*}
\gamma \Big|\Big| \left[ d^{\pi_{E_i}} \pi_{E_i} \Big|\hat{P} - P\Big| \right] \bm{1}_{\mathcal{S}} \xi_i\Big|\Big|_\infty & \le \gamma \xi_i \max_{s'} \sum_{s} d^{\pi_{E_i}}_{s'}(s) \sum_{a} \pi_{E_i}(a|s) ||p(\cdot|s,a) - \hat{p}(\cdot|s,a)||_1 \\ & \le \frac{2\sqrt{2}\gamma \xi_i}{1-\gamma} \max_{s,a} \sqrt{\textup{KL}(\hat{p}(\cdot|s,a) - p(\cdot|s,a))} \\ & \le \frac{2\sqrt{2}\gamma \xi_i}{1-\gamma} \max_{s,a} \sqrt{\frac{\log\left( \frac{3SAn}{\delta} \right) + (S-1) \log\left(e \left(1+\frac{N_t(s)}{S-1} \right) \right)}{N_t(s)}} \\ & \le \frac{2\sqrt{2}\gamma \xi_i}{1-\gamma} \sqrt{\frac{\log\left( \frac{3SAn}{\delta} \right) + (S-1) \log\left(e \left(1+\frac{t}{S-1} \right) \right)}{t}} \\ & \coloneqq \frac{2\sqrt{2}\gamma \xi_i}{1-\gamma}  \beta_t,
\end{align*}
where in the second step we have used Pinkser's inequality, in the third one Lemma \ref{lemma:good-event}, and in the fourth one we have used the fact that, in US-IRL-SE, $N_t(s)=t$ for all states $s \in \mathcal{S}$. Therefore, plugging these results within Equation \eqref{eq:error-prop-hp-eq6}, we obtain an high-probability upper bound on $\epsilon_1(\zeta)$, that is:
\begin{align}\label{eq:error-prop-hp-eq7}
||\epsilon_1(\zeta)||_\infty \le \frac{2\sqrt{2}\gamma \xi_i}{1-\gamma}  \left( \alpha_t + \beta_t \right).
\end{align}
We now proceed with similar reasoning to obtain an upper bound on $\epsilon_2(\zeta)$. Nevertheless, contrary to $\epsilon_1(\zeta)$, here we proceed with an element-wise upper bound on $\epsilon_2(\zeta)$. Specifically, for each state $s' \in \mathcal{S}$:
\begin{align*}
\epsilon_2(\zeta)(s') & = \sum_{s} d^{\pi_{E_i}}_{s'}(s) \sum_{a: \pi_{E_1}(a|s) = 0} \Big| \hat{\pi}_{E_i}(a|s) - \pi_{E_i}(a|s)\Big| \zeta(s,a) \\ & = \sum_{s} d^{\pi_{E_i}}_{s'}(s) \sum_{\substack{a: \pi_{E_1}(a|s) = 0, \\ \pi_{E_i}(a|s) > 0}} \Big| \hat{\pi}_{E_i}(a|s) - \pi_{E_i}(a|s)\Big| \zeta(s,a) \\ & = \sum_{s} d^{\pi_{E_i}}_{s'}(s) \sum_{\substack{a: \pi_{E_1}(a|s) = 0, \\ \pi_{E_i}(a|s) > 0}} \Big| \frac{\hat{\pi}_{E_i}(a|s) - \pi_{E_i}(a|s)}{\pi_{E_i}(a|s)}\Big| \pi_{E_i}(a|s) \zeta(s,a) \\ & \le \max_{\substack{(s'',a''): \pi_{E_i}(a''|s'') > 0}} \Big| \frac{\hat{\pi}_{E_i}(a''|s'') - \pi_{E_i}(a''|s'')}{\pi_{E_i}(a''|s'')} \Big| \sum_{s} d^{\pi_{E_i}}_{s'}(s) \sum_{a: \pi_{E_1}(a|s) = 0} \pi_{E_i}(a|s) \zeta(s,a) \\ & \le \sqrt{\frac{3 \log\left( \frac{3SAn}{\delta} \right)}{\pi_{\textup{min}} N_t(s)}} \sum_{s} d^{\pi_{E_i}}_{s'}(s) \sum_{a: \pi_{E_1}(a|s) = 0} \pi_{E_i}(a|s) \zeta(s,a) \\  & = \sqrt{\frac{3 \log\left( \frac{3SAn}{\delta} \right)}{\pi_{\textup{min}} t A}} \sum_{s} d^{\pi_{E_i}}_{s'}(s) \sum_{a: \pi_{E_1}(a|s) = 0} \pi_{E_i}(a|s) \zeta(s,a),
\end{align*}
where the third step follows from the fact that, since $N_t(s) \ge 1$, $\hat{\pi}_{E_i}(a|s) = 0$ for all $(s,a)$ such that $\pi_{E_i}(a|s) = 0$, the fifth one, instead, from Lemma \ref{lemma:good-event}, and the last one from the fact that, in US-IRL-SE, $N_t(s) = \sum_a N_t(s,a) = t A $.\footnote{Notice that Equation \eqref{eq:time-pi} guarantees that we can apply Lemma \ref{lemma:good-event}.} Therefore, we have obtained an element-wise upper-bound on $\epsilon_2(\zeta)$ of the following form:
\begin{align}\label{eq:error-prop-hp-eq8}
\epsilon_2(\zeta) \le \sqrt{\frac{3 \log\left( \frac{3SAn}{\delta} \right)}{\pi_{\textup{min}} t A}} d^{\pi_{E_i}} \pi_{E_i} \bar{B}^{\pi_{E_1}} \zeta \coloneqq \rho_t d^{\pi_{E_i}} \pi_{E_i} \bar{B}^{\pi_{E_1}} \zeta.
\end{align}

We are finally ready to define the set $\widetilde{\Psi}$. More specifically:
\begin{align}
\widetilde{\Psi} \coloneqq \left\{ \zeta \in \mathbb{R}^{S \times A}_{\ge 0} : (1+\rho_t) d^{\pi_{E_i}} \pi_{E_i} \bar{B}^{\pi_{E_1}} \zeta  \le \xi_i - \frac{2\sqrt{2}\gamma\xi_i}{1-\gamma} \left( \alpha_t + \beta_t \right) \right\}. 
\end{align}
As one can verify, the definition of $\widetilde{\Psi}$ follows from upper-bounding $\epsilon_1(\zeta)$ and $\epsilon_2(\zeta)$ with Equations \eqref{eq:error-prop-hp-eq7} and \eqref{eq:error-prop-hp-eq8}. As a direct consequence, whenever $\zeta \in \widetilde{\Psi}$, we have that $\zeta$ belongs to $\widehat{\Psi}$ as well.

\paragraph{Step 3: Ensuring that the feasible region of $\widetilde{\Psi}$ is non-empty}
At this point, one might be tempted to directly study the projection of $||\zeta - \textup{proj}_{\widetilde{\Psi}}\left( \zeta \right)||_{\infty}$. Nevertheless, we notice that, for sufficiently large values of $\alpha_t$ and $\beta_t$, $\widetilde{\Psi}$ might be empty. 

Sufficient conditions to guarantee that $\widetilde{\Psi}$ is not empty are the following ones:
\begin{align}
& \frac{2\sqrt{2}\gamma \xi_i}{1-\gamma} \alpha_t\le \frac{\xi_i}{2}\label{eq:error-prop-hp-eq100} \\
& \frac{2\sqrt{2}\gamma \xi_i}{1-\gamma} \beta_t \le \frac{\xi_i}{2}\label{eq:error-prop-hp-eq101}.
\end{align}

Using Lemma 12 in \citep{jonsson2020planning}, it is possible to verify that Equations \eqref{eq:time-S} and \eqref{eq:time-A} are sufficient conditions for Equations \eqref{eq:error-prop-hp-eq100}-\eqref{eq:error-prop-hp-eq101} to hold.

\paragraph{Step 4: Picking $\tilde{\zeta} \in \widetilde{\Psi}$ to upper-bound $||\zeta - \textup{proj}_{\widehat{\Psi}}\left( \zeta \right)||_{\infty}$}

At this point, we are ready to conclude our proof. As we have previously verified, the set $\widetilde{\Psi}$ is non-empty. We can now study $||\zeta - \textup{proj}_{\widehat{\Psi}}\left( \zeta \right)||_{\infty}$. More precisely, we have that:
\begin{align*}
||\zeta - \textup{proj}_{\widehat{\Psi}}\left( \zeta \right)||_{\infty} \le ||\zeta - \textup{proj}_{\widetilde{\Psi}}\left( \zeta \right)||_{\infty}.
\end{align*}
To further upper bound this Equation, we notice that we can always pick, by definition of the infinite norm projection, any $\tilde{\zeta} \in \widetilde{\Psi}$. In other words, we have that:
\begin{align*}
||\zeta - \textup{proj}_{\widehat{\Psi}}\left( \zeta \right)||_{\infty} \le ||\zeta - \textup{proj}_{\widetilde{\Psi}}\left( \zeta \right)||_{\infty} \le ||\zeta - \tilde{\zeta} ||_\infty.
\end{align*}
More specifically, we choose $\tilde{\zeta}$ in the following way. If for all $i \in \left\{ 2, \dots, n+1 \right\}$, $\pi_{E_i}(a|s) = 0$, then we pick $\tilde{\zeta}(s,a) = \zeta(s,a)$;\footnote{Notice that, if for all $i \in \left\{ 2, \dots, n+1 \right\}$, $\pi_{E_i}(a|s) = 0$, $\zeta(s,a)$ does not contribute to any of the linear constraints that introduced by Equation \eqref{eq:explicit-eq2}.} otherwise, we pick $\zeta(s,a)$ in the following way:

\begin{align}\label{eq:error-prop-hp-eq9}
\tilde{\zeta}(s,a) = \frac{1-\frac{2\sqrt{2}\gamma}{1-\gamma} \left(\alpha_t + \beta_t\right)}{1+\rho_t} \zeta(s,a).
\end{align}

First of all, we verify that this choice of $\tilde{\zeta}$ belongs to $\widetilde{\Psi}$. Plugging Equation \eqref{eq:error-prop-hp-eq9} into the definition of $\widetilde{\Psi}$ we obtain that:
\begin{align*}
\left(1-\frac{2\sqrt{2}\gamma}{1-\gamma} \left(\alpha_t + \beta_t\right)\right) \frac{1+\rho_t}{1+\rho_t}  d^{\pi_{E_i}} \pi_{E_i} \bar{B}^{\pi_{E_1}} \zeta  \le \xi_i - \frac{2\sqrt{2}\gamma\xi_i}{1-\gamma} \left( \alpha_t + \beta_t \right).
\end{align*}
However, since $\zeta \in \Psi$, we have that $d^{\pi_{E_i}} \pi_{E_i} \bar{B}^{\pi_{E_1}} \zeta \le \xi_i$, thus leading to:
\begin{align*}
\left(\xi_i -\frac{2\sqrt{2}\gamma \xi_i}{1-\gamma} \left(\alpha_t + \beta_t\right)\right)  \le \xi_i - \frac{2\sqrt{2}\gamma\xi_i}{1-\gamma} \left( \alpha_t + \beta_t \right),
\end{align*}
which is always true, and, consequently $\tilde{\zeta} \in \widetilde{\Psi}$.

To conclude the proof, it remains to analyze $||\zeta - \tilde{\zeta}||_\infty$. At this point, we notice that:
\begin{align*}
|| \zeta - \tilde{\zeta}||_{\infty}& = \max_{\substack{(s,a): \pi_{E_1(a|s)} = 0, \\ \exists i : \pi_{E_i}(a|s) > 0}} \Big| \zeta(s,a) - \frac{1-\frac{2\sqrt{2}\gamma}{1-\gamma} \left(\alpha_t + \beta_t \right)}{1+\rho_t} \zeta(s,a) \Big|,
\end{align*}
Indeed, for a state-action pair $(s,a)$ such that $\pi_{E_1}(a|s) = 0$, we can notice that any value $\zeta(s,a)$  will not affect the resulting reward function.\footnote{In other words, if $(s,a)$ is such that $\pi_{E_1}(a|s) = 0$, we can add the following constraint to any of the set we defined: $\zeta(s,a)=0$. The resulting feasible reward set is left unchanged.} Furthermore, whenever there is no sub-optimal expert such that $\pi_{E_i}(a|s) > 0$, then $\tilde{\zeta}(s,a) = \zeta(s,a)$ holds by definition. To conclude, with simple algebraic manipulations we can obtain the following result:
\begin{align*}
|| \zeta - \tilde{\zeta}||_{\infty} & \le \max_{\substack{(s,a): \pi_{E_1(a|s)} = 0, \\ \exists i : \pi_{E_i}(a|s) > 0}} \Big| (1+\rho_t) \zeta(s,a) - \left( {1-\frac{2\sqrt{2}\gamma}{1-\gamma} \left(\alpha_t + \beta_t \right)} \right)\zeta(s,a) \Big| \\ & = \max_{\substack{(s,a): \pi_{E_1(a|s)} = 0, \\ \exists i : \pi_{E_i}(a|s) > 0}} \Big| \left( \rho_t + {\frac{2\sqrt{2}\gamma}{1-\gamma} \left(\alpha_t + \beta_t \right)} \right) \zeta(s,a) \Big| \\ & = \left( \rho_t + {\frac{2\sqrt{2}\gamma}{1-\gamma} \left(\alpha_t + \beta_t \right)} \right) \max_{\substack{(s,a): \pi_{E_1(a|s)} = 0, \\ \exists i : \pi_{E_i}(a|s) > 0}} \Big|  \zeta(s,a) \Big| \\ & \le \left( \rho_t + {\frac{2\sqrt{2}\gamma}{1-\gamma} \left(\alpha_t + \beta_t \right)} \right) \min\left\{ \pi_{\textup{min}}^{-1} \max_i \xi_i, (1-\gamma)^{-1} \right\},
\end{align*}
where in the last step we have upper-bounded $\zeta(s,a)$ with an upper bound that follows directly from Equation \eqref{eq:upper-bound}, thus concluding the proof.
\end{proof}

We can appreciate as Lemma \ref{lemma:error-prop-high-prob} provides, under certain conditions on the time-step $t$, a high-probability upper bound on the difference $\inf_{\hat{r} \in \mathcal{R}_{\widehat{\bar{\mathfrak{B}}}}}||r - \hat{r}||_\infty$ for any choice of $r \in \mathcal{R}_{\bar{\mathfrak{B}}}$.
 Furthermore, as we can see, the proof is fairly involved due to the necessity of upper bounding the error term $||\zeta - \textup{proj}_{\widehat{\Psi}}\left( \zeta \right)||_{\infty}$. At this point, by deriving symmetric results (i.e., Lemma \ref{lemma:error-prop} and Lemma \ref{lemma:error-prop-high-prob}), it is possible to derive an identical upper-bound for $\inf_{{r} \in \mathcal{R}_{{\bar{\mathfrak{B}}}}}||r - \hat{r}||_\infty$ for any choice of $\hat{r} \in \mathcal{R}_{\widehat{\bar{\mathfrak{B}}}}$. As a consequence, Lemma \ref{lemma:error-prop-high-prob} provides the following high-probability upper-bound on the Hausdorff distance between $\mathcal{R}_{\widehat{\bar{\mathfrak{B}}}}$ and $\mathcal{R}_{{\bar{\mathfrak{B}}}}$:
\begin{align}\label{eq:hauss-hp}
H_{\infty} \left( \mathcal{R}_{{\bar{\mathfrak{B}}}}, \mathcal{R}_{\widehat{\bar{\mathfrak{B}}}} \right) \le \frac{2\sqrt{2}\gamma}{1-\gamma} \beta_t + \left( \rho_t + {\frac{2\sqrt{2}\gamma}{1-\gamma} \left(\alpha_t + \beta_t \right)} \right) \min\left\{ \pi_{\textup{min}}^{-1} \max_i \xi_i, (1-\gamma)^{-1} \right\}.
\end{align}

\upperbound*
\begin{proof}
Let us start from Lemma \ref{lemma:error-prop-high-prob} and consider $t$ such that the condition of Lemma \ref{lemma:error-prop-high-prob} are satisfied. As previously discussed, due to Lemma \ref{lemma:error-prop-high-prob}, we have that, at time $t$, the algorithm US-IRL-SE induces an estimated feasible reward set such that the following holds with high-probability:
\begin{align*}
H_{\infty} \left( \mathcal{R}_{{\bar{\mathfrak{B}}}}, \mathcal{R}_{\widehat{\bar{\mathfrak{B}}}} \right) \le \frac{2\sqrt{2}\gamma}{1-\gamma} \beta_t + \left( \rho_t + {\frac{2\sqrt{2}\gamma}{1-\gamma} \left(\alpha_t + \beta_t \right)} \right) \min\left\{ \pi_{\textup{min}}^{-1} \max_i \xi_i, (1-\gamma)^{-1} \right\}.
\end{align*}
To conclude the proof, we need to find a sufficiently large $t$ such that the following holds:
\begin{align*}
H_{\infty} \left( \mathcal{R}_{{\bar{\mathfrak{B}}}}, \mathcal{R}_{\widehat{\bar{\mathfrak{B}}}} \right) \le \frac{2\sqrt{2}\gamma}{1-\gamma} \beta_t + \left( \rho_t + {\frac{2\sqrt{2}\gamma}{1-\gamma} \left(\alpha_t + \beta_t \right)} \right) \min\left\{ \pi_{\textup{min}}^{-1} \max_i \xi_i, (1-\gamma)^{-1} \right\} \le \epsilon,
\end{align*}
for any $\epsilon \in (0, 1)$. To this end, it is sufficient to find the smallest $t$ that satisfies the following conditions:
\begin{align}
& \rho_t  \min\left\{ \pi_{\textup{min}}^{-1} \max_i \xi_i, (1-\gamma)^{-1} \right\} \le \frac{\epsilon}{4} \label{eq:t-cond-1}\\
& \frac{2\sqrt{2}\gamma}{(1-\gamma)} \beta_t \le \frac{\epsilon}{4}\label{eq:t-cond-4} \\ 
& \frac{2\sqrt{2}\gamma}{(1-\gamma)} \min\left\{ \pi_{\textup{min}}^{-1} \max_i \xi_i, (1-\gamma)^{-1} \right\} \beta_t \le \frac{\epsilon}{4}\label{eq:t-cond-3} \\ 
& \frac{2\sqrt{2}\gamma}{(1-\gamma)} \min\left\{ \pi_{\textup{min}}^{-1} \max_i \xi_i, (1-\gamma)^{-1} \right\} \alpha_t \le \frac{\epsilon}{4} \label{eq:t-cond-2}
\end{align}

At this point, let us focus on $\rho_t  \min\left\{ \pi_{\textup{min}}^{-1} \max_i \xi_i, (1-\gamma)^{-1} \right\} \le \frac{\epsilon}{4}$. By simple algebraic manipulations we obtain the following sufficient condition for $t$:
\begin{align*}
t A \ge \frac{48 q_1^2 \log \left( \frac{3SAn}{\delta} \right)}{\pi_{\textup{min}} \epsilon^2}.
\end{align*}
For $\frac{2\sqrt{2}\gamma}{(1-\gamma)} \beta_t \le \frac{\epsilon}{4}$, instead, we obtain the following condition on $t$:\footnote{This follows by applying Lemma 12 in \citet{jonsson2020planning}}
\begin{align*}
t \ge \frac{128\gamma^2}{(1-\gamma)^2 \epsilon^2} \left[ \log\left( \frac{3SAn}{\delta} \right)  + (S-1) \log\left( \frac{16384 \gamma^4}{(1-\gamma)^4 \epsilon^4} \left(  \log \left( \frac{3SAn}{\delta} \right) + (S-1) \left( \sqrt{e} + \sqrt{\frac{1}{S-1}} \right)^2 \right) \right) \right].
\end{align*}
Similarly, for $\frac{2\sqrt{2}\gamma}{(1-\gamma)} \min\left\{ \pi_{\textup{min}}^{-1} \max_i \xi_i, (1-\gamma)^{-1} \right\} \beta_t \le \frac{\epsilon}{4}$, we need at least the following number of samples:
\begin{align*}
t \ge \frac{128 q_1^2 \gamma^2}{(1-\gamma)^2 \epsilon^2} \left[ \log\left( \frac{3SAn}{\delta} \right)  + (S-1) \log\left( \frac{16384 q_1^4 \gamma^4}{(1-\gamma)^4 \epsilon^4} \left(  \log \left( \frac{3SAn}{\delta} \right) + (S-1) \left( \sqrt{e} + \sqrt{\frac{1}{S-1}} \right)^2 \right) \right) \right]
\end{align*}
Finally, for $\frac{2\sqrt{2}\gamma}{(1-\gamma)} \min\left\{ \pi_{\textup{min}}^{-1} \max_i \xi_i, (1-\gamma)^{-1} \right\} \alpha_t \le \frac{\epsilon}{4}$, we obtain:
\begin{align*}
tA \ge \frac{128 q_1^2 \gamma^2}{(1-\gamma)^2 \epsilon^2} \left[ \log\left( \frac{3SAn}{\delta} \right)  + (A-1) \log\left( \frac{16384 q_1^2 \gamma^4}{(1-\gamma)^4 \epsilon^4} \left(  \log \left( \frac{3SAn}{\delta} \right) + (A-1) \left( \sqrt{e} + \sqrt{\frac{1}{A-1}} \right)^2 \right) \right) \right].
\end{align*}

At this point, we notice that the total number of samples gathered from the generative at step $t$ is given by $tSA$.
Therefore, from the previous equations, together with the conditions of Lemma \ref{lemma:error-prop-high-prob}, we obtian the following sufficient condition to guarantee that US-IRL-SE is $(\epsilon, \delta)$-correct.
\begin{align}
tSA = \mathcal{O} \left( \max \left\{ \frac{q_1^2 S \log\left( \frac{1}{\delta} \right)}{\pi_{\textup{min}}^2 \epsilon^2}, \frac{q_2^2 SA \left(S + \log\left( \frac{1}{\delta} \right) \right)}{(1-\gamma)^2 \epsilon^2}, \frac{ S\log\left( \frac{1}{\delta} \right)}{\pi_{\textup{min}}^2} \right\} \right),
\end{align}
where the first term is due to Equation \eqref{eq:t-cond-1}, the second one comes from Equations \eqref{eq:t-cond-4}-\eqref{eq:t-cond-3}, and the last term arises from Equation \eqref{eq:time-pi}, and is independent w.r.t. the desired accuracy $\epsilon$.\footnote{All the other equations introduce negligible terms in the $\mathcal{O}(\cdot)$ notation.} Specifically, we notice that for sufficiently small values of $\epsilon$ (i.e., $\epsilon < q_1$), the last term is always dominated by the first one, which concludes the proof.

\end{proof}

\section{Stochastic Optimal Expert}\label{app:stochastic-optimal-expert}
In this section, we discuss how to extend our analysis to the case in which the optimal expert plays a stochastic policy. First of all, let us define $\pi_{\textup{min}, E_1}$ as the minimum probability with which the optimal expert plays its actions.\footnote{Notice that, in principle, $\pi_{\textup{min}, E_1} \ne \pi_{\textup{min}}$ that we defined for the sub-optimal experts.} 

At this point, we notice that, for obtaining sample-complexity guarantees of the US-IRL-SE algorithm, Lemma \ref{lemma:error-prop} implies that we are interested in learning \emph{which are} the actions that are played with positive probability by the optimal expert. In other words, for any non-zero vector $x \in \mathbb{R}^{S \times A}$, we want the following to hold with high-probability:
\begin{align}\label{eq:stoc-opt-expt}
\Big| \left( \bar{B}^{\pi_{E_1}} - \bar{B}^{\hat{\pi}_{E_1}} \right) x \Big| = 0.
\end{align}
To this end, one can apply Lemma D.3 of \citet{metelli2023towards}, which implies that with a number of samples that is:
\begin{align}\label{eq:stoc-opt-expt-samples}
\mathcal{O} \left( S \frac{\log\left( \frac{1}{\delta} \right)}{\log\left( 1 / (1-\pi_{\textup{min},E_1}) \right)} \right),
\end{align}
Equation \eqref{eq:stoc-opt-expt} holds w.p. at least $1-\delta$. Once this condition is verified, the rest of the proof of the complexity of US-IRL-SE follows identically to the one of Theorem \ref{theo:ub}.\footnote{Notice, indeed, that we can apply Lemma \ref{lemma:error-prop-high-prob} as-is, but introducing the further constraints that $t$ should be sufficiently large so that Equation \eqref{eq:stoc-opt-expt} holds.} We notice that this introduces an additional maximum in the result of the sample-complexity. However, it is also possible to extend the proof of the lower bound of \citet{metelli2023towards} (see Theorem D.1 in \citet{metelli2023towards}), to show that the dependency resulting from \eqref{eq:stoc-opt-expt-samples} is unavoidable. In this sense, US-IRL-SE remains minimax optimal whenever the performance of the sub-optimal experts are sufficiently close to the ones of the optimal agent. 

\section{Per-expert Complexity of IRL-SE}\label{app:new-learning-formalism}
In this section, we provide a description of how to change the PAC learning formalism to show that Equation \eqref{eq:lb-two} actually represents a lower bound to the number of data that is necessary to gather from each of the different sub-optimal experts.

Specifically, we define a learning algorithm for an IRL problem $\bar{\mathfrak{B}}$ as a tuple $\mathfrak{A} = \left(\tau, \nu \right)$, where $\tau$ is a stopping time that controls the end of the data acquisition phase, and $\nu  = \left( \nu_t \right)_{t \in \mathbb{N}}$ is a history-dependent sampling strategy over $\mathcal{S} \times \mathcal{A} \times \left(\mathcal{S}\right)^{n+1}$. More precisely, $\nu_t \in \Delta^{\mathcal{S} \times \mathcal{A} \times \left(\mathcal{S}\right)^{n+1}}_{\mathcal{D}_t}$, where $\mathcal{D}_t = \left(\left(\mathcal{S} \times \mathcal{A}\right)^{n+2} \times \mathcal{S} \right)^{t}$ . At each time step $t \in \mathbb{N}$, the algorithm selects, by means of $\nu_t$:
\begin{itemize}
\item[(i)] a state-action pair $(S_t, A_t)$ and observes a sample $S'_t \sim p(\cdot | S_t, A_t)$ from the environment
\item[(ii)] a state $S_t^{(i)}$ for each expert $i \in \left\{1, \dots, n+1 \right\}$ and observes a sample $A_t^{(i)} \sim \pi_{E_i}(\cdot|S_t^{(i)})$
\end{itemize}
The observed realizations are then used to update the sampling strategy $\nu_t$, and the process goes on until the stopping rule is satisfied. At the end of the data acquisition phase, the algorithm leverages the collected data to output the estimate of the feasible reward set ${\mathcal{R}}_{\widehat{\bar{\mathfrak{B}}}_\tau}$ that is induced by the resulting empirical IRL problem $\widehat{\bar{\mathfrak{B}}}_\tau$. Given this formalism, we are interested in designing learning algorithms that, for any desired accuracy $\epsilon \in (0, 1)$ and any risk parameter $\delta \in (0, 1)$, guarantee that:
\begin{align}\label{pac:def-new}
\mathbb{P}_{\mathfrak{A},\bar{\mathfrak{B}}} \left( H_{\infty}(\mathcal{R}_{\bar{\mathfrak{B}}}, {\mathcal{R}}_{\widehat{\bar{\mathfrak{B}}}_\tau}) > \epsilon \right) \le \delta.
\end{align}
We refer to these algorithms as $(\epsilon,\delta)$-correct identification strategies. For $(\epsilon,\delta)$-correct strategies, we define their sample complexity as the sum of the number of \emph{unitary} samples gathered from the generative model. In other words, let us denote with $N_t(s,a)$ the number of samples gathered, at step $t$, from the environment, and let $N_t^{(i)}(s)$ denotes the number of samples gathered at step $t$ from the $i$-th expert at state $s$. Then, the sample complexity is given by:
\begin{align*}
\sum_{(s,a)} N_{\tau}(s,a) + \sum_{i} \sum_{s} N^{(i)}_{\tau}(s).
\end{align*}
Given this learning formalism, it is straightforward to extend the results of Theorem \ref{theo:lb} to this setting. More specifically, Theorem \ref{theo:lb-opt-exp} can be used to lower bound $\mathbb{E}\left[ \sum_{s} N^{(i)}_{\tau}(s) \right]$ for each sub-optimal expert, thus showing a significant increase in the sample complexity (i.e., linear in the number of sub-optimal experts).

\section{Further details on US-IRL-SE}\label{app:details-us-irl-se}

In this section, we provide further details on US-IRL-SE. More precisely, we specify the exact expression of the parameter $m$ that is used to provide $(\epsilon, \delta)$-correc

\paragraph{Expression of $m$} From the proof of Theorem \ref{theo:ub}, it is possible to derive an exact expression of $m$ that can be used in US-IRL-SE. More specifically, since $t = N_t(s,a)$,\footnote{This is a direct consequence of the uniform sampling strategy.} it is sufficient to take $m$ as the minimum $t$ that satisfies Equations \eqref{eq:t-cond-1}-\eqref{eq:t-cond-2} and Equations \eqref{eq:time-pi}-\eqref{eq:time-A}.


\paragraph{Computational Complexity of US-IRL-SE} At each iteration, US-IRL-SE will query the generative model in all state-action pair $(s,a) \in \mathcal{S} \times \mathcal{A}$. Notice that the generative model samples (i) the environment (ii) all the expert's policy. Assuming a unitary cost for generating each sample, a single query to the generative model results in a computational complexity of $\mathcal{O}\left( n \right)$. Therefore, the total computational complexity of US-IRL-SE is given $\mathcal{O} \left( SAmn \right)$.

\end{document}